\documentclass[11pt]{article}

\usepackage[a4paper,margin=1in]{geometry}
\usepackage[T1]{fontenc}
\usepackage{lmodern}
\usepackage{amsmath,amssymb,amsthm}
\usepackage{mathtools}
\usepackage{graphicx}
\usepackage{array}
\usepackage{booktabs}
\usepackage{float}
\usepackage{authblk}
\usepackage[dvipsnames]{xcolor}
\usepackage[most]{tcolorbox}
\usepackage{pifont}
\usepackage{microtype}
\usepackage{enumitem}
\usepackage[numbers,sort&compress]{natbib}
\usepackage{caption}
\usepackage{abstract}
\usepackage[colorlinks=true,linkcolor=NavyBlue,citecolor=ForestGreen,urlcolor=NavyBlue]{hyperref}

\newtheorem{theorem}{Theorem}
\newtheorem{proposition}{Proposition}
\newtheorem{lemma}{Lemma}
\newtheorem{corollary}{Corollary}
\theoremstyle{definition}
\newtheorem{definition}{Definition}
\newtheorem{assumption}{Assumption}
\newtheorem{postulate}{Postulate}
\theoremstyle{remark}
\newtheorem{remark}{Remark}

\newtcolorbox{hypobox}{colback=NavyBlue!4,colframe=NavyBlue!55,boxrule=1pt,arc=3pt,
  left=8pt,right=8pt,top=6pt,bottom=6pt}

\newcommand{\Tch}{\mathcal{T}}
\newcommand{\Sch}{\mathcal{S}}
\newcommand{\Hch}{\mathcal{H}}
\newcommand{\Cinf}{\mathcal{C}_\infty}
\newcommand{\Capb}{\mathrm{Cap}}
\newcommand{\Solv}{\mathrm{Solv}}
\newcommand{\Rgm}{\mathcal{R}}

\title{\bfseries The Capability Convergence Hypothesis:\\
Capability from Access Structure, Not Scale}

\author[1]{Wenhui Chen\thanks{\texttt{mc35092@um.edu.mo}}}
\author[2]{Jianlin Chen\thanks{Joint second authors, equal contribution.\ \texttt{202330450231@mail.scut.edu.cn}}}
\author[1]{Ziyao Lin\thanks{Joint second authors, equal contribution.\ \texttt{mc35081@um.edu.mo}}}
\author[1]{Chi Man Vong\thanks{Corresponding author.\ \texttt{cmvong@um.edu.mo}}}
\affil[1]{University of Macau}
\affil[2]{South China University of Technology}
\date{}

\begin{document}
\maketitle

\begin{abstract}
\noindent
The Platonic Representation Hypothesis (PRH) holds that as models scale, the internal
representations of heterogeneous networks converge toward a shared statistical model of
reality $Z$. We propose its sequel and boundary, the \textbf{Capability Convergence
Hypothesis (CCH)}: under a fixed per-token inference budget, representational convergence
does not entail capability convergence. Capability instead converges toward a \emph{class},
the \textbf{access-complete hybrid}: any form holding both a compressive state channel and
a verbatim index channel. The argument is anchored by a thought experiment playing, for
CCH, the role Plato's cave plays for PRH: the \emph{Newton's-apple problem in an infinite
stream}. Three walls name its three resource deficits: an information-theoretic
\emph{Shannon wall} barring any $o(Nb)$-state architecture, a \emph{horizon wall} barring
any fixed window, and a \emph{circuit wall} barring fixed-depth attention-only composition
(conditional on $\mathrm{TC}^0 \neq \mathrm{NC}^1$). A hybrid holding both kinds of access
crosses all three by paying each wall's price rather than evading its bound: under an
explicit separability assumption it carries the $\Omega(Nb)$ index the capacity floor
demands, defers the horizon by the coefficient $1/\rho$ (not eliminating it), and supplies
the serial composition a fixed-depth stack lacks. Capability is thereby strictly
super-additive under composition, in the set sense that the solvable-task family strictly
expands: $\Cinf(\Tch \sqcup \Sch) \supsetneq \Cinf(\Tch) \cup \Cinf(\Sch)$. The framework
rests on one object, the \emph{query-time state channel}: attention approximates
high-fidelity indexed access at cost $\Theta(L)$, state-space models realize compressed
mixing at cost $O(1)$, and the task-matched operating point lies at their union. We
separate what we prove from what we conjecture: the access-completeness principle is
established by information-theoretic lower bounds and \emph{pre-registered} experiments,
while the field-level convergence trend is an economics-motivated conjecture we mark as
such (not a measured phenomenon in PRH's sense). We give its information-theoretic skeleton
and falsification conditions, document four industrial regularities consistent with it, and
report the first pre-registered small-scale tests under criteria frozen before the data
existed. The predicted scissors gap is measured
(exact-retrieval error $0.994$ vs.\ $0.000$ once the same 64-scalar state gains one
global-attention layer, an intervention that pays the index capacity the floor demands, not
an equal-budget comparison); the state-tracking bifurcation lands at the pre-registered
boundary for the tested configuration; alignment rises with scale while capability
stratifies by access structure; and an end-to-end conjunction witness shows a
training-reliability separation, the hybrid passing the registered cell on every seed while
no other arm meets the criterion, with ablations showing the learned solution irreducibly
two-channel. One pure-recurrence seed also solves that cell, so the measured separation is
one of training reliability, not existence: the existence-level content of super-additivity
rests on the two single-axis walls plus the handcrafted construction for their conjunction,
and its end-to-end trained form remains open at loads beyond the pure family's effective
capacity. One prediction (channel commensurability) failed with its direction reversed; we
report the failure and partially re-establish a revised operationalization (channel
complementarity). Representational convergence is given freely by scale; capability
convergence must be purchased by access structure.
\end{abstract}

\vspace{0.5em}
{\small\noindent\textbf{Keywords:} capability convergence; Platonic representation hypothesis;
hybrid architectures; state-space models; information-theoretic lower bounds; inference-time scaling.}

\vspace{1em}

\begin{figure}[t]
  \centering
  \includegraphics[width=\linewidth]{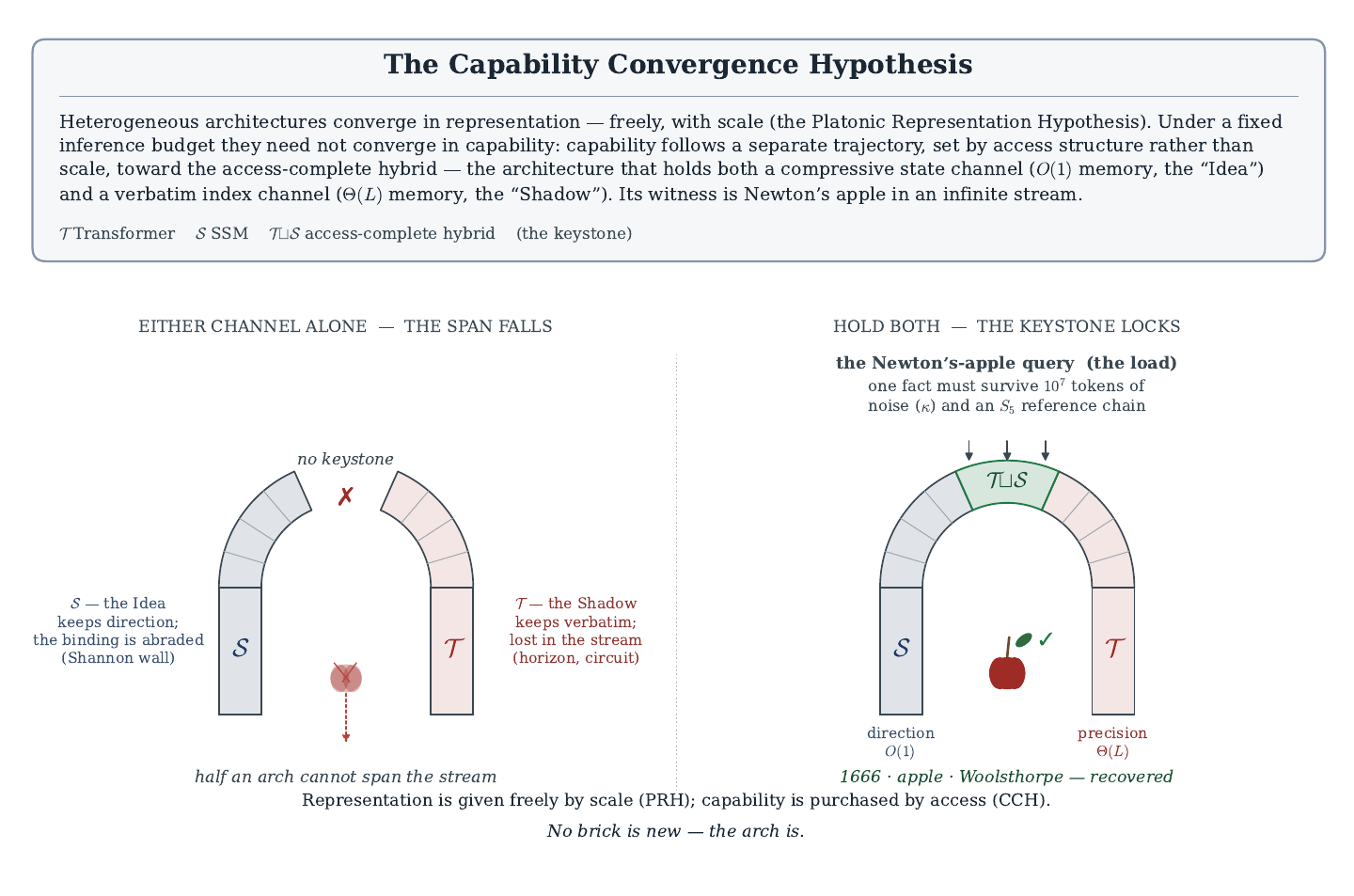}
  \caption{\textbf{Overview of the Capability Convergence Hypothesis}, in the spirit of
  PRH~\citep{huh2024platonic}. One metaphor, the arch: a compressive state channel $\Sch$
  (the ``Idea,'' $O(1)$ memory) and a verbatim index channel $\Tch$ (the ``Shadow,''
  $\Theta(L)$ memory). \emph{Left}: either channel alone is a half-arch, barred by its wall
  (Shannon, horizon, or circuit; the first two unconditional, the third conditional on
  $\mathrm{TC}^0\neq\mathrm{NC}^1$); the Newton's-apple query falls through. \emph{Right}:
  the access-complete hybrid $\Tch\sqcup\Sch$ is the keystone that locks the span, paying
  the $\Theta(\rho L)$ index the capacity floor demands under the separability assumption of
  Section~\ref{sec:cross}. Representational convergence is given freely by scale (PRH);
  capability convergence must be purchased by access structure (CCH).}
  \label{fig:overview}
\end{figure}

\section{Introduction: Two Convergences}
\label{sec:intro}

\subsection{The first convergence}
The Platonic Representation Hypothesis~\citep{huh2024platonic} opens with an elegant
observation: models trained on different data, objectives, and architectures are becoming
increasingly alike in representational geometry, agreeing more and more on which datapoints
are neighbors. PRH explains this as convergence toward a statistical projection of reality:
an ideal $Z$, the world's generative structure, that every sufficiently strong learner
approximates in its own way. Scale is the driver, diversity the fuel, convergence the
destination~\citep{kaplan2020scaling,hoffmann2022chinchilla}. Subsequent evidence has
strengthened the picture: cross-modal embedding translation with zero paired
data~\citep{jha2025vec2vec}, and the Structured State-Space Duality
result~\citep{dao2024ssd}, which places state-space recurrence and linear attention in one
matrix-mixer family (the softmax case connected by duality rather than identity).
Representational convergence appears to be a robust empirical regularity.

\subsection{A crack}
A recent result, however, drew PRH's boundary. \citet{convergence2026understanding}
found that models can align in representation (CKA $0.83$--$0.90$) yet diverge in
inferential computation, most sharply at the moment of computation: representations that
agree before a model commits to an answer (pre-decision CKA $0.875$) diverge once it does
(post-decision $0.274$), and agreement is \emph{higher} on problems the models collectively
fail ($0.897$) than on those they solve ($0.830$), a ``difficulty inversion'' shared
representation cannot explain. This crack is our point of departure: having a good
representation of $Z$ and being able to compute with it under a fixed budget are different
things. The former is PRH's territory, given freely by scale; the latter, which we call
\emph{capability}, is not free. A model may ``know'' that \emph{Newton}, \emph{apple}, and
\emph{1666} lie close together in representation space yet be unable to retrieve the
specific token ten million tokens later, continuous with the long-context utilization gap
visible in shorter settings~\citep{liu2023lostmiddle}. What is missing is not
representation but \emph{access}. A second 2026 re-examination reaches a kindred boundary
from the measurement side: at millions of retrieval candidates PRH's cross-modal alignment
itself degrades substantially~\citep{koepke2026cave}. Neither result touches the
within-modality alignment-grows-with-scale regularity (re-confirmed in
Section~\ref{sec:expA}); both say the convergence scale buys is thinner than the strong
reading of PRH assumed.

\subsection{The second convergence}
If PRH answered ``where do representations go,'' CCH attempts ``where does capability go.''
Our central claim:

\begin{hypobox}
\textbf{The Capability Convergence Hypothesis (CCH).} Under a fixed per-token inference
budget, capability does not converge with representation automatically; it converges, along a
trajectory dictated by access structure, toward a common attractor. That attractor is a
\emph{class} rather than a unique architectural endpoint: the \textbf{access-complete}
architecture, meaning any form that holds both (i) a compressive state channel whose per-token
update is $O(1)$ in the stream length $L$ and (ii) a scalable verbatim-index channel. On a
witness family, under the stated regime, this class strictly separates from both pure
families, fixed-budget compressive-state stacks (SSMs) and fixed-depth attention-only stacks:
its capability closure strictly contains the union of theirs
(Proposition~\ref{prop:nonpreserve}).
\end{hypobox}

\noindent Two levels of the claim are distinct. \emph{Architectural CCH}, this paper's
focus, concerns the model-internal case: hybrids pairing a recurrent state with global
attention. \emph{Systems CCH} applies the principle one level up: an agent stack whose tool
calls or external store supply the index channel is access-complete as a system
(Definition~\ref{def:profile}, AV2). Retrieval-augmented systems are thus an instance of
the principle, not a counterexample; what CCH rules out is capability without any scalable
index at any level. To keep the systems-level claim falsifiable, the channel definitions
are fixed ex ante: a state channel is a persistent cross-token carrier with $O(1)$
per-token update; an index channel is an addressable store growing with the span it serves,
read query-dependently; chain-of-thought is neither and is classified as leaving the regime
(trading time for space). This paper's claims and experiments are confined to Architectural
CCH; Systems CCH is a discussion-level extension (AV2), not a tested claim. The hypothesis
also has two components of different strength. The \emph{access-completeness principle}:
tasks conjoining high-entropy retrieval with serial composition require both channels at
some level; the falsifiable content lives here (P1--P3). The \emph{convergence conjecture}:
inference economics plus the principle pull the field toward access-complete designs; its
support is consistency, not demonstration (Section~\ref{sec:regularities}).

\noindent The claim mirrors PRH: many architectures converge to one representation there,
many forms to one access-completeness here; PRH's convergence is driven by scale, CCH's by
the economics of inference. One disanalogy should not be over-read: PRH's convergence is a
measured phenomenon, whereas CCH's names a field-level engineering trend plus a conjectured
attractor, not the same kind of convergence. Figure~\ref{fig:overview} previews the
argument.

\paragraph{What we do and do not claim.} We do not claim to have invented hybrid
Transformer--SSM architectures, the recall limits of SSMs, or the fixed-depth limits of
attention; each is prior work (Appendix~\ref{sec:related}), and the composite
recall-plus-state-tracking primitive our witness instantiates is due
to~\citet{reasoning2026primitives}; our addition is its capacity-and-complexity framing and
pre-registered measurement. Our contribution is the unifying hypothesis that under
fixed-budget inference capability is governed by query-time \emph{access structure} rather
than representational alignment. Concretely, we (i) formulate an access-completeness
principle through a single channel-capacity object, (ii) give a witness task separating
pure from hybrid families under an explicit regime, and (iii) connect the separation to the
industrial movement toward global-attention hybrids. We do not claim that hybrids dominate
on every task, that the result survives outside the regime (chain-of-thought, tools, and
depth-growing models are explicit escape routes; Section~\ref{sec:limitations}), or that
natural corpora reliably supply the separability the witness requires
(Assumption~\ref{ass:sep}): the witness shows access structure \emph{can} be decisive, not
that it is decisive on any particular workload.

\subsection{A thought experiment first; then its first measurements}
We state and test a hypothesis, anchored on a thought experiment, following PRH's methodology: some
propositions deserve stating clearly before they can be confirmed at scale. Full
verification would require repeated training at the $10^{10}$-parameter, $10^7$-context
scale; insisting on that first would reduce the inquiry to an engineering report and forgo
the physics-style mode in which boundaries are deduced before measured. We anchor on the
\emph{Newton's-apple problem in an infinite stream} (Section~\ref{sec:newton};
Figure~\ref{fig:concept}, parallel to Figure~1 of~\citealp{huh2024platonic}) and add what a
study can responsibly add short of frontier scale: Section~\ref{sec:experiments}
reports controlled small-scale tests of
Predictions~\ref{pred:floor}--\ref{pred:comm} plus a dissociation experiment, under frozen
criteria, with every miss (including one outright failure) reported.

\begin{figure}[t]
  \centering
  \includegraphics[width=0.62\linewidth]{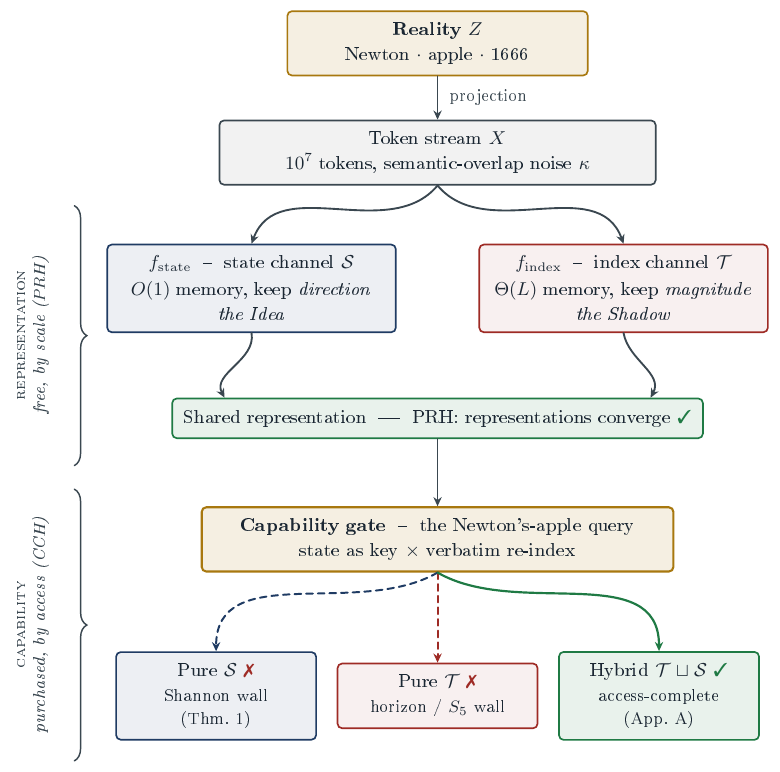}
  \caption{The CCH concept diagram, mirroring Figure~1 of~\citet{huh2024platonic}. Reality
  $Z$ projects into a stream $X$; two channels read it, a state channel $f_{\mathrm{state}}$
  (the ``Idea,'' $O(1)$ memory) and an index channel $f_{\mathrm{index}}$ (the ``Shadow,''
  $\Theta(L)$). At the representational level the two converge (PRH); at the capability level
  only the access-complete hybrid passes the Newton's-apple gate (conditional on
  Assumption~\ref{ass:sep}; the horizon deferred, not eliminated).}
  \label{fig:concept}
\end{figure}

\paragraph{Why our evidence takes a different form from PRH's.} PRH's claim is
observational (representations are converging), so measuring released checkpoints suffices,
and its lineage has stayed in that evidential
mode~\citep{jha2025vec2vec,convergence2026understanding,koepke2026cave}. CCH's central
claim is modal and causal: a pure family \emph{cannot} solve the witness at any scale under
$\Rgm$, and the composition can. No measurement of released models can discriminate such a
claim, because released models confound architecture with data, recipe, and scale (the
identification limit flagged for R1--R4); a causal claim needs lower bounds and controlled
intervention. The portfolio therefore has three layers, each doing work the previous
cannot: observational (the 140-model census; the public-checkpoint measurements of
Section~\ref{sec:expA}, using PRH's own metric), formal (the walls of
Section~\ref{sec:newton}), and interventional (the pre-registered trained witnesses of
Section~\ref{sec:experiments}).

\paragraph{Claims and their evidential status.} Table~\ref{tab:claims} states each claim
layer once, with status and evidence; every scope qualifier in the paper instantiates one
of its rows. The falsifiable core is the access-completeness principle (rows 1--7); the
convergence component (row 9) is a motivating conjecture, and we ask that the paper be
judged on the former.

\begin{table}[t]
\centering
\small
\caption{The paper's claims, stratified by evidential status. ``Consistent-with'' means the
evidence does not discriminate the claim from rival explanations (notably pure serving
economics).}
\label{tab:claims}
\begin{tabular}{@{}p{0.44\linewidth}p{0.22\linewidth}p{0.26\linewidth}@{}}
\toprule
\textbf{Claim} & \textbf{Status} & \textbf{Evidence} \\
\midrule
1. Capacity floor for high-entropy independent bindings & formal, unconditional & Theorem~\ref{thm:shannon} \\
2. Truncation of any fixed window & formal, unconditional & Prop.~\ref{prop:boundaries} \\
3. Fixed-depth attention-only composition limit & formal, conditional on $\mathrm{TC}^0{\neq}\mathrm{NC}^1$ & Prop.~\ref{prop:circuit} \\
4. Hybrid solves the witness & existence construction, conditional on Assumption~\ref{ass:sep} & Appendix~\ref{app:construction} \\
5. Single-axis separations learnable by gradient descent & measured, small scale, pre-registered & Exp.~B, C \\
6. End-to-end conjunction separation & training-reliability only; existence-level open & Exp.~E \\
7. Channel complementarity (P3$'$) & partial (joint necessity, localization) & Exp.~D-ii \\
8. Alignment does not predict capability & observational dissociation & Exp.~A \\
9. Industry converges to the access-complete class & conjecture; census is consistent-with only & R1--R4, P4--P5 \\
10. Natural corpora satisfy write-time separability & scoped at surface-form level (mixed); embedding-level and incidence open & App.~\ref{sec:collnat}, Limitation~(iii) \\
\bottomrule
\end{tabular}
\end{table}

\paragraph{Structure.} The paper follows PRH's arc: \emph{why} representation does not buy
capability (Section~\ref{sec:operators}: one channel-capacity axis under Shannon's
bound~\citep{shannon1948mathematical,cover2006elements}, the witness, the three walls);
\emph{what} capability converges to (Section~\ref{sec:closure}); the evidence
(Sections~\ref{sec:regularities}--\ref{sec:experiments}); then a discussion of alternative
explanations (Section~\ref{sec:alternatives}) and predictions, limitations, and implications
for measurement and reporting (Section~\ref{sec:predictions}). Demarcation, proofs, the construction, the census, and
full experimental details are in the appendices.

\section{Why Representation Does Not Buy Capability: One Channel, Three Walls}
\label{sec:operators}

This section develops a single thesis: Transformer and SSM are best compared as two
history-to-query channels differing in fidelity, attention approximating high-fidelity
indexed access and an SSM realizing structured, compressed mixing. The comparison builds on
the matrix-mixer unification of~\citet{dao2024ssd}, used as a shared coordinate rather than
to erase the dynamical differences between the families, and it interfaces directly with
Shannon's channel theory.

\subsection{Setup and notation}
A causal sequence model on a token stream $X_{1:t}$ induces at each query time $t$ an
\emph{accessible state} $\Sigma_t$ from which $\hat{y}_t$ is computed: the recurrent hidden
state $h_t$ for an SSM~\citep{gu2023mamba}, the servable KV cache for a
Transformer~\citep{vaswani2017attention,kwon2023pagedattention}, i.e.\ everything the model
may consult at query time. Write $B := \Hch(\Sigma_t)$ for its \emph{capacity} in bits.

\subsection{The state-channel postulate}
CCH rests on one observation, a direct application of the data-processing
inequality~\citep{cover2006elements}.

\begin{postulate}[State-channel bound]
\label{post:channel}
Fix parameters $\theta$ and query token $q_t$. The accessible state $\Sigma_t$ is the
unique data-bearing channel from history to prediction: conditioned on $(q_t,\theta)$ the
readout reads $X_{1:t}$ only through $\Sigma_t$, so
$X_{1:t} \rightarrow \Sigma_t \rightarrow \hat{y}_t$, and by data processing
\begin{equation}
I(X_{1:t};\,\hat{y}_t \mid q_t,\theta) \;\le\; I(X_{1:t};\,\Sigma_t \mid q_t,\theta)
\;\le\; \Hch(\Sigma_t) \;\le\; B.
\label{eq:dpi}
\end{equation}
($\theta$ carries no information about $X_{1:t}$; $q_t$ is explicit because the readout,
including any top-$k$ selection, may depend on it, Lemma~\ref{lem:topk}.) Representation
quality may be unbounded; the bits through this channel cannot exceed $B$.
\end{postulate}

This inequality is the watershed between PRH and CCH: PRH concerns representation
\emph{quality}, unbounded with scale~\citep{huh2024platonic}; CCH concerns channel
\emph{capacity}, fixed by access structure. A perfect representation of the Newton--apple
binding that no longer fits in $\Sigma_t$ after ten million tokens never reaches the
output. Figure~\ref{fig:shannon} places both operators on this channel.

\begin{figure}[t]
  \centering
  \includegraphics[width=0.86\linewidth]{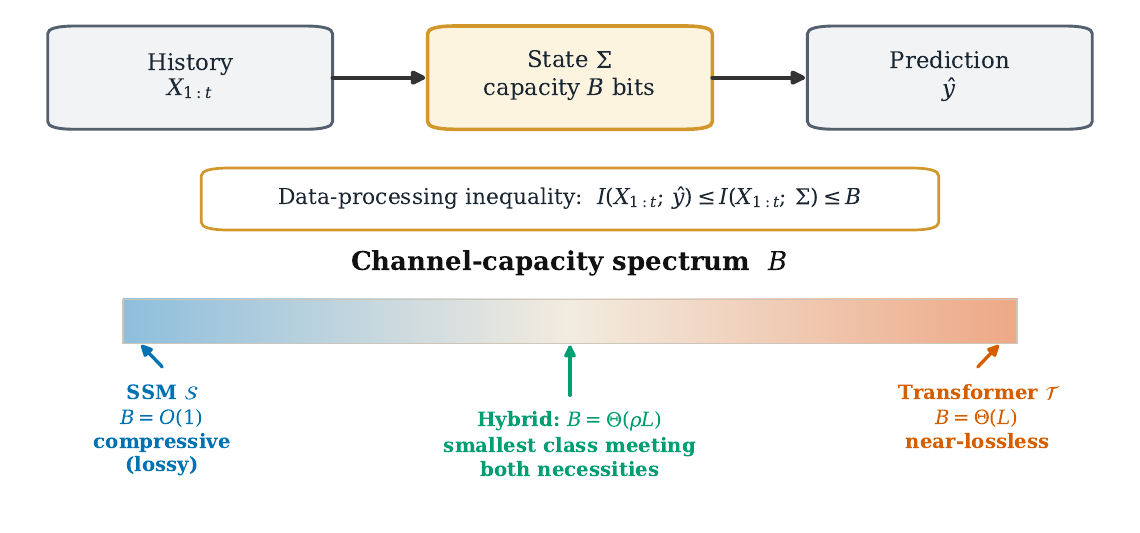}
  \caption{Transformer and SSM as two ends of one channel-capacity spectrum
  (Postulate~\ref{post:channel}): the SSM at the compressive end ($B=O(1)$, lossy), the
  Transformer at the near-lossless end ($B=\Theta(L)$), and the hybrid operating point
  $B=\Theta(\rho L)$ meeting both task necessities at a reduced servable-memory coefficient
  ($\rho\ll1$), not a smaller growth class (Proposition~\ref{prop:minimal}).}
  \label{fig:shannon}
\end{figure}

\subsection{Making $B$ precise: finite-precision regularization}
For a Transformer, $B = \Theta(L)$: the servable cache grows linearly. For an SSM,
$\Sigma_t$ is continuous and ``capacity in bits'' requires care; we regularize via finite
precision, matching both theory and hardware.

\begin{definition}[Effective state capacity]
\label{def:capacity}
For $\Sigma_t \in \mathbb{R}^{d_{\mathrm{state}}}$ at $p$ bits per coordinate (idealized
$p = O(\log L)$; in practice bf16/fp32), the capacity is the upper bound
$B := d_{\mathrm{state}}\,p \ge \Hch(\Sigma_t)$ (an inequality: reachable states need not
fill the code space; equivalently, the metric entropy of the reachable set at resolution
$2^{-p}$~\citep{cover2006elements}).
\end{definition}

\begin{remark}
Were $\Sigma_t$ the full history (the Transformer limit, $B=\Theta(L)$),~\eqref{eq:dpi}
would be a tautology; non-triviality comes from the resource constraint, which is why CCH is
a statement about budgeted inference, not expressivity in the abstract.
\end{remark}

\begin{definition}[Inference resource profile]
\label{def:profile}
``Fixed per-token inference budget'' holds an entire \emph{resource profile} fixed, not a
scalar, because asymptotic order is too coarse to separate the architectures of interest:
\[
\mathcal{P} \;=\; \big(\,c_{\mathrm{upd}},\; \mu,\; \beta_q,\; p,\; D,\; \tau_{\mathrm{cot}}\,\big),
\]
per-token update compute; the \emph{servable-memory coefficient} $\mu$ (query-time bytes
per token, the constant in front of the memory-growth class); read bandwidth $\beta_q$;
precision $p$; depth $D$ (fixed in $L$); chain-of-thought budget $\tau_{\mathrm{cot}}$
(zero in $\Rgm$). Two architectures in the same memory-growth class can differ by an order
of magnitude in $\mu$, $\beta_q$, and serving cost (for constant $\rho$ the hybrid's
$O(\rho L)$ and full attention's $O(L)$ are the same asymptotic class), so CCH's economic
claims live in these coefficient/bandwidth coordinates, not big-$O$ order. The regime pins
memory to a growth class with fixed coefficient, update compute to $O(1)$, and depth to a
constant, while an index channel's query-time \emph{read} necessarily scales as
$\Theta(\rho L)$ (demanding $O(1)$ read would exclude every access-complete system by
fiat). The accurate name for $\Rgm$ is thus the \emph{fixed-depth, no-CoT,
fixed-memory-coefficient regime}.
\end{definition}

\subsection{The two operators located on the spectrum}
\paragraph{Transformer $\Tch$: near-lossless attention.}
Self-attention holds the entire history's keys and values at query time,
$Z^{\Tch} = \mathrm{Softmax}(QK^\top/\sqrt{d})\,V$: a near-lossless channel in the
addressability sense (per-position addressable state, not literal losslessness, since keys
and values are projected and quantized), with servable capacity $B_{\Tch} = \Theta(L)$. Its
capability signature is exact retrieval~\citep{arora2023zoology}; its price is the
$\Theta(L)$ memory growth that becomes the serving
wall~\citep{dao2022flashattention,kwon2023pagedattention}.

\paragraph{SSM $\Sch$: compressive attention.}
A state-space model compresses history into a fixed-size state, $h_t = A_t h_{t-1} + B_t
x_t$, $y_t = C_t h_t$~\citep{gu2023mamba}, the selective successor to structured state
spaces~\citep{gu2022s4} and linear attention~\citep{katharopoulos2020linear}; Structured
State-Space Duality~\citep{dao2024ssd} places it in the same matrix-mixer family as
attention, with the mixing matrix constrained to low-rank structure (gating and
state-update dynamics remain distinct). $\Sch$ is a lossy-compressive channel with fixed
accessible capacity $B_{\Sch} = \Theta(d_{\mathrm{state}}) = O(1)$ in $L$. Its signature,
in heuristic terms, is ``keep the direction, drop the magnitude'': a coarse summary of
global structure survives while the detail binding a specific symbol to a specific value
does not. The shorthand is informal; the rigorous content is $B_{\Sch}=O(1)$, whose
intuitive face is the empirically measured recall
gap~\citep{jelassi2024repeat,arora2023zoology,jelassi2025recency}.

\subsection{The unified statement: capability lives at the union}
CCH's central intuition is that real long-range capability needs \emph{both ends at once}:
frugal maintenance of long-range direction (the $\Sch$ end) \emph{and} exact retrieval of a
key binding (the $\Tch$ end). No single operator sits at the union. We make the boundary
behavior precise.

\begin{proposition}[Convergence boundaries under infinite corpus]
\label{prop:boundaries}
Under an unbounded corpus and the regime of Definition~\ref{def:profile}:
\textup{(i)}~a near-lossless Transformer must truncate to a window $L_{\max}$, a cliff
(near-perfect inside, blind outside); \textup{(ii)}~a lossy-compressive SSM must compress
ever more into a fixed state, a slope (asymptotically faithful in global direction,
corrupted in local binding).
\end{proposition}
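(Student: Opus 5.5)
Both parts reduce to Postulate~\ref{post:channel} combined with the resource pinning of Definition~\ref{def:profile}. The strategy is to fix the memory-growth class and coefficient $\mu$ and the depth $D$, and to set $\tau_{\mathrm{cot}}=0$. For each architecture we then read off the capacity $B$ of the accessible state $\Sigma_t$ at a query time $t$ in an unbounded stream, and apply \eqref{eq:dpi} to a stream whose relevant content lies either far in the past or is spread over $\Theta(t)$ tokens.

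\emph{Part (i).} Under an unbounded corpus, $t\to\infty$. A near-lossless Transformer's servable cache has $B_{\Tch}=\mu t$ bytes, so holding it to a fixed serving allocation $M$ forces $\Sigma_t$ to contain only the keys and values of the last $L_{\max}=M/\mu$ positions. Two claims then follow.
\begin{enumerate}[label=(\alph*)]
\item \emph{Outside the window.} Consider $X_{1:t}$ with a binding token drawn uniformly and independently at position $s<t-L_{\max}$. Given $(q_t,\theta)$, $\Sigma_t$ is a function of $X_{t-L_{\max}+1:t}$, which is independent of $X_s$. Hence $I(X_s;\hat y_t\mid q_t,\theta)=0$, and any query whose answer is $X_s$ is answered at chance. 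This is the blind side of the cliff.
\item \emph{Inside the window.} For $s\ge t-L_{\max}$, the per-position addressability of softmax attention admits a readout whose logit separation grows with $\sqrt d$. Such a readout retrieves $X_s$ with error vanishing in the key margin, which gives the near-perfect side.
\end{enumerate}
The discontinuity at $t-s=L_{\max}$ is the cliff.

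\emph{Part (ii).} For the SSM, Definition~\ref{def:capacity} gives $B_{\Sch}=d_{\mathrm{state}}\,p=O(1)$ in $t$, which bounds fidelity to local bindings. Take a stream carrying $N(t)=\Theta(t)$ independent bindings of $b$ bits each, and a query asking for a uniformly random one. By \eqref{eq:dpi} and Fano, the average per-binding mutual information is at most $B/N(t)\to 0$, so retrieval error tends to $1-2^{-b}$. This is the corruption of local binding, and it is a slope: error grows smoothly with the load $N$ once $Nb$ exceeds $B$, rather than jumping at a position. The ``faithful in global direction'' half needs a positive statement. For a low-dimensional statistic of the stream, e.g.\ a running mean or a class frequency in $\mathbb{R}^k$ with $k\le d_{\mathrm{state}}$, the linear recurrence $h_t=A_th_{t-1}+B_tx_t$ with $A_t=(1-1/t)I$, $B_t=1/t$ computes it exactly up to precision $2^{-p}$. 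By the law of large numbers the estimate converges to the true global parameter of $Z$.

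The main obstacle is making ``must'' rigorous for (i): I have to show that no fixed-coefficient attention model can evade truncation by being clever with the cache. My first approach is to note that any non-window cache of bounded size $M$ is no longer a near-lossless channel. Such a cache is itself a fixed-capacity compressor, so it falls under case (ii) rather than (i), and the dichotomy is exhaustive: either one keeps verbatim per-position entries, which means a window and hence a cliff, or one compresses, which means a slope. A secondary subtlety in (ii) is that ``corrupted'' should be stated on average over bindings, since a fixed state can still store $O(B/b)$ chosen bindings exactly. I will state it as a bound on the fraction of recoverable bindings, at most $B/(Nb)$ up to Fano slack.
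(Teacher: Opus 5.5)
The paper gives no standalone proof of this proposition. It rests on the horizon-wall remark (a window gives $B_{\Tch}=\Theta(L_{\max})=o(Nb)$, so Theorem~\ref{thm:shannon} applies, with Lemma~\ref{lem:topk} closing off top-$k$ reading) and on $B_{\Sch}=O(1)$ under the same theorem. It also explicitly labels the ``global direction'' half informal.

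\textbf{Part (ii), negative half.} This is the paper's argument, with two corrections. The bound $B/N$ needs the independence (super-additivity) step of Theorem~\ref{thm:shannon}, not data processing alone. And error $\to 1-2^{-b}$ needs the full Fano inequality $h(P_e)+P_e\log_2(2^b-1)\ge\Hch(V_I\mid\Sigma,I)$, with $h$ the binary entropy; the weakened form stated in the theorem only yields $1-1/b$.

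\textbf{Part (i)(a).} This is a genuinely different, pointwise route: zero information about each out-of-window binding. That is what ``cliff'' asserts, and an average floor cannot distinguish it from a slope. However, the key step fails as written. With the usual rolling KV buffer in a $D$-layer stack, the layer-$\ell$ keys and values of an in-window position were computed, when that position arrived, from layer-$(\ell-1)$ outputs that had attended over windows ending there. So $\Sigma_t$ depends on roughly the last $D\,L_{\max}$ tokens, not the last $L_{\max}$. Tokens at distance in $(L_{\max},D\,L_{\max}]$ still reach the output through a bounded, lossy channel. You should either define truncation as re-encoding only the last $L_{\max}$ tokens, or put the cliff at the receptive-field edge (finite, because $\Rgm$ pins $D$).

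The route also covers only data-independent retention. Under content-dependent eviction (sink or heavy-hitter caches), which positions survive is itself a function of the data, and there you need the paper's route through Lemma~\ref{lem:topk} and Theorem~\ref{thm:shannon}. Finally, the cap $M$ is not supplied by Definition~\ref{def:profile}, which pins a coefficient and permits linear growth. It is an extra fixed-hardware premise, and under it every linear-index system truncates, the hybrid included.

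\textbf{Unstated hypotheses in the positive halves.} ``Near-perfect inside'' fails without a key margin. If a key is bound $c+1$ times in the window, no readout beats error $c/(c+1)$; this is exactly the collision control of Appendix~\ref{sec:expB}. The operative quantity is also not a $\sqrt d$ growth of logit gaps, since any gap can be produced by rescaling $W_Q,W_K$ up to precision. What matters is margin times inverse temperature against $\log L_{\max}$, as in the Hopfield bound of Appendix~\ref{app:construction} (A.4). That bound needs an Assumption~\ref{ass:sep}-type separation and $O(\log L_{\max})$ score precision.

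Your running-mean construction has three problems:
\begin{itemize}
\item It uses gates indexed by absolute position, which input-dependent selective gates do not provide without a positional signal.
\item At fixed precision the increment $(x_t-h_{t-1})/t$ falls below the rounding resolution once $t\gtrsim 2^p$, and the estimate freezes. Meanwhile per-step rounding errors obey $e_t\le(1-1/t)e_{t-1}+2^{-p}$ and sum to order $t\,2^{-p}$, rather than staying at $2^{-p}$. So consistency needs $p\gtrsim\log_2 t$, the idealized precision of Definition~\ref{def:capacity}.
\item The law of large numbers needs a stationarity or ergodicity hypothesis that an unbounded corpus does not supply.
\end{itemize}

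\textbf{The dichotomy.} Your exhaustive split (verbatim entries $\Rightarrow$ window $\Rightarrow$ cliff; compression $\Rightarrow$ slope) does not hold. A linear recurrence whose $A$ is a block shift matrix stores exactly the last $w$ inputs (for a state of $w$ token slots), which is a compressive state with a cliff. What every bounded state is forced into is only the average floor of Theorem~\ref{thm:shannon}, a smooth function of the load $x$. State the slope as that, as your final sentence already half-does, not as the failure geometry of every SSM.
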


The cliff and the slope are distinct failure geometries, but both are failures. Retrieving
one precise fact after $10^7$ tokens demands seeing beyond the window \emph{and} telling
bindings apart, which is why we locate the fixed point of capability at the composition of
the two.

\subsection{The witness: Newton's apple in an infinite stream}
\label{sec:newton}

PRH uses Plato's cave to picture representations converging toward the ideal. CCH anchors on
a capability scenario, one precise binding planted in an unbounded stream and queried $10^7$
tokens later, cast as a one-way communication game so that the \textsc{Index} lower
bound~\citep{kremer1999randomized} and Fano's inequality~\citep{cover2006elements} apply
directly.

\begin{definition}[Protocol $\mathrm{NA}(N,b,B;\kappa)$]
\label{def:na}
Nature plants $N$ key--value bindings $\{(r_i,v_i)\}_{i=1}^N$ into a stream, each carrying $b$
bits (the target is one such binding, e.g.\ \emph{``1666 $\cdot$ apple $\cdot$ Woolsthorpe''}).
It interleaves $\Theta(L)$ distractor tokens drawn with a semantic-overlap kernel $\kappa$ (the
worst case: distractors topically aligned with the target, maximizing interference with any
compressed state), and drives the referent's group action $r_t$ by an $S_5$ instruction stream.
At the end it issues a query for the target value. The model carries budget $B$ (its query-time
accessible state, per Definition~\ref{def:capacity}). By Yao's minimax
principle~\citep{yao1977probabilistic}, a lower bound against the hardest input distribution
implies the same bound for the best deterministic model on that distribution.
\end{definition}

\noindent Throughout, the ``infinite stream'' is a task-family limit: each length-$L$
instance is served by resources permitted to grow with that instance ($\Theta(\rho L)$ index
for an access-complete system). It is not a claim about processing an unbounded stream on
fixed absolute hardware, where every linear-index system, the hybrid included, eventually
truncates (Section~\ref{sec:cross}).

The scenario also fixes the philosophical stakes. An old intuition, formalized in the
compression view of
intelligence~\citep{mahoney1999text,legg2007universal,deletang2024compression}, holds that
intelligence is not storing more of the stream but finding its order; the walls sharpen
this into an impossibility statement, because a maximally entropic binding has no order to
find. Whatever
structure a compressive state distills, the incompressible residue is lost
(Theorem~\ref{thm:shannon}) unless it can be re-read verbatim. Capability at long context is
therefore a conjunction, compress what has structure and index what does not, and
access-completeness is that conjunction made architectural.

\subsection{Wall I: the Shannon wall (unconditional)}
Consider any finite-state compressive architecture (a pure SSM), with accessible capacity $B$
bits. We bound exact recall.

\begin{theorem}[Shannon wall]
\label{thm:shannon}
Let the query target be a uniformly chosen binding $V_I$ with $I \sim \mathrm{Unif}[N]$, each
binding $b$ bits and the $N$ bindings mutually independent. Let $\Sigma$ be the model's
\emph{pre-query persistent} accessible state, formed before the index $I$ is revealed; the
joint law is that the bindings are drawn, $\Sigma$ is formed from the stream, and then
$I \sim \mathrm{Unif}[N]$ is drawn independently of $(V_{1:N},\Sigma)$. Assume
$\Hch(\Sigma)\le B$; any external store the system
can read at query time counts in full toward $\Hch(\Sigma)$, and the answer is a function of
$(\Sigma, I)$ (query-dependent reads from $\Sigma$ add nothing, Lemma~\ref{lem:topk}). Then
the per-element accessible information obeys
\begin{equation}
I(V_I;\,\Sigma,I) \;\le\; \frac{B}{N},
\label{eq:perelem}
\end{equation}
and consequently the conditional entropy of the target is bounded below by
\begin{equation}
\Hch(V_I \mid \Sigma, I) \;\ge\; b\,\max\Big\{0,\;1 - \tfrac{1}{x}\Big\}, \qquad x := \frac{Nb}{B},
\label{eq:floor}
\end{equation}
Two operational floors follow, by distinct routes:
\begin{itemize}[leftmargin=1.4em,topsep=2pt,itemsep=1pt]
\item\emph{Log-loss floor (no Fano).} The Bayes-optimal expected log-loss given $(\Sigma,I)$
\emph{equals} $\Hch(V_I\mid\Sigma,I)$, so by Gibbs' inequality (cross-entropy $\ge$ conditional
entropy) every predictor obeys
$\mathbb{E}[-\log p(V_I\mid\Sigma,I)] \ge \Hch(V_I\mid\Sigma,I) \ge b(1-1/x)$.
\item\emph{Error-probability floor (Fano).} For exact recall, Fano's
inequality~\citep{cover2006elements} gives
$P_{\mathrm{err}} \ge \big(\Hch(V_I\mid\Sigma,I)-1\big)/b \ge \max\{0,\,(1-1/x) - 1/b\}$
(both floors are trivial below $x=1$, as they must be).
\end{itemize}
\end{theorem}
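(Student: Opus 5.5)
The argument is a chain-rule and averaging argument, followed by two standard conversions into operational floors.

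\emph{Step 1: removing the query index.} Since $I$ is drawn independently of $(V_{1:N},\Sigma)$, we have $I(V_I;\Sigma,I)=I(V_I;I)+I(V_I;\Sigma\mid I)$. The first term vanishes. Conditioned on $I=i$, $V_I=V_i$ has the marginal law of a single $b$-bit binding, which is uniform on $\{0,1\}^b$ under the maximal-entropy convention; the plan only needs $\Hch(V_i)=b$ for all $i$. For the same reason $(V_i,\Sigma)$ given $I=i$ has its unconditional joint law. Hence
\[
I(V_I;\Sigma,I)=\frac{1}{N}\sum_{i=1}^N I(V_i;\Sigma).
\]

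\emph{Step 2: the superadditivity bound.} For mutually independent $V_1,\dots,V_N$, the chain rule gives $I(V_{1:N};\Sigma)=\sum_i I(V_i;\Sigma\mid V_{1:i-1})$. Each term satisfies $I(V_i;\Sigma\mid V_{1:i-1})=I(V_i;\Sigma,V_{1:i-1})\ge I(V_i;\Sigma)$, where the equality uses independence, so $I(V_i;V_{1:i-1})=0$. Therefore
\[
\sum_i I(V_i;\Sigma)\le I(V_{1:N};\Sigma)\le \Hch(\Sigma)\le B,
\]
and dividing by $N$ yields \eqref{eq:perelem}.

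\emph{Step 3: the entropy floor.} We have $\Hch(V_I\mid\Sigma,I)=\Hch(V_I\mid I)-I(V_I;\Sigma\mid I)$ with $\Hch(V_I\mid I)=b$, so $\Hch(V_I\mid\Sigma,I)\ge b-B/N=b(1-1/x)$. Combining this with nonnegativity of entropy gives \eqref{eq:floor}. The clause that external stores count toward $\Hch(\Sigma)$ and that query-dependent reads add nothing is exactly what licenses treating the answer as a function of $(\Sigma,I)$; here I would cite Lemma~\ref{lem:topk} and Postulate~\ref{post:channel} without reproving them.

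\emph{Step 4: the two operational floors.} For any predictive distribution $p(\cdot\mid\Sigma,I)$, the expected log-loss decomposes as $\Hch(V_I\mid\Sigma,I)$ plus the expected KL divergence to the true posterior. The KL term is nonnegative by Gibbs, and equality holds at the Bayes posterior. For the error floor, let $\hat V=g(\Sigma,I)$. Fano gives $\Hch(V_I\mid\Sigma,I)\le \Hch(V_I\mid\hat V)\le 1+P_{\mathrm{err}}\log(2^b-1)\le 1+bP_{\mathrm{err}}$; the first inequality is data processing. Rearranging and inserting Step 3 gives the stated bound.

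\emph{Main obstacle.} I expect Step 2 to be the delicate point. The averaging bound needs both the exact independence of the bindings and the requirement that $\Sigma$ be formed before $I$ is revealed. If $\Sigma$ could depend on $I$, the model could store only $V_I$ and the per-element bound would fail. I would therefore state the Markov structure $(V_{1:N})\to\Sigma$ with $I\perp(V_{1:N},\Sigma)$ explicitly at the outset. I would also note that if the bindings are not uniform, $b$ should be read as $\Hch(V_i)$ throughout, while the Fano step still uses the alphabet size $2^b$.
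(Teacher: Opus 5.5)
Your proposal is correct and follows the paper's proof essentially step for step. You drop $I(V_I;I)$, average over the uniform index to get $\tfrac1N\sum_i I(V_i;\Sigma)$, bound that sum by super-additivity over independent sources together with $\Hch(\Sigma)\le B$, subtract from $\Hch(V_I\mid I)=b$, and finish with Gibbs and Fano. Your chain-rule derivation of super-additivity, and your justification of $I(V_I;I)=0$ through identical uniform marginals rather than independence alone (as the paper's sketch phrases it), just spell out what the paper cites or glosses.
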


Once the load ratio $x = Nb/B$ exceeds $1$, no finite-state model can avoid losing
$b(1-1/x)$ bits of the target, however good its representation. The bound holds for
independent bindings at the stated entropy; structured or compressible bindings evade the
premise, not the theorem (the compressibility axis of Appendix~\ref{sec:expB} measures
this). Only a blurred direction (``this is a Newton story'') survives, matching the
measured copying and recall gaps of
SSMs~\citep{jelassi2024repeat,arora2023zoology,jelassi2025recency}. The wall is
unconditional: information theory only. Figure~\ref{fig:fano} plots the
floor~\eqref{eq:floor}.

\begin{figure}[tp]
  \centering
  \includegraphics[width=\linewidth]{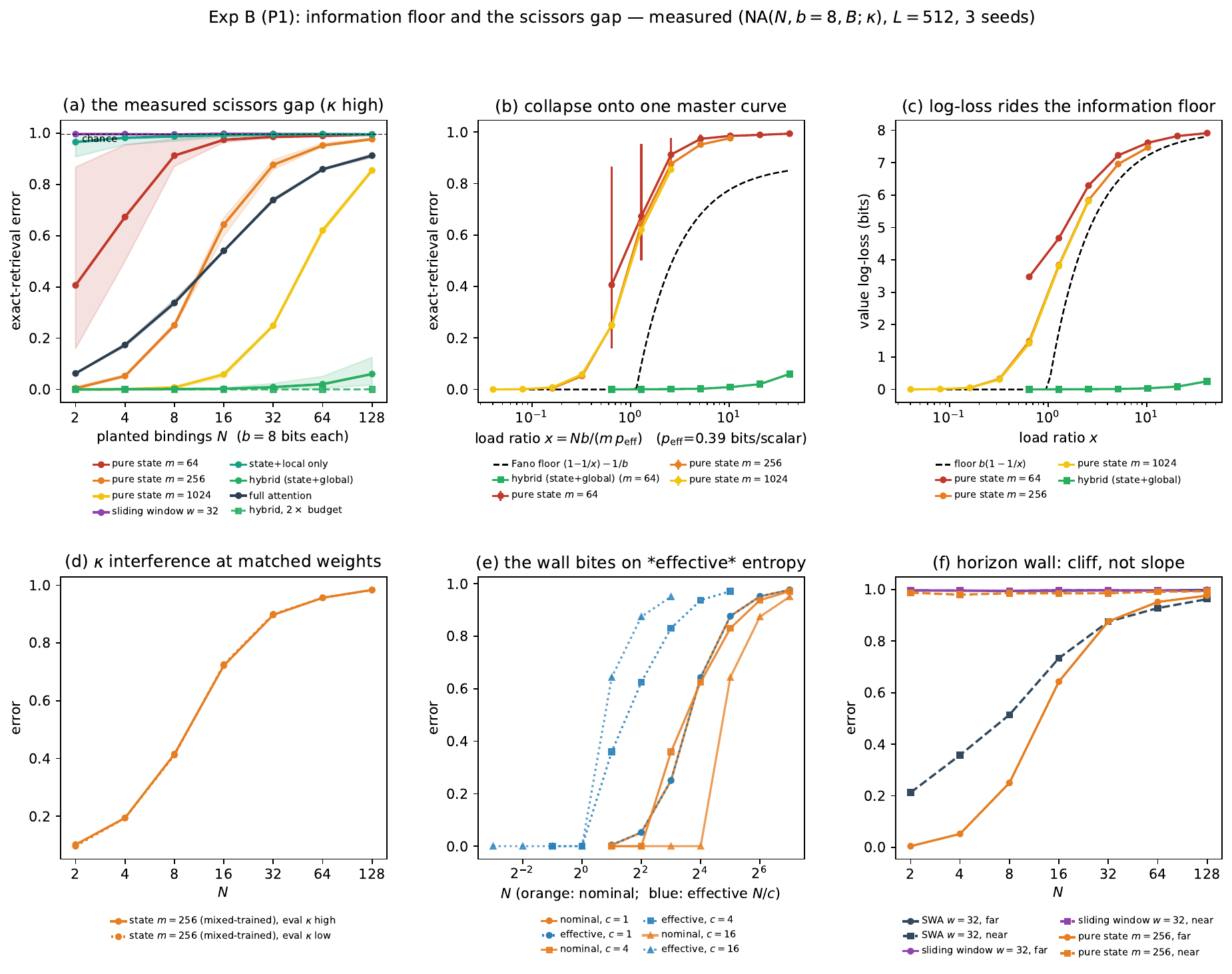}
  \caption{The information floor and the scissors gap, \emph{measured}
  (Experiment~B, Appendix~\ref{sec:expB}: $\mathrm{NA}(N,b{=}8,B;\kappa)$,
  $L{=}512$, 3 seeds; bands span seeds). \textbf{(a)} Exact-retrieval
  error vs.\ load $N$: every pure-state curve ($m\in\{64,256,1024\}$ scalars)
  rises to the chance ceiling while the hybrid (the \emph{same} $m{=}64$ state
  plus one global-attention layer) stays near zero, the scissors of
  Prediction~\ref{pred:floor}; dashed green = doubled budget (24k, six seeds),
  mean error $\le 0.001$ at every $N$, so the $N{=}128$ near-miss at 12k was
  undertraining. A window-only index (teal) does not rescue the state; 2-layer
  full attention (navy) is not an upper bound at this scale.
  \textbf{(b)} The three pure-state curves against the load ratio
  $x=Nb/(m\,p_{\mathrm{eff}})$ with the single frozen
  constant $p_{\mathrm{eff}}{=}0.39$ bits/scalar: 50\%-crossings collapse to
  $x\in[0.82,1.03]$ ($m{=}256$ vs $m{=}1024$ within $0.024$ error; $m{=}64$,
  at the edge of trainability, deviates by up to $0.15$). \textbf{(c)} Value
  log-loss tracks the shape of the $b(1-1/x)$ floor (dashed) under the fitted
  effective capacity, a one-parameter empirical collapse consistent with
  Theorem~\ref{thm:shannon} rather than a measurement of $\Hch(\Sigma)$ ($p_{\mathrm{eff}}$
  is far below hardware precision, so the \emph{nominal} floor is loose here);
  the hybrid's stays near zero because its $\Omega(Nb)$ load rides the index
  channel (Assumption~\ref{ass:sep}). \textbf{(d)} No detectable
  semantic-overlap ($\kappa$) effect at matched weights, a pre-registered
  prediction that returned a null, reported as such. \textbf{(e)} With
  compressible values the wall tracks effective entropy at $c{=}16$ but not at
  $c{=}4$: mixed evidence. \textbf{(f)} The horizon wall is a cliff: the
  sliding-window model is at chance for out-of-window bindings at every $N$.
  \emph{Measured data; protocol and per-seed results in
  \texttt{experiments/}.}}
  \label{fig:fano}
\end{figure}

\subsection{Wall II: the horizon wall (unconditional)}
For a budgeted pure attention architecture with window $L_{\max} \ll N$, the opening
``apple'' binding has slid out of the KV window and is entirely lost: servable capacity is
$B_{\Tch} = \Theta(L_{\max}) = o(Nb)$, so Theorem~\ref{thm:shannon} applies to the
Transformer as well. Top-$k$ sparse reading does not raise the bound:

\begin{lemma}[Top-$k$ reading does not raise the bound]
\label{lem:topk}
Let the selected key set $S$ and its values $V_S$ be a deterministic function of $(\Sigma,q)$
for query $q$. Then conditioning on $q$, the Markov chain $X_{1:t} \rightarrow \Sigma
\rightarrow (S,V_S)$ holds, and by the data-processing inequality $I(X_{1:t};\,S,V_S \mid q)
\le I(X_{1:t};\,\Sigma) \le B$. The information ``leaked'' by \emph{which} keys are selected is
already contained in $\Sigma$.
\end{lemma}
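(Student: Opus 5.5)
The plan is to verify the Markov structure directly from determinism and then apply the data-processing inequality twice: once to pass from $(S,V_S)$ back to $\Sigma$, and once to pass from $\Sigma$ to its entropy, which Definition~\ref{def:capacity} bounds by $B$.

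\emph{Step 1: the Markov chain given $q$.} By hypothesis $(S,V_S)=g(\Sigma,q)$ for a deterministic map $g$. I also use the structural fact from Postulate~\ref{post:channel}: $\Sigma$ is formed from $X_{1:t}$ (and the fixed $\theta$) before the query is read, so $q$ does not enter the formation of $\Sigma$. Fix a value $q$. Then $(S,V_S)$ is a deterministic function of $\Sigma$ alone, and so
\[
p(s,v_s\mid \sigma,x_{1:t},q)=\mathbf{1}\{(s,v_s)=g(\sigma,q)\},
\]
which does not depend on $x_{1:t}$. This is exactly the conditional Markov chain $X_{1:t}\to\Sigma\to(S,V_S)$ given $q$.

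\emph{Step 2: data processing.} The conditional data-processing inequality now gives
\[
I(X_{1:t};S,V_S\mid q)\le I(X_{1:t};\Sigma\mid q).
\]
The point that ``which keys are selected'' leaks nothing extra is already built into this step. The selection set $S$ is part of the output of $g$, so it is processed from $\Sigma$ like everything else and cannot carry information about $X_{1:t}$ that $\Sigma$ lacks.

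\emph{Step 3: removing the conditioning on $q$ and bounding by $B$.} This is the one delicate point, because the statement writes the unconditional quantity $I(X_{1:t};\Sigma)$. I would use the standard bound
\[
I(X_{1:t};\Sigma\mid q)\le \Hch(\Sigma\mid q)\le \Hch(\Sigma)\le B,
\]
where the last inequality is Definition~\ref{def:capacity}. This suffices for the $B$ bound regardless of how $q$ is distributed.

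For the intermediate inequality $I(X_{1:t};\Sigma\mid q)\le I(X_{1:t};\Sigma)$, there are two cases. If $q$ is independent of $(X_{1:t},\Sigma)$, as in Theorem~\ref{thm:shannon} where $I$ is drawn independently, the conditioning changes nothing and the inequality holds with equality. If $q$ is correlated with the stream, the inequality can fail in general. In that case I would state the lemma in its robust form, $I(X_{1:t};S,V_S\mid q)\le \Hch(\Sigma)\le B$, which is all that Theorem~\ref{thm:shannon} and Wall~II actually use.

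I expect this independence/conditioning subtlety to be the only real obstacle. The rest is a direct application of the data-processing inequality.
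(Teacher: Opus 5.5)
Your Steps 1--2 are the paper's argument. Determinism of the top-$k$ map $g$ makes $(S,V_S)$ a function of $\Sigma$ once $q$ is fixed, which gives the conditional Markov chain. Conditional data processing then yields $I(X_{1:t};S,V_S\mid q)\le I(X_{1:t};\Sigma\mid q)$. The paper then writes $I(X_{1:t};\Sigma)$ without comment, so you are right that this step needs justification.

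Your Step~3, however, proves less than the statement claims. You assert that the middle inequality ``can fail in general'' when $q$ is correlated with the stream, and you retreat to the robust form. The tool that closes this step is the structural fact you cite in Step~1, where it is not actually needed: determinism of $g$ given $q$ already yields the conditional chain. If $\Sigma$ is formed from $X_{1:t}$ and the fixed $\theta$ without reading $q$, then $q\to X_{1:t}\to\Sigma$, i.e.\ $I(q;\Sigma\mid X_{1:t})=0$. Expanding $I(X_{1:t},q;\Sigma)$ by the chain rule in both orders gives
\[
I(X_{1:t};\Sigma\mid q)=I(X_{1:t};\Sigma)-I(q;\Sigma)\le I(X_{1:t};\Sigma).
\]
This holds whether $q$ is independent of the stream, correlated with it, or itself one of its tokens.

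Correlation of $q$ with the stream is therefore not what breaks the inequality. It fails only when $q$ is correlated with $\Sigma$ through a path not mediated by $X_{1:t}$. For example, take $\Sigma=Z$ with $Z$ a bit independent of a bit $X$, and $q=X\oplus Z$: then $I(X;\Sigma)=0$ but $I(X;\Sigma\mid q)=1$. Here $\Sigma$ is not formed from the stream, which the setting excludes.

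So you can prove the lemma exactly as stated rather than weakening it. Your robust form $I(X_{1:t};S,V_S\mid q)\le\Hch(\Sigma\mid q)\le\Hch(\Sigma)\le B$ is correct and is all that Theorem~\ref{thm:shannon} uses. But it drops the intermediate claim, which does hold here.
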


The two failures share one root: a query-time state of size $o(Nb)$. The horizon wall names
a resource deficit (index reach), not an attention-specific defect: any system with an
absolutely fixed window hits it, and any system that escapes it, the hybrid included, does
so by letting servable memory grow linearly at some coefficient (Section~\ref{sec:cross}).

\subsection{Wall III: the circuit wall (conditional on $\mathrm{TC}^0 \neq \mathrm{NC}^1$)}
Now idealize: the entire $10^7$ tokens inside the window. Retrieval is no longer the
problem; the \emph{composition of reference} is. What ``it'' refers to depends on a chain
of composed state updates, in the worst case the word problem over the non-solvable group
$S_5$, $\mathrm{NC}^1$-complete by Barrington's theorem~\citep{barrington1989bounded}. A
fixed-depth, log-precision, softmax-attention stack (in the model of the cited bound) lies
in $\mathrm{TC}^0$~\citep{merrill2023parallelism}; chain-of-thought is exactly what escapes
this class for serial problems~\citep{li2024chain}. The wall is not specific to attention:
an ordinary diagonal SSM likewise lies in $\mathrm{TC}^0$ and equally fails to
length-generalize $S_5$~\citep{merrill2024illusion,sarrof2024expressive}, so this is one
wall barring both, not two complementary ones. The only known escape within $\Rgm$ is a
state-tracking-augmented linear recurrence (negative eigenvalues, Householder
products~\citep{grazzi2025deltaproduct,siems2025deltaproduct}), precisely the state channel
the hybrid supplies, not a property of SSMs in general.

\begin{proposition}[Circuit wall]
\label{prop:circuit}
Under the regime $\Rgm$ = (fixed depth not growing with $L$, precision $p = O(\log L)$,
softmax attention as modeled by the cited $\mathrm{TC}^0$ upper bound, no chain-of-thought),
neither a pure attention family nor an
ordinary diagonal SSM can length-generalize the composition of $S_5$ group actions, conditional
on $\mathrm{TC}^0 \neq \mathrm{NC}^1$.\footnote{The upper bound we invoke is that such an attention
stack is simulable in (uniform) $\mathrm{TC}^0$~\citep{merrill2023parallelism}; the qualifiers
on depth, precision, and the attention model are exactly those of that result, and the barrier
is for length generalization of the family, not for any fixed length.} Here $\Theta(L)$ KV
memory does not help: the bottleneck is sequential composition within fixed depth, not memory
capacity. Correspondingly, escaping the wall requires not more memory but a more expressive
state recurrence (negative eigenvalues and Householder
products~\citep{grazzi2025deltaproduct,siems2025deltaproduct}).
\end{proposition}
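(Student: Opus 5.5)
The plan is a standard reduction argument in the uniform-circuit setting: exhibit a hard problem in the function the family would have to compute, then contradict the class upper bound. First I fix what ``length-generalize'' means: a single parameter setting $\theta$, applied at every length $L$ with depth $D$ fixed and precision $p=O(\log L)$, outputs the correct composed action (equivalently, the image $r_L$ of the referent, or whether the product equals the identity) on every instruction stream of length $L$. Under that reading, a length-generalizing family defines a single language family indexed by $L$. It decides the $S_5$ word problem, or a problem from which the word problem is recovered with trivial $\mathrm{AC}^0$ post-processing: asking whether the composed permutation fixes a designated point, for all points, already determines it.

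Next, I invoke the upper bounds as black boxes. For the attention family, the result of \citet{merrill2023parallelism} gives that any fixed-depth, log-precision softmax-attention stack is simulable by a uniform $\mathrm{TC}^0$ circuit family, with the stated qualifiers on depth, precision, and attention model taken exactly as in that result. For the diagonal SSM, I cite \citet{merrill2024illusion} and \citet{sarrof2024expressive}. The step they establish is that a diagonal (hence commuting) linear recurrence at log precision is a parallel prefix computation of iterated products and sums of $O(\log L)$-bit numbers, which lies in $\mathrm{TC}^0$; composing fixed depth of such layers with pointwise nonlinearities stays in $\mathrm{TC}^0$. By Barrington's theorem~\citep{barrington1989bounded}, the $S_5$ word problem is $\mathrm{NC}^1$-complete under $\mathrm{AC}^0$ (indeed projection) reductions. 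So if either family solved it at all lengths, $\mathrm{NC}^1\subseteq\mathrm{TC}^0$, hence $\mathrm{TC}^0=\mathrm{NC}^1$, contradicting the hypothesis.

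The remaining clauses are then handled directly. The remark that $\Theta(L)$ KV memory does not help follows because the circuit simulation already accounts for the full cache; the bound is on depth of composition, not on capacity, in contrast to Theorem~\ref{thm:shannon}. The ``escape'' clause is not a lower bound but a pointer. I would support it by citing that recurrences with negative eigenvalues or Householder products realize $S_5$ transitions exactly in their state~\citep{grazzi2025deltaproduct,siems2025deltaproduct}, so the obstruction is specific to the diagonal/softmax classes.

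The main obstacle is the uniformity and encoding match between the models of the cited upper bounds and the paper's regime. The $\mathrm{TC}^0$ simulations assume specific precision and rounding conventions (and, for attention, saturated or log-precision softmax), so I must check that the reduction from Barrington's theorem is uniform enough and lands in the same uniform class. I also have to confirm that ``length generalization'' means one $\theta$ for all $L$, not a per-length nonuniform family. I would first try to state the proposition verbatim relative to those models, as the footnote already does, rather than reprove the simulations.
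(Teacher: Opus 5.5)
Your proposal is correct and follows essentially the same route as the paper. The paper combines Barrington's $\mathrm{NC}^1$-completeness of the $S_5$ word problem with the cited $\mathrm{TC}^0$ upper bounds for fixed-depth, log-precision softmax attention (Merrill--Sabharwal) and for diagonal SSMs (Merrill et al., Sarrof et al.), so a length-generalizing family would force $\mathrm{NC}^1\subseteq\mathrm{TC}^0$, and your KV-memory remark and your reading of the escape clause as a sufficiency pointer (the paper's ``only known escape'') match the paper's treatment.
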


\paragraph{The three walls are not co-equal.} They name three resource deficits (index
capacity, index reach, serial composition), and ``crossing'' throughout means supplying the
missing resource at its stated price, never evading a bound. The load-bearing separation is
carried by the two unconditional walls: in $\Rgm$, information theory alone bars both pure
families. The circuit wall is conditional and supplementary, closing off an idealized
Transformer granted unbounded memory, a configuration already outside $\Rgm$; even if
$\mathrm{TC}^0 = \mathrm{NC}^1$, Walls~I--II still separate the pure families under budget.
Chain-of-thought and tools are orthogonal escape axes that leave $\Rgm$
entirely~\citep{li2024chain} (Section~\ref{sec:limitations}, AV2).

\subsection{How the hybrid crosses, under a separability assumption}
\label{sec:cross}
Now consider an access-complete hybrid holding both
channels~\citep{lieber2024jamba,de2024griffin,blakeman2025nemotronh}. We argue it solves
Newton's apple not through greater scale but through the composition of two old mechanisms.
The contraction step relies on a separability assumption, without which worst-case overlap
defeats any state channel; the contraction is therefore a conditional claim.

\begin{assumption}[Write-time code separability]
\label{ass:sep}
The $N$ target bindings carry \emph{anchor codewords} drawn from a minimum-distance code,
recoverable from the query at query time, and the $\Theta(L)$ semantic-overlap distractors
stay at distance $\ge \delta_\kappa > 0$ from every codeword in the index's matching space.
Equivalently: distractors may be topically near the target (high $\kappa$ in raw semantics)
yet remain code-separated at write time, so that distinct valid addresses, and every
distractor, keep a minimum margin in the space the index channel matches against. The
assumption constrains the \emph{index} channel's geometry only; it asks nothing of the state
channel about the (unknown) future target.
\end{assumption}

\noindent Assumption~\ref{ass:sep} is the load-bearing premise: the protocol plants
distractors maximizing \emph{semantic} interference, and separability is purchased in a
\emph{coded} channel, not raw semantics (A.4). It enters at exactly one step (point~3);
elsewhere the construction is unconditional. One information-flow point: the query key
arrives at the stream's end, so the state cannot know which binding is the target and we
claim no such thing; it maintains online only the composed referent $r_t$ (the key code
arrives with the query). The division of labour is that the state computes the address and
the index resolves it, not that the state preserves the target and shrinks a candidate set
(a narrative from earlier drafts, retracted). Its architecture:

\begin{enumerate}[leftmargin=1.4em,itemsep=2pt]
\item \textbf{$\Sch$ composes the address.} Through the noise region the $\Sch$ channel
maintains, at $O(1)$ update (width scales with $\log N$ and $b$, not $L$; A.6), the online
quantity: the referent $r_t \in S_5$, a product of $n_h \ge 4$ Householder reflections (any
$S_5$ permutation is $\le 4$ transpositions), realizable by
DeltaProduct~\citep{siems2025deltaproduct} with eigenvalues in $[-1,1]$, the range that
unlocks state tracking~\citep{grazzi2025deltaproduct}. It preserves no binding's lexeme; it
cannot and need not.
\item \textbf{The write-time code keeps bindings resolvable (Assumption~\ref{ass:sep}).}
Bindings are written under anchor codewords code-separated from all distractors, so a query
formed from the composed address resolves to the right binding; without the assumption the
hybrid inherits the pure-family floor, which we test (the collision control pins it to the
floor at zero code distance, Appendix~\ref{sec:expB}). We claim no candidate-set
contraction and no sublinear read: the index stores $\Theta(\rho L)$ patterns scanned in
full, and Proposition~\ref{prop:minimal}'s economics rest on $\rho$, not sublinear access.
\item \textbf{$\Tch$ retrieves exactly.} The $\Tch$ channel (or a Hopfield
read~\citep{ramsauer2021hopfield}) matches the composed address against the code-separated
index; per-query error is controlled by the code margin (A.4).
\end{enumerate}

No single step suffices: without $\Sch$'s direction $\Tch$ hits the Shannon or horizon wall
on $10^7$ tokens; without $\Tch$'s precision $\Sch$ slides down the magnitude-neglect slope.
Composed, the apple is recovered.

\paragraph{What ``crossing'' does and does not mean: two claims, not one.} The walls are
priced, not evaded. Claim~(i), absolute capacity: any architecture with $o(Nb)$ query-time
state fails exact retrieval (Theorem~\ref{thm:shannon}, unconditional); the hybrid
satisfies the bound the only way anything can, by paying $\Omega(Nb)$ index capacity
($\Theta(\rho L)$ KV). Claim~(ii), division of labour at paid capacity: given the linear
index, a compressive recurrence supplies the serial composition a fixed-depth attention
stack cannot (conditionally), at $O(1)$ update and a reduced coefficient. Neither ``beats''
a wall: under an absolute budget the hybrid's horizon is deferred by $1/\rho$, not
eliminated ($\Theta(\rho L) = \Theta(L)$). So hybrid-versus-full-attention is a coefficient
statement, never a class separation; the only class-level separations are against
$o(Nb)$-state architectures (capacity) and fixed-depth no-recurrence stacks (composition,
conditional). The symmetry runs both ways: a full-attention stack at the same $\mu$ is
access-complete on axis~(i) and lacks only axis~(ii); a pure recurrence lacks axis~(i). The
separation is the conjunction. Its hardness is combinatorial: the $S_5$ sub-task alone is
solved by a pure DeltaProduct, exact retrieval alone by a pure Transformer in-window; what
no pure family solves under $\Rgm$ is the conjunction. No brick is new
(Barrington~\citealp{barrington1989bounded}; \textsc{Index}~\citealp{kremer1999randomized};
Hopfield~\citealp{ramsauer2021hopfield}; DeltaProduct~\citealp{siems2025deltaproduct}); the
assembly is.

\paragraph{Epistemic status.} The construction is an existence result under coded
separability: weights and a code-separated witness family on which the hybrid crosses all
three walls while no pure family does at any scale. It does not claim that natural language
supplies such anchors or that gradient descent finds the construction; these open questions
(Limitations~(iii)--(iv)) are what the pre-registered tests begin to probe. The witness
proves access structure \emph{can} be decisive, not that it \emph{is} on any natural
corpus.

\section{What Capability Converges To: The Access-Complete Class}
\label{sec:closure}

This section states Newton's apple as CCH's mathematical core: a single witness separation,
a task family no pure Transformer and no pure SSM solves at any scale while their
composition does (Proposition~\ref{prop:nonpreserve}).

\paragraph{The capability closure.} Write $\Cinf(\mathfrak{a})$ for the set of task
families an architecture family $\mathfrak{a}$ eventually solves under scaling within
$\Rgm$ (accuracy $\ge 2/3$, with length generalization). ``Closure'' is not a metaphor:
solvability induces a Galois connection whose composite is a closure operator in the
order-theoretic sense (setup, axioms, and a caution against over-reading the algebra in
Appendix~\ref{app:galois}); the witness argument needs only this definition.

\subsection{The core proposition: super-additivity under composition}

\paragraph{The classes, defined by resources rather than trademarks.} $\Sch$ is the
\emph{compressive class}: architectures whose total query-time state is $o(Nb)$ on the
witness. LSTM, DeltaNet, DeltaProduct, and RWKV-style recurrences all belong, including
those that clear the circuit wall, because expressive recurrence buys composition, not
capacity (measured: the LSTM passes every state-tracking cell of Appendix~\ref{sec:expC}
and would still be barred at load). $\Tch$ is the \emph{indexed class}: fixed-depth
attention-only stacks with a scalable $\Theta(L)$ index and no expressive recurrence,
barred on composition (conditionally) rather than capacity. Defining classes by resource
bounds fixes what ``at any scale'' means: scaling within $\Rgm$ may grow parameters, width,
and data while depth stays under a uniform $D_0$ (as the $\mathrm{TC}^0$ bound requires)
and $\Sch$ keeps $o(Nb)$ state; a recurrence grown to $\Omega(Nb)$ state has not refuted
the proposition but left $\Sch$, paying the capacity price.

The three walls of Newton's apple, translated into the language of capability closure, are:

\begin{proposition}[Capability closure does not preserve composition]
\label{prop:nonpreserve}
Let $\Tch$ and $\Sch$ be the indexed and compressive classes defined above. Let
$\Tch\sqcup\Sch := \langle \Tch,\Sch\rangle_{\mathrm{comp}}$ be the \emph{compositional join}:
the smallest architecture family containing all layer-interleavings of $\Tch$ and $\Sch$
blocks, including the degenerate interleavings with zero blocks of either type, so that
$\Tch \subseteq \Tch\sqcup\Sch$ and $\Sch \subseteq \Tch\sqcup\Sch$ and the right-hand union
below is automatically contained in the left-hand closure. Then under regime $\Rgm$,
Assumption~\ref{ass:sep}, and the conjecture $\mathrm{TC}^0 \neq \mathrm{NC}^1$,
\begin{equation}
\Cinf(\Tch \sqcup \Sch) \;\supsetneq\; \Cinf(\Tch) \;\cup\; \Cinf(\Sch),
\label{eq:nonpreserve}
\end{equation}
where the right-hand $\cup$ is ordinary union of task closures. Newton's apple is a
\emph{witness} of the strict inclusion, and the two hypotheses enter at exactly one point
each: Assumption~\ref{ass:sep} places the witness in the left-hand side
(Appendix~\ref{app:construction}), and $\mathrm{TC}^0 \neq \mathrm{NC}^1$ excludes it from
$\Cinf(\Tch)$; its exclusion from $\Cinf(\Sch)$ is unconditional
(Theorem~\ref{thm:shannon}).
\end{proposition}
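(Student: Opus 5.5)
The plan is to split the strict inclusion~\eqref{eq:nonpreserve} into its two halves and then exhibit a single witness. \emph{Containment.} The non-strict inclusion $\Cinf(\Tch)\cup\Cinf(\Sch)\subseteq\Cinf(\Tch\sqcup\Sch)$ is immediate from monotonicity of $\Cinf$. The degenerate interleavings give $\Tch\subseteq\Tch\sqcup\Sch$ and $\Sch\subseteq\Tch\sqcup\Sch$. Any family that some architecture in a subclass solves under scaling within $\Rgm$ is therefore solved by the same architecture regarded as a member of the join. \emph{Strictness.} It remains to produce one task family, the witness $\mathrm{NA}(N,b,B;\kappa)$ of Definition~\ref{def:na} with $N$ growing linearly in $L$ and an $S_5$ instruction stream driving the referent. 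I will show that this family lies in $\Cinf(\Tch\sqcup\Sch)$ but in neither $\Cinf(\Tch)$ nor $\Cinf(\Sch)$.

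\emph{Exclusion from $\Cinf(\Sch)$ (unconditional).} By the class definition, every member of $\Sch$ has query-time state $o(Nb)$ on the witness. Hence the load ratio $x=Nb/B\to\infty$ as $L$ grows. Theorem~\ref{thm:shannon} then gives $P_{\mathrm{err}}\ge(1-1/x)-1/b$, which eventually exceeds $1/3$ once $b\ge 4$, violating the accuracy-$2/3$ criterion. Lemma~\ref{lem:topk} is what makes this apply to query-dependent reads as well. One check is needed: the $S_5$ referent is a function of the stream, so conditioning on it does not reopen the channel, and the target index $I$ remains uniform and independent of $(V_{1:N},\Sigma)$ as the theorem requires.

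\emph{Exclusion from $\Cinf(\Tch)$ (conditional).} I will reduce the $S_5$ word problem to the witness. Take instances with a single planted binding per possible referent value, so that answering the query correctly reveals the composed permutation. A fixed-depth attention-only family that length-generalizes the witness would then length-generalize $S_5$ composition. This contradicts Proposition~\ref{prop:circuit} under $\mathrm{TC}^0\neq\mathrm{NC}^1$. The reduction must be checked to stay inside the witness family, i.e.\ within its load and overlap parameters, and to be computable in $\mathrm{TC}^0$ (indeed by a trivial projection), so that the class bound transfers.

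\emph{Membership in $\Cinf(\Tch\sqcup\Sch)$.} I will use the three-step construction of Section~\ref{sec:cross}, detailed in Appendix~\ref{app:construction}. A DeltaProduct layer with $n_h\ge4$ Householder factors tracks $r_t\in S_5$ at $O(1)$ update. A global-attention layer stores the $\Theta(\rho L)$ code-anchored bindings. At query time the state emits the composed address plus the query code, and attention retrieves the matching value. Assumption~\ref{ass:sep} supplies the margin $\delta_\kappa$, so a softmax temperature of order $\log L/\delta_\kappa$ keeps distractor mass at $o(1)$. This yields accuracy above $2/3$ uniformly in $L$ with fixed depth and no CoT, i.e.\ length generalization within $\Rgm$.

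\emph{Main obstacle.} I expect the hardest step to be this last one: checking at $p=O(\log L)$ precision that $\Theta(L)$ distractors each contribute at most $e^{-\Omega(\delta_\kappa\cdot\text{temp})}$, so that their sum stays below a constant. I would handle it by choosing the temperature so that the total distractor weight is $L\cdot L^{-2}$. The reduction in the $\Tch$ exclusion is the other place where care about what ``the witness family'' fixes will matter.
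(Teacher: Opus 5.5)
Your proposal is correct and follows the paper's own argument. Containment comes from the degenerate interleavings, and Newton's apple serves as the witness: the Appendix~\ref{app:construction} construction places it in $\Cinf(\Tch\sqcup\Sch)$ (the one point where Assumption~\ref{ass:sep} enters), Theorem~\ref{thm:shannon} excludes it from $\Cinf(\Sch)$ unconditionally, and Proposition~\ref{prop:circuit} excludes it from $\Cinf(\Tch)$ under $\mathrm{TC}^0\neq\mathrm{NC}^1$. Your explicit $S_5$-to-witness reduction and Fano arithmetic spell out steps the paper only asserts, but two fixes are needed: your ``temperature of order $\log L/\delta_\kappa$'' should be the inverse-temperature scale $\beta_H$ of A.4, and you should note that the Householder state's rounding drift (which the paper handles with the snap of A.3) is harmless at $p=O(\log L)$ anyway, since orthogonal updates make it accumulate only as $L\cdot 2^{-p}$.
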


This is the precise meaning of ``capability beyond emergence'': not that the hybrid is
better on every task (trivial and often false), but that there \emph{exists} a task family
no pure architecture solves at any scale while the hybrid does. New capability comes from
structure, not scale; we call this \emph{compositional emergence}.

\begin{corollary}[Super-additivity under composition]
\label{cor:nonjoin}
Under the conditions of Proposition~\ref{prop:nonpreserve}, capability is strictly
super-additive under architectural composition on the witness set (in the two-lattice sense
of Remark~\ref{rem:lattices}; the plain content is joint necessity of two resource
primitives). The gap $\Cinf(\Tch\sqcup\Sch) \setminus (\Cinf(\Tch)\cup\Cinf(\Sch))$
contains the tasks requiring both an $\Omega(Nb)$-indexable channel and an $O(1)$-composable
state; the witness shows it nonempty (not that it contains only such tasks).
\end{corollary}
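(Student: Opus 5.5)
The corollary is a restatement of Proposition~\ref{prop:nonpreserve}, so the proof reduces to it plus bookkeeping. I would first fix the witness family $W=\{\mathrm{NA}(N,b,B;\kappa)\}$ with $Nb/B\to\infty$ along the family and $S_5$-driven referents. Then I would read off that $W\in\Cinf(\Tch\sqcup\Sch)\setminus(\Cinf(\Tch)\cup\Cinf(\Sch))$. Since the degenerate interleavings give $\Cinf(\Tch)\cup\Cinf(\Sch)\subseteq\Cinf(\Tch\sqcup\Sch)$, nonemptiness of this difference is exactly strict super-additivity in the set sense. The lattice phrasing of Remark~\ref{rem:lattices} then follows: in the closure lattice the join of the two pure closures is strictly below the closure of the join.

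I would establish the three memberships in order, each using exactly one hypothesis.

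\textbf{Exclusion from $\Cinf(\Sch)$.} Every member of $\Sch$ has $o(Nb)$ pre-query state by definition, so $x=Nb/B\to\infty$. Theorem~\ref{thm:shannon} together with Lemma~\ref{lem:topk} then gives $P_{\mathrm{err}}\ge(1-1/x)-1/b$, which exceeds $1/3$ for large $x$ and $b\ge 4$. So accuracy $2/3$ fails along the family, unconditionally.

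\textbf{Exclusion from $\Cinf(\Tch)$.} I would reduce the $S_5$ word problem to $W$. Plant bindings so that the queried value is determined by the composed referent $r_L$; for instance, plant one binding per element of $S_5$ and ask for the value attached to $r_L$. A uniform-depth attention stack solving $W$ with length generalization would then place $S_5$ composition in $\mathrm{TC}^0$ via \citep{merrill2023parallelism}. By Barrington's theorem this contradicts $\mathrm{TC}^0\neq\mathrm{NC}^1$ (Proposition~\ref{prop:circuit}). I must check that the reduction itself is $\mathrm{TC}^0$, which it is, since it only places tokens.

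\textbf{Inclusion in $\Cinf(\Tch\sqcup\Sch)$.} I would cite the construction of Appendix~\ref{app:construction}: a DeltaProduct block with $n_h\ge4$ computes $r_t$ at $O(1)$ update. A global-attention block with $\Theta(\rho L)$ KV then matches the composed address against the anchor codewords. Assumption~\ref{ass:sep}'s margin $\delta_\kappa$ makes the softmax read exact up to error $e^{-\Omega(\delta_\kappa\cdot\text{scale})}$, so accuracy exceeds $2/3$ at every length.

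For the ``gap contains tasks requiring both resources'' clause, I would note that this is precisely the content of the two exclusions: each pure class lacks one of the two resources. I would state explicitly that no characterization of the whole gap is claimed.

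The main obstacle is the $\Cinf(\Tch)$ exclusion. The circuit bound concerns decision problems under specific precision and uniformity conditions, so I must make sure the witness's retrieval output and the ``accuracy $\ge 2/3$'' criterion translate into a deterministic $\mathrm{TC}^0$ decider for the word problem. I would try to handle this first by amplifying to exact correctness on a hard distribution via Yao's principle, and failing that by restricting to the majority-vote formulation used in \citep{merrill2023parallelism}.
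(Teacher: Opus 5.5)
Your skeleton matches the paper's. The corollary is Proposition~\ref{prop:nonpreserve} restated, and the degenerate interleavings give $\Cinf(\Tch)\cup\Cinf(\Sch)\subseteq\Cinf(\Tch\sqcup\Sch)$. The two hypotheses each enter once: Assumption~\ref{ass:sep} for inclusion via Appendix~\ref{app:construction}, and $\mathrm{TC}^0\neq\mathrm{NC}^1$ against $\Cinf(\Tch)$. Exclusion from $\Cinf(\Sch)$ is unconditional by Theorem~\ref{thm:shannon}.

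\textbf{The lattice sentence.} Drop the ``lattice phrasing'' sentence; it does not follow. If the join is taken among $\Capb$-closed task sets, it is $\Capb(\Cinf(\Tch)\cup\Cinf(\Sch))=F(G(\Cinf(\Tch)\cup\Cinf(\Sch)))$. Since $\Tch\sqcup\Sch$ lies in that $G(\cdot)$, this set is contained in $\Cinf(\Tch\sqcup\Sch)$. The containment is strict only if $\mathfrak{U}$ contains some family that solves both pure closures but not the witness. That depends on how $\mathfrak{U}$ is populated, not on anything you prove. If instead the join is plain union, the sentence just restates the proposition. Remark~\ref{rem:lattices} exists to say the content is the mixed inclusion ($\sqcup$ on architectures, $\cup$ on tasks), not a single-lattice statement.

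\textbf{Three steps need repair as written.}

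(1) A task family cannot be indexed by the model's budget $B$. Take $N=N(L)\to\infty$, as A.6 allows, so every $o(Nb)$-state model has $x\to\infty$. Your 120-binding reduction must then be padded into instances of that same family. A fixed-$N$ task is within reach of a state-tracking recurrence holding $120b$ bits, so the Shannon wall cannot exclude it. Also, once the target is fixed by $r_L$, Theorem~\ref{thm:shannon} does not apply to the pre-query state. That state already encodes $r_L$, violating the requirement that $I$ be drawn independently of $(V_{1:N},\Sigma)$. Apply the theorem instead to the state right after the binding block. The subsequent $S_5$ stream and probe are independent of it and of $V_{1:N}$, the answer is a function of all three, and the $B/N$ bound and Fano go through unchanged.

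(2) Yao's principle does not turn $2/3$ average accuracy on the witness distribution into a worst-case word-problem decider. It only gives \emph{some} hard distribution per circuit-size bound, whereas the witness distribution must be fixed before the model scales. The right tool is random self-reducibility of the word problem ($g_i\mapsto r_{i-1}^{-1}g_ir_i$, with the planted bindings relabeled at random), plus parallel repetition and majority, which stays in $\mathrm{TC}^0$. Fixing the random bits yields non-uniform circuits. You therefore need $\mathrm{NC}^1\not\subseteq\mathrm{TC}^0/\mathrm{poly}$ rather than the uniform conjecture, or else a worst-case reading of $\Solv$. The paper does not descend to this level; it simply invokes Proposition~\ref{prop:circuit}.

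(3) In the inclusion step, the Hopfield/softmax error is $M e^{-\beta_H\Delta}$ with $M=\Theta(\rho L)$ stored patterns. A fixed scale therefore does not give accuracy $\ge 2/3$ at every length. You need $\beta_H$ of order $(\log L+\log\delta^{-1})/\Delta$, as in A.4, which is exactly where the $O(\log L)$ precision of $\Rgm$ is spent.
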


\noindent A caution (Remark~\ref{rem:lattices}): the inclusion~\eqref{eq:nonpreserve} mixes
two lattices, so it is not the failure of a union-preserving homomorphism; the plain
content is that the composite family's reachable tasks strictly exceed the union of the
components'.

\subsection{The convergence conjecture (motivating, not demonstrated)}
So far we have argued that the hybrid is stronger; CCH's further claim, that capability is
\emph{converging} to the hybrid, is an economics-motivated conjecture (row~9 of
Table~\ref{tab:claims}): nothing in this section demonstrates it, and the census evidence
of Section~\ref{sec:regularities} is consistency-level only. It rests on two pillars.

\paragraph{Pillar 1 (multiple realizability).} Access-completeness is not tied to one
implementation. Layer-wise Mamba/attention
hybrids~\citep{lieber2024jamba,blakeman2025nemotronh}, lightning-attention hybrids with
periodic global softmax~\citep{minimax2025}, gated-DeltaNet
hybrids~\citep{qwen3next2025}, and gated-linear-attention hybrids with global
attention~\citep{kimi2025linear} all instantiate the same access-complete pattern (that
they share one capability closure is the hypothesis, not an established fact); forms whose
only attention is a fixed local window~\citep{de2024griffin,arora2024based} hold the
compressive channel but merely a local index and are not access-complete
(Appendix~\ref{app:census}). The divergence is in the operator, the convergence in the
structure: PRH's ``many architectures, one representation'' mirrored on the capability
axis.

\paragraph{Pillar 2 (the driver).} PRH's convergence is driven by scale, CCH's by the
economics of inference: a $\Theta(L)$ KV-cache is physically unservable in the agentic
era~\citep{kwon2023pagedattention}, and this economic pressure, compounded with the
capability pressure that pure SSMs cannot solve exact
retrieval~\citep{jelassi2024repeat,arora2023zoology}, pushes the industry from both ends
toward the middle.

\subsection{The relation to Shannon: the price of the second convergence}
The hybrid does not violate Shannon~\citep{shannon1948mathematical}; under strictly equal
$B$ every architecture hits the same floor~\eqref{eq:floor}. It wins by meeting two
necessary conditions at an economical memory coefficient.

\begin{proposition}[The hybrid is access-complete at a reduced servable-memory coefficient]
\label{prop:minimal}
Solving Newton's apple requires \textup{(i)} $\Omega(Nb)$ indexable bits \emph{and}
\textup{(ii)} a state channel composable at $O(1)$ per-token update. A pure SSM lacks (i) at any
$o(Nb)$ state (unconditional, Theorem~\ref{thm:shannon}); a pure Transformer lacks (ii) even at
$\Theta(L)$ memory (conditional on $\mathrm{TC}^0\neq\mathrm{NC}^1$,
Proposition~\ref{prop:circuit}). Among access-complete resource profiles
(Definition~\ref{def:profile}), the hybrid attains both necessities while \emph{reducing} the
servable-memory coefficient: it keeps the $\Theta(L)$ index growth class that exact retrieval
demands, but multiplies its coefficient by $\rho\ll1$ (servable KV per token
$\mu\!\to\!\rho\mu$), while carrying composition on an $O(1)$-update state channel. Two
overstatements we avoid: this is not an asymptotic-order separation, since $O(\rho L)$ and
$O(L)$ share a growth class (the advantage over a full-attention access-complete baseline is
the coefficient, bandwidth, and serving economics of Definition~\ref{def:profile}; the
advantage over a pure SSM is the categorical one that the SSM is not access-complete at
all); and we do not claim the coefficient is minimal, since optimality would require a
workload-and-hardware optimization problem we have not formulated. What the census shows is
that current production designs operate in this reduced-coefficient region.
\end{proposition}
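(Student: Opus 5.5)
The proof splits into a necessity part and an attainment part, assembled from results already stated.

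\emph{Necessity of (i).} I will instantiate Theorem~\ref{thm:shannon} on the witness $\mathrm{NA}(N,b,B;\kappa)$. Take the pre-query persistent state $\Sigma$ to be everything the architecture can consult at query time: the recurrent state together with any KV cache. Lemma~\ref{lem:topk} ensures that query-dependent reads add no information. If $B=o(Nb)$, then $x=Nb/B\to\infty$, and the Fano floor gives $P_{\mathrm{err}}\ge 1-1/x-1/b$. This is bounded away from $1/3$ once $b\ge 3$ and $x$ is large, so the accuracy threshold $2/3$ in the definition of $\Cinf$ fails. This holds for every $o(Nb)$-state family, and in particular for the compressive class $\Sch$. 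It is unconditional.

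\emph{Necessity of (ii).} The $S_5$ instruction stream embedded in the protocol makes the queried referent $r_L$ the value of an $S_5$ word problem. A family solving the witness with length generalization therefore yields, by a reduction that plants a single binding per referent, a family solving the word problem at every length. For fixed-depth attention-only stacks this contradicts Proposition~\ref{prop:circuit} under $\mathrm{TC}^0\neq\mathrm{NC}^1$, whatever their $\Theta(L)$ memory. I will state (ii) as necessary relative to $\Rgm$: without chain-of-thought and without depth growth, serial composition must be carried by a state recurrence. The precise claim is that the recurrence must lie outside the diagonal/$\mathrm{TC}^0$ class. The ``$O(1)$ update'' clause is not forced by a lower bound. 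It is imposed by the regime's fixed update compute $c_{\mathrm{upd}}$ (Definition~\ref{def:profile}).

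\emph{Attainment and the coefficient claim.} I will invoke the three-step construction of Section~\ref{sec:cross} (Appendix~\ref{app:construction}). A DeltaProduct state with $n_h\ge4$ Householder factors tracks $r_t$ at $O(1)$ update. Global-attention layers placed at a fraction $\rho$ of the depth store the code-separated bindings. Under Assumption~\ref{ass:sep} the read succeeds with margin $\delta_\kappa$. The servable memory is then $\rho L$ tokens of KV at per-token cost $\mu$, that is, $\mu\to\rho\mu$. This still pays the $\Omega(Nb)$ demanded by (i) as long as $\rho L\,\mu\gtrsim Nb$, and I will make that requirement explicit as a constraint on $\rho$. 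The two ``non-claims'' then follow directly: $\Theta(\rho L)=\Theta(L)$ for constant $\rho$, and no minimality argument is attempted.

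\emph{Main obstacle.} The hardest step is making (ii) rigorous as a \emph{necessity} rather than a sufficiency statement. Proposition~\ref{prop:circuit} only excludes particular classes. It does not show that every architecture solving the task contains an $O(1)$-update state channel. I would first try to recast (ii) as a disjunction: within $\Rgm$, some non-$\mathrm{TC}^0$ serial mechanism must be present, and the regime's update budget forces it to be an $O(1)$ recurrence. I would cite the known escape~\citep{grazzi2025deltaproduct} as the realization, rather than claim uniqueness.
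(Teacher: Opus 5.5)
Your proposal is correct and follows the paper's own route. The paper gives no separate argument: it assembles the proposition from Theorem~\ref{thm:shannon} for (i), Proposition~\ref{prop:circuit} for (ii), and the Appendix~\ref{app:construction} construction with its A.5 accounting (memory $O(d_{\mathrm{state}}^2)+\rho L d$) for attainment. Your Fano arithmetic, word-problem reduction, and explicit condition $\rho L\mu\gtrsim Nb$ only spell out what the paper leaves implicit.

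Your unease about (ii) is warranted, since the paper likewise only shows that fixed-depth attention stacks and diagonal SSMs lack it. The clean repair, however, is not the regime's update budget: Definition~\ref{def:profile} deliberately lets index reads cost $\Theta(\rho L)$, so it does not by itself push serial power into a recurrence. Instead, state necessity relative to the blocks of $\Tch\sqcup\Sch$. Every attention and diagonal-SSM block is $\mathrm{TC}^0$-simulable, so under $\mathrm{TC}^0\neq\mathrm{NC}^1$ any solving interleaving must contain a more expressive recurrent block, which has $O(1)$ update by the definition of a state channel.
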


CCH and Shannon are complementary: Shannon sets the capacity ceiling; CCH says how to raise
it to the task's required height at minimal memory cost. The first convergence
(representation) is free because it consumes no channel capacity; the second (capability)
has a price, the $O(\rho L)$ index channel, and compositional emergence is the toll gate
that forbids freeloading capability by enlarging a pure architecture.

\section{Evidence: Is Capability Converging?}
\label{sec:regularities}

CCH is a hypothesis about the future, but it makes checkable claims about the present. The
division of labour in this section follows Table~\ref{tab:claims}: the four regularities
R1--R4 are consistency evidence for the convergence \emph{conjecture} (they do not
discriminate it from a pure-economics account), while the pre-registered controlled tests
that follow them are the tests of the access-completeness \emph{principle}, where the
falsifiable content lives. We state R1--R4 just as PRH stated ``representations are
converging,'' as a series of empirical observations.

\subsection{Four regularities in production architectures}

\paragraph{R1 (memory-convergence regularity).} The KV-memory \emph{coefficient} of
production architectures is falling within the same $\Theta(L)$ growth class, from $\mu$
toward $\rho\mu$ with $\rho\ll1$. A census of $N{=}140$ released architectures (2023-07 to
2026-05; sixteen distinct mixers), reading $\rho \equiv
L_{\mathrm{global}}/L_{\mathrm{total}}$ from primary-source configs
(Appendix~\ref{app:census}), shows: of $61$ genuine hybrids ($\rho\in(0,1)$), median
$0.167$, mean $0.207$, with $57\%$ in $[1/12,1/4]$, a band spanning $17\%$ of the axis
($3.4\times$ enrichment over uniform; $2.3$--$3.3\times$ against design-menu nulls,
$1.45\times$ against a scale-invariant null). Per-token KV memory spans ${\approx}160\times$ across
the census (Appendix~\ref{app:kv}); the most controlled evidence is same-vendor at fixed
recipe (Granite dense vs.\ hybrid $5.0\times$; Qwen3 vs.\ Qwen3-Next $10.7\times$). The
clustering is cross-sectional: independently designed hybrids land in a narrow band nothing
forces them into, stable since 2024 with per-cohort dispersion trending down
($\sigma = 0.148/0.119/0.067$) but not significantly (permutation $p\approx0.06$), so we
claim the concentration and treat the tightening as suggestive. Restricted to
production-frontier models (dropping distillation conversions and the on-device edge line by
the explicit rule of \texttt{experiments/census\_frontier.py}), the band tightens relative to
the full set: median $\rho = 0.125$, $\sigma = 0.109$ (vs.\ $0.135$), $62\%$ in band (vs.\
$57\%$), with recent flagships at $\rho \in \{{\approx}1/8,
{\approx}1/4\}$~\citep{lieber2024jamba,blakeman2025nemotronh,de2024griffin,minimax2025,granite2025,qwen3next2025,kimi2025linear}
(realized counts deviate from the design fractions by layer rounding, e.g.\
Kimi-Linear's headline $3{:}1$ is $7/27 = 0.259$).

\begin{figure}[t]
  \centering
  \includegraphics[width=0.82\linewidth]{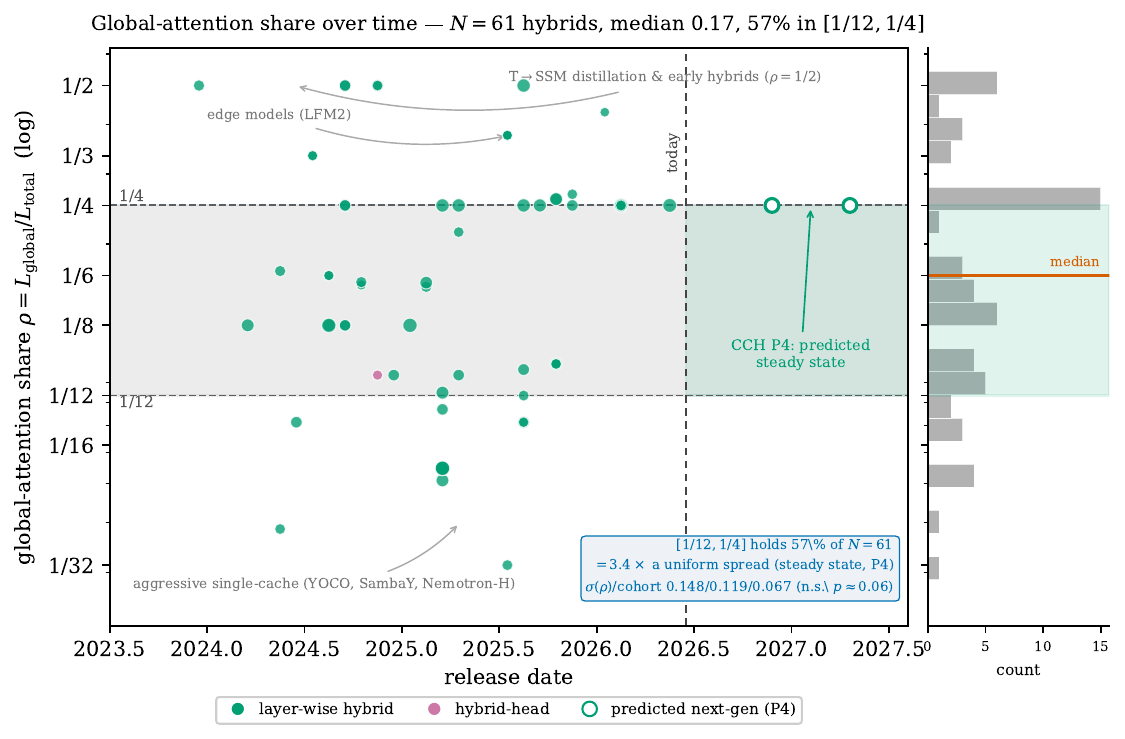}
  \caption{The global-attention share $\rho$ against release date ($N=61$ hybrids;
  source-verified). The band $[1/12,\,1/4]$ (shaded): median $0.167$, $57\%$ in band, a
  $3.4\times$ enrichment over uniform, behaving as a steady state (membership stable since
  2024, dispersion $0.148\!\to\!0.119\!\to\!0.067$ but $p\approx0.06$). Above-band outliers
  are distillation sweeps and edge models; below-band, single-shared-cache designs (YOCO,
  SambaY, Nemotron-H). Right of ``today'': the corridor of Prediction~\ref{pred:band}
  (open markers hypothesised).}
  \label{fig:rho}
\end{figure}

\paragraph{R2 (capability-fingerprint regularity).} Under a fixed budget, hybrid and dense
models are on par on short-context benchmarks (representation has converged), while
differences concentrate in verbatim long-range retrieval, state tracking, and many-shot
in-context learning~\citep{arora2023zoology,jelassi2024repeat}. A clean signal: distilling
a Transformer into an SSM leaves language nearly intact but collapses exact retrieval, and
restoring a small fraction of attention heads restores it; conversely, silencing the
attention layers of production hybrids collapses retrieval to zero while $15\%$ of heads
suffice~\citep{michalak2025someattention}, and mechanism analysis finds long-range
retrieval carried by full attention, the efficient layers shaping how fast retrieval heads
form~\citep{rethinking2026efficient}. What is missing is not representation but index, the
most direct empirical reading of the PRH/CCH
boundary~\citep{convergence2026understanding} (Figure~\ref{fig:comm}).

\paragraph{R3 (two-sided failure / compositional superiority regularity).} The evidence has
an Anna-Karenina structure: all access-complete models are alike, each pure architecture
fails in its own way. Aggressive linear variants collapse on high-order multi-hop reasoning
(cf.~\citealp{merrill2024illusion}); under matched recipes a hybrid is no weaker than full
attention across short-, long-context, and post-RL regimes~\citep{kimi2025linear}; an 8B
comparison at matched data finds the Mamba-2 hybrid matching or exceeding both pure families
while the pure SSM lags on in-context recall and copying~\citep{waleffe2024empirical}. The
pure SSM fails at recall, the pure Transformer at state, the projection of the three walls,
independently observed on synthetic retrieval~\citep{pantazopoulos2026retrievit}. The census
makes this quantitative (Figure~\ref{fig:capability}): every hybrid with a published RULER
number scores $84$--$95\%$ (Kimi-Linear $84.3$@128K, Jamba-1.5-Large $93.9$@256K, MiniMax-01
$94.7$, Qwen3-Next $91.8$), matching dense baselines (Qwen3-32B $93.7$, Llama-3.1-8B $77$),
whereas pure-compressive families collapse (xLSTM-7B ${\approx}\,20$) or publish no number
(consistent with the pattern, not a negative measurement). The Mamba2InLlama distillation is
the closest to a controlled comparison, a same-base correlation not a clean intervention: as
retained attention rises $0 \to \tfrac{1}{8} \to \tfrac{1}{4} \to \tfrac{1}{2}$, MT-Bench
rises ($5.64\!\to\!7.32$) and AlpacaEval-LC ($14.5\!\to\!26.8$, the $\rho{=}\tfrac{1}{2}$
hybrid exceeding the teacher's $22.9$). R2--R3 are conditional within $\Rgm$.

\begin{figure}[t]
  \centering
  \includegraphics[width=\linewidth]{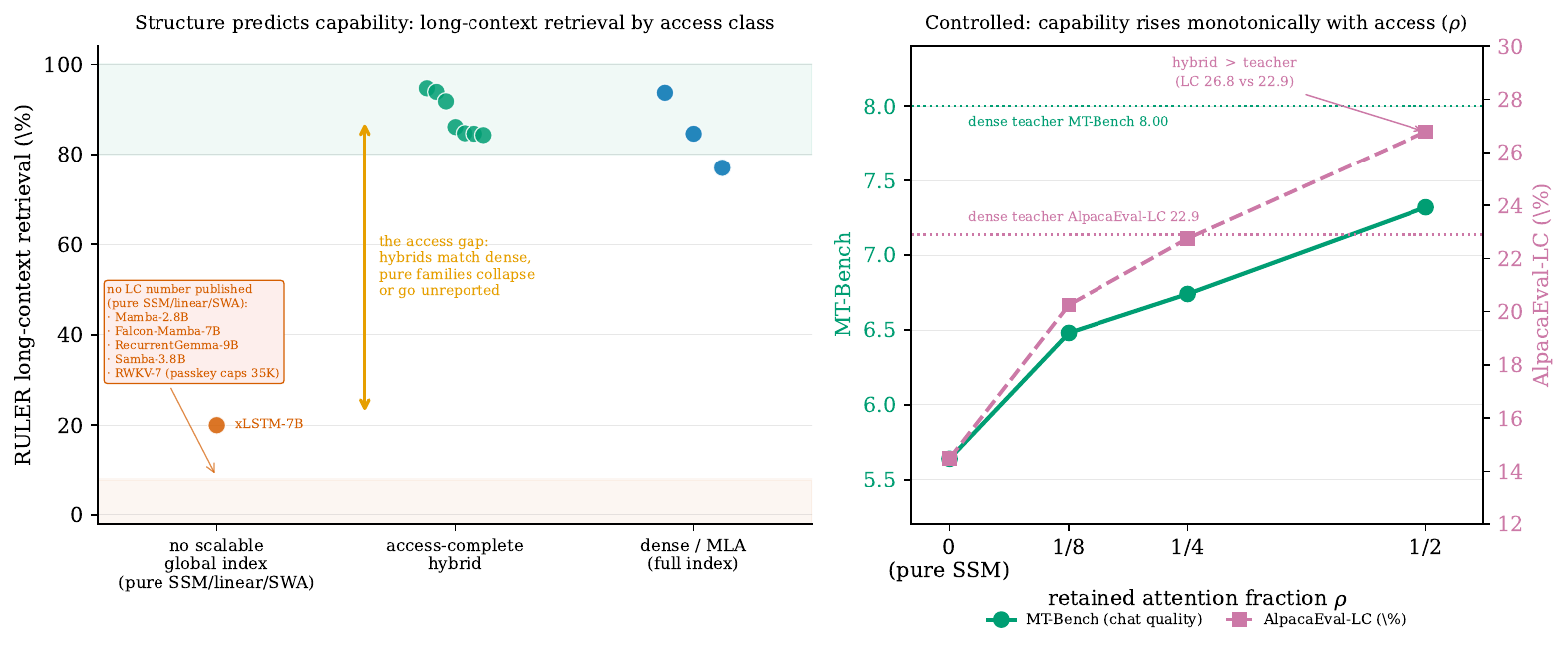}
  \caption{Published long-context retrieval stratifies by access class (the CCH analogue of
  PRH's Figure~4), the relation necessary-condition-shaped: a scalable index does not by
  itself predict the score, but no model without one posts a high score. \textbf{Left:}
  RULER retrieval by class; access-complete hybrids (green) match dense baselines (navy),
  no-index families (red) collapse (xLSTM-7B $20$) or publish no number (shown
  \emph{unmeasured}, a gap not a negative measurement; passkey-only and unreported models on
  the bottom rail). \textbf{Right:} the Mamba2InLlama staircase (fixed Llama-3-8B base,
  \citealp{wang2024mambainllama}): capability rises monotonically with retained $\rho$,
  exceeding the teacher on AlpacaEval-LC by $\rho{=}\tfrac{1}{2}$. Per-model data in
  Appendix~\ref{app:capdata}.}
  \label{fig:capability}
\end{figure}

\paragraph{R4 (form-independence regularity).} The forms realizing the hybrid are
diversifying while the access structure stays
isomorphic~\citep{lieber2024jamba,blakeman2025nemotronh,yang2024gla,sun2023retnet,peng2023rwkv,arora2024based}.
Across the census, at least sixteen distinct compression mechanisms (Mamba-1/2/3, Gated
DeltaNet and successors, Kimi Delta Attention, lightning/linear attention, RG-LRU,
RWKV-5/6/7, JetBlock, Hyena, short gated convolutions, MLA latent compression) appear across
seven-plus organizations, yet the frontier survivors all pair a compressive $O(1)$-state
channel with a scalable index (Figure~\ref{fig:sankey}: fourteen operators fan into four
outcomes, and Mamba-2, Gated DeltaNet, RWKV appear in both the access-complete column and
the degenerate corners, so structure, not operator, predicts membership). One qualification:
integration topology still matters within the class (sequential vs.\ parallel favor shorter
vs.\ longer contexts~\citep{lee2025understanding}), so access-completeness is a necessary
condition, not capability equivalence among realizations.

\begin{figure}[t]
  \centering
  \includegraphics[width=0.72\linewidth]{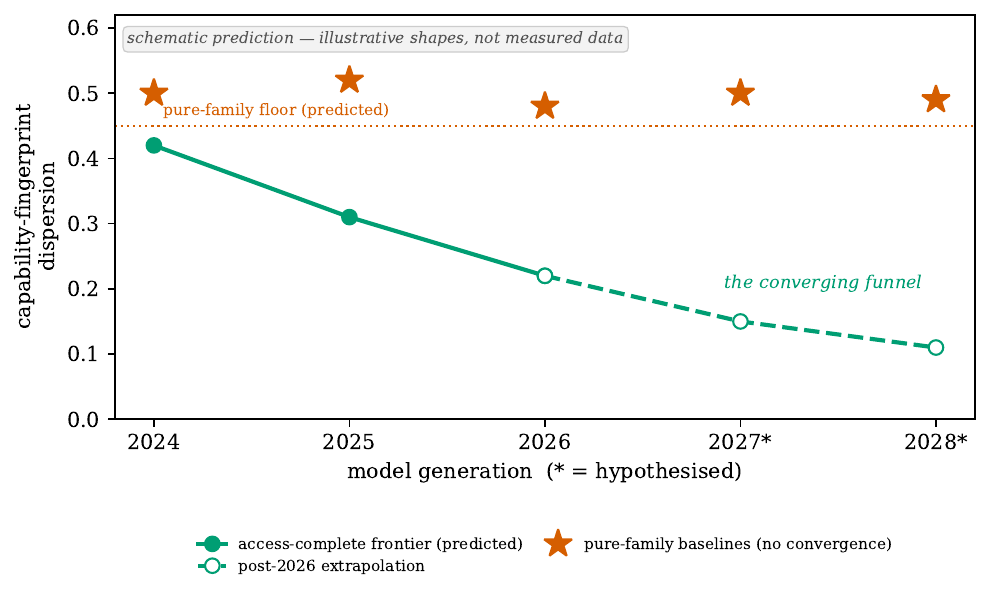}
  \caption{CCH's ontological prediction (Prediction~\ref{pred:fingerprint}): frontier
  access-complete models narrow on a capability fingerprint generation by generation, while
  pure-family baselines stay at a characteristic distance at any scale.
  \emph{Schematic, not measured.}}
  \label{fig:fp}
\end{figure}

These regularities sketch a picture consistent with (not demonstrating) an industry sliding
toward the access-complete class; ``class'' is deliberate, a design pattern multiply
realizable by distinct forms, not a single canonical architecture.

\paragraph{What R1--R4 do and do not settle.} R1--R4 are all compatible with a purely
economic account ($\Theta(L)$ KV is unservable, so the field migrates to whatever is
cheapest at acceptable quality), which does not distinguish ``the hybrid is strictly more
capable'' from ``hybrids are merely cheaper at equal capability.'' The migration is
therefore consistent-with evidence, not a discriminating test. What discriminates is
Predictions~\ref{pred:floor}--\ref{pred:comm}, above all the scissors gap and the
bifurcation: only there does strict superiority diverge from ``pure architectures are
already good enough.''


\subsection{Pre-registered small-scale tests}
\label{sec:experiments}

We tested P1--P3 (plus a model-level dissociation) under a protocol frozen before the data
existed: pass thresholds, one fitted constant, explicit falsification clauses, thirteen
timestamped amendments, no post-hoc exclusions. Everything runs inside $\Rgm$; trained
models are 0.2--1.1M parameters, public checkpoints $\le 3$B, so these are mechanism-level
tests (existence-level for the single-axis walls, training-reliability-level for the
conjunction), not frontier demonstrations. Full detail in
Appendix~\ref{app:experiments}; the scorecard is 11 supported, 7 partial, 1 failed of 19.

\textbf{P2, the bifurcation (Exp.~C; Figure~\ref{fig:nhgrid}).} Five of six predictions held
outright, including the starred one: $\beta\in(0,1)$ DeltaNet cannot even fit composed $S_5$
at 5$\times$ budget, while one reflection ($\beta\in(0,2)$) suffices for the swap stream at
$6.4\times$ training length. The $n_h$ ladder is a capacity result: only $n_h{=}4$ solves
the general stream at $T{=}256$ ($n_h\le 2$ never exceeds 0.022), but reliable trainability
fails an expanded-seed criterion (3/8), so we claim existence, not reliability.

\textbf{P1, the scissors (Exp.~B; Figure~\ref{fig:fano}).} Every pure-state error curve
tracks the $b(1-1/x)$ floor under a single fitted effective capacity ($p_{\mathrm{eff}}$,
consistent with the floor, not an independent verification of $\Hch(\Sigma)$), and the
hybrid holding the same 64-scalar state scores $0.994$ vs.\ $0.000$ at $N{=}128$. The
comparison holds the recurrent state fixed and \emph{adds} the index, so the hybrid's total
state is strictly larger, which is the point: the floor can only be satisfied by paying. The
collision control (amendment 8) pins the hybrid to the exact ambiguity floor $c/(c{+}1)$ at
zero code distance, so its advantage is purchased by separability; a window-only index is not
rescued ($0.995$), and 2-layer full attention is not an upper bound at this scale ($0.91$).

\textbf{The dissociation (Exp.~A; Figure~\ref{fig:dissoc}).} On Pythia/Mamba/RWKV (same
corpus and tokenizer, 70M--3B; observational, so we claim the pattern, not clean causal
attribution), PRH replicates while retrieval stratifies by access structure in all ten
matched-scale cells. The flagship cell (2.8B, $N{=}16$, $n{=}1000$) meets the frozen 30pp
margin ($32.0\pm3.3$pp); higher-$N$ and 1.4B cells fall below, so A-ii is partial with the
miss quantified. Alignment does not predict the gap (Spearman $+0.48$).

\textbf{P3, failed and re-operationalized (Exp.~D).} Commensurability failed, direction
reversed, across every control: cross-channel alignment \emph{decreases} with training and
success (random initialization, CKA $0.993$, is the ceiling). Reported as failed; the
measured story is channel differentiation, and P3$'$ (amendment 10) is a distinct follow-up,
not a retroactive rescue.

\textbf{The conjunction witness (Exp.~E; Figure~\ref{fig:witness}, Appendix~\ref{sec:expE}).}
A composite task writes
bindings [address $\to$ value], drives the referent by an $S_5$ stream, and queries the
value at the \emph{composed} referent (the primitive is due
to~\citealp{reasoning2026primitives}; ours is its pre-registered measurement). The witness
criterion is met as a \emph{training-reliability} separation: at the predicted cell
($T{=}256$, $N{=}16$) the recurrent-front hybrid passes on all seeds while no other arm
meets the criterion, but one pure-recurrence seed also solves it ($0.995$, the load being
far inside pure-state capacity), so existence-level separation would need supra-capacity
loads. There we measure the graded margin (hybrid's worst seed $>$ pure's best at $N{=}64$,
$0.706$ vs.\ $0.654$; $+0.53$ at the hardest cell). Arm-level: the attention-only control
never fits even in-length (nor at doubled budget, 9c, not a budget artifact); the
$\beta\in(0,1)$ hybrid loses both axes at once (the composed address couples them); the
window-only hybrid survives only where its state carries the load. Channel ablations (D-ii)
find joint necessity and exact value-side localization ($1.0$ vs.\ $0.004$); the frozen
differential sub-criteria failed because the conjunction couples the axes. A scope note:
CCH's necessity claim is computational, not mechanistic, and~\citet{rethinking2026efficient}
find efficient layers partly shape attention's retrieval heads, matching our bring-up
finding that the division of labour required a deeper recurrent front.

\begin{figure}[t]
  \centering
  \includegraphics[width=\linewidth]{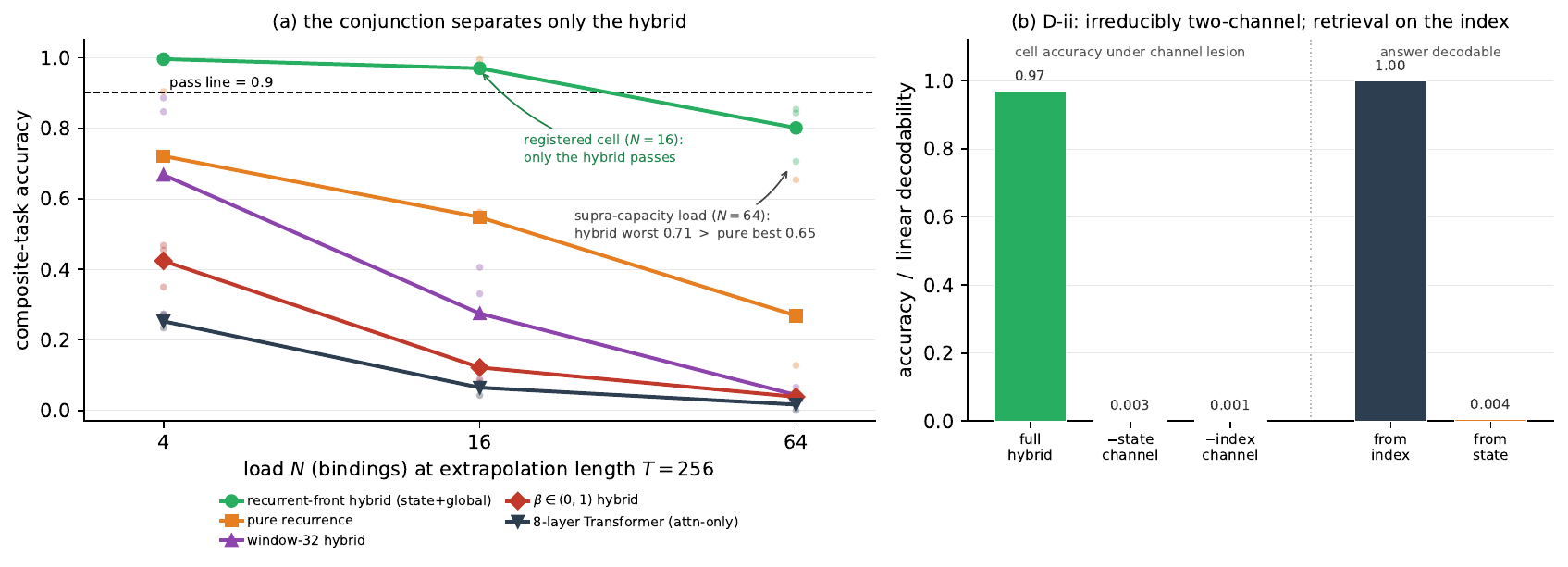}
  \caption{The conjunction witness, \emph{measured} (Experiment~E, Appendix~\ref{sec:expE};
  the composite [address$\to$value] $\times$ $S_5$ task, 3 seeds/cell). \textbf{(a)} Accuracy
  vs.\ load $N$ at the long-extrapolation length $T{=}256$ (mean lines, per-seed dots, pass
  line $0.9$): the recurrent-front hybrid (green) passes the registered cell ($N{=}16$) on
  every seed while no other arm meets the criterion; at the supra-capacity load $N{=}64$ the
  hybrid's worst seed ($0.706$) still exceeds the pure recurrence's best ($0.654$), a graded
  margin. This is a training-reliability separation, not an existence-level one (one
  pure-recurrence seed solves the $N{=}16$ cell). \textbf{(b)} D-ii channel complementarity
  (P3$'$): silencing either channel collapses the registered cell (the learned solution is
  irreducibly two-channel), and the answer is linearly decodable from the index mixer
  ($1.0$) but not the state mixer ($\approx0.004$): retrieval localizes to the index
  channel, as the mechanism predicts.}
  \label{fig:witness}
\end{figure}

\section{Discussion: Alternative Explanations}
\label{sec:alternatives}

The account above should be tested against its strongest rivals. We state four alternative
explanations in their best form, respond to each, and, for each one, say what evidence would
make it win.

\paragraph{AV1: better parameterizations, not more channels.} \emph{The DeltaProduct line
shows the state channel's expressivity is still improving; negative eigenvalues unlocked
state tracking; further parameterization advances will keep absorbing capabilities, and the
index channel is a crutch that engineering will outgrow.} Response: expressivity repairs the
circuit wall, not the Shannon wall. Capacity is a counting bound
(Theorem~\ref{thm:shannon}) that no parameterization of an $o(Nb)$ state can evade, and the
two gates are measured to act independently: expressivity gates tracking with capacity held
fixed (Experiment~C), capacity gates retrieval with expressivity held fixed (Experiment~B),
and the collision control shows that even a working global index buys nothing beyond what
the task's information structure licenses. AV1 wins if a pure-state architecture holds error
$\le 0.1$ at load ratio $x \ge 8$ with length generalization,
Prediction~\ref{pred:floor}'s falsification clause, open to anyone.

\paragraph{AV2: chain-of-thought and tools dissolve the walls.} \emph{Serial computation
escapes $\mathrm{TC}^0$~\citep{li2024chain}; retrieval tools externalize the index; the
industry ships both; a fixed-budget no-CoT regime is an artifact.} This is the strongest
alternative, and our response is scoping, not denial. First, $\Rgm$ is where per-token
economics live: CoT multiplies per-query cost by the reasoning length, and a tool call buys
the index back at latency and orchestration cost; the escape routes are real and priced.
Second, the walls relocate rather than dissolve: an agent's accumulated context is itself a
stream, the tool's store an index channel, so the question reappears one level up. Third,
the empirical picture with CoT is genuinely mixed, and we cite it as counter-evidence:
\citet{reasoning2026primitives} find no uniform hybrid advantage once reasoning tokens are
enabled, while their division-of-labour split (hybrid ``Think'' models more robust on
sequential state updates, Transformer ``Think'' on flat retrieval) is the part bearing on
the two-channel mechanism. AV2 wins if the witness family is solved at matched per-query
cost by a pure architecture plus CoT/tools, an economics-matched comparison we have not run
and flag as open.

\paragraph{AV3: it is economics all the way down.} \emph{R1--R4 are fully explained by
serving cost; hybrids are merely cheaper at equal capability; CCH's ``capability'' content
lives only in synthetic separator tasks.} Response: we agree about R1--R4 (the
identification limit is stated in Section~\ref{sec:regularities}), which is exactly why
CCH's bet is on Predictions~\ref{pred:floor}--\ref{pred:comm}, where the measured content
is a capability fact, not a cost fact ($0.994$ vs.\ $0.000$ on the same 64-scalar state).
Nor is the witness gerrymandered: it is the minimal conjunction of two loads routinely
present in long-horizon agentic work. AV3 wins if witness-type loads have vanishing measure
in natural workloads (Limitation~(iii)); the incidence measurement that would decide this
has not been run by anyone, including us (Appendix~\ref{sec:collnat} is a first step on the
separability half).

\paragraph{AV4: compress the index instead of pairing it.} \emph{MLA-style latent
compression (DeepSeek-V3) cheapens the index at $\rho{=}1$; perhaps the endgame is an
ever-cheaper pure-index architecture, no state channel.} Response:
Definition~\ref{def:profile} separates growth class from coefficient, and latent
compression improves only the coefficient (at fixed latent width MLA keeps $\Theta(L)$
index capacity, so Theorem~\ref{thm:shannon} binds it only under $o(L)$ growth); the census
shows MLA KV per token still an order of magnitude above layer-wise hybrids'. Meanwhile a
pure-index stack holds no $O(1)$-composable state, so the circuit wall stands, and
compression aggressive enough to reach $o(L)$ meets Theorem~\ref{thm:shannon} head-on,
surrendering exactness at high load, i.e.\ becoming a state channel by another name. AV4
wins if a $\rho{=}1$ architecture with sublinear servable-state growth passes both axes
with length generalization, refuting the channel dichotomy.

\section{Predictions, Limitations, and Implications}
\label{sec:predictions}

A hypothesis earns its standing by being falsifiable. Figures~\ref{fig:fano}
and~\ref{fig:nhgrid} now show measured results; Figure~\ref{fig:comm} remains schematic (its
test failed, direction reversed) and Figure~\ref{fig:fp} untested, in contrast to the
source-verified census figures.

\subsection{Falsifiable predictions}

\begin{enumerate}[leftmargin=1.8em,itemsep=3pt,label=\textbf{P\arabic*.}]
\item\label{pred:floor} \textbf{The information floor.} On Newton's-apple tasks, pure
architectures' error rises along the floor~\eqref{eq:floor} with load $x_s=Nb/B_s$ and
saturates, while access-complete hybrids (carrying the $\Omega(Nb)$ load on a separate index)
hug zero (Figure~\ref{fig:fano}), the decisive divide from ``pure architectures are good
enough.'' The falsifier is stated in \emph{nominal} terms so it cannot be evaded by
refitting an effective capacity: if any architecture with $o(Nb)$ nominal query-time state
and no scalable index maintains low error at high nominal load with length generalization,
CCH's characterization fails.
\item\label{pred:bifurcation} \textbf{State-tracking bifurcation.} A hybrid's $S_5$-tracking
should depend on state-channel expressivity (reflections per token and eigenvalue range:
$\beta\in(0,1)$ no reflection vs $\beta\in(0,2)$ with)~\citep{grazzi2025deltaproduct}: the
grid (Figure~\ref{fig:nhgrid}) bifurcates sharply, expressivity-poor hybrids failing $S_5$
extrapolation, sufficient ones succeeding.
\item\label{pred:comm} \textbf{Channel commensurability.} In solving hybrids, cross-channel
alignment should exceed that of failing hybrids and correlate with success
(Figure~\ref{fig:comm}). \emph{Tested and failed} (Appendix~\ref{sec:expD}), direction
reversed. The re-operationalization P3$'$ (amendment 10) finds joint necessity and exact
value-side localization ($1.0$ vs.\ $0.004$), the differential sub-criteria failing because
the conjunction couples the axes. Verdict: P3 failed; P3$'$ partially established, not a
retroactive rescue.
\item\label{pred:band} \textbf{Steady state of the share band.} If R1's band is set by
inference-economics optimization, it should be logarithmically insensitive to cost
parameters: a steady state, not a transient (Figure~\ref{fig:rho}). A long-term stable
mainstream model outside the band would falsify the economic parameterization.
\item\label{pred:fingerprint} \textbf{Fingerprint narrowing.} The dispersion of frontier
access-complete models on a capability fingerprint should narrow generation by generation
(Figure~\ref{fig:fp}); pure-family models should remain at a characteristic distance at any
scale. A non-converging control metric should be tracked to guard against confirmation bias.
\end{enumerate}

The attractor claim also has an aggregate falsifier: a production-frontier generation in
which pure-family models close the retrieval and state-tracking gap with access-complete
hybrids at matched serving cost (R1's band dissolving, R3's stratification flattening)
would falsify CCH as a whole.

\subsection{Postscript (added in revision): two frontier releases after registration}
\label{sec:postreg}

Between the first public version of this paper (2026-07-14; census frozen 2026-05) and
this revision, two frontier flagships appeared. They bear directly on
Prediction~\ref{pred:band} (the capability-fingerprint counterpart,
Prediction~\ref{pred:fingerprint}, awaits measurement), and we report them here rather
than inside R1: every registered census statistic in this paper is unchanged, and the new
rows ship separately, marked post-registration
(\texttt{figs\_en/census\_data\_postreg.json}, audited by
\texttt{experiments/census\_postreg.py}).

\textbf{Kimi-K3~\citep{kimi2026k3}.} The largest open-weight release as of this revision
(announced 2026-07-16; weights and configuration public 2026-07-26): a
2.8T-parameter LatentMoE flagship (104B activated, 16 of 896 experts) with a 1M-token
window. Its access structure, read from the released configuration: 93 layers, 69 Kimi
Delta Attention linear layers carrying the $O(1)$ state and 24 gated-MLA layers carrying
the global index, $\rho = 24/93 = 0.258$, the same every-fourth-plus-final-layer pattern
as Kimi-Linear-48B ($7/27 = 0.259$), marginally above the literal band edge and at the
2026 modal value $\rho\approx 1/4$. The MLA layers are NoPE with a 512-dimensional cached
latent, so the $\Theta(L)$ cache is $22.9$\,GiB per 1M tokens (bf16), equal to
Qwen3-Next-80B's and ${\sim}13\times$ under dense Llama-3.1-70B. Three readings. The
aggregate falsifier did not fire: the flagship did not abandon the index channel. The
share attractor held out of sample: a distinct operator pair lands on the same structure at
the same share, the same vendor's design scaled $58\times$ in total parameters with the
access structure unchanged, and the 2026 cohort dispersion tightens from $0.067$ to
$0.062$ with K3 added ($n{=}7$, median still $1/4$). And the 1M-token window is served at
$\rho\ll 1$, the horizon economics of R1 at production scale: the conjunction of
Section~\ref{sec:newton}, compress what has structure and index what does not, deployed
as a product. Two scope notes the release sharpens: extreme expert sparsity compresses
the parameter axis, not the context axis, and is orthogonal to the walls; and K3's
attention residuals, layers reading earlier layers' residual streams through an attention
score, open an access channel along \emph{depth}, an axis our theory does not cover and
we mark as open.

\textbf{Qwen3.8-Max~\citep{qwen2026max}.} A 2.4T-parameter MoE flagship previewed
2026-07-19 with its attention layout undisclosed at this revision. We register a forward
prediction before disclosure, in two parts so the falsification boundary is unambiguous.
Primary: the published architecture will hold a scalable global index channel at minority
share ($0 < \rho < 1/2$); a pure-state or SWA-only disclosure at this scale would falsify
Prediction~\ref{pred:band} at the frontier. Sharp: $\rho$ will lie in $[1/12, 1/4]$ up to
the final-layer counting convention that places K3 at $0.258$ (that is,
$\rho \le 1/4 + 1/L_{\text{total}}$; the registered $57\%$/$62\%$ statistics keep the
literal closed interval, and this tolerance defines only the boundary of this prediction),
with point value $\rho \approx 1/4$, the modal share
of its lineage (Qwen3-Next and the Qwen3.5 family, gated DeltaNet hybrids at exactly
$1/4$). The primary part is the attractor claim; the sharp part is what the 2026 cohort
makes most probable.

\begin{figure}[!t]
  \centering
  \includegraphics[width=0.88\linewidth]{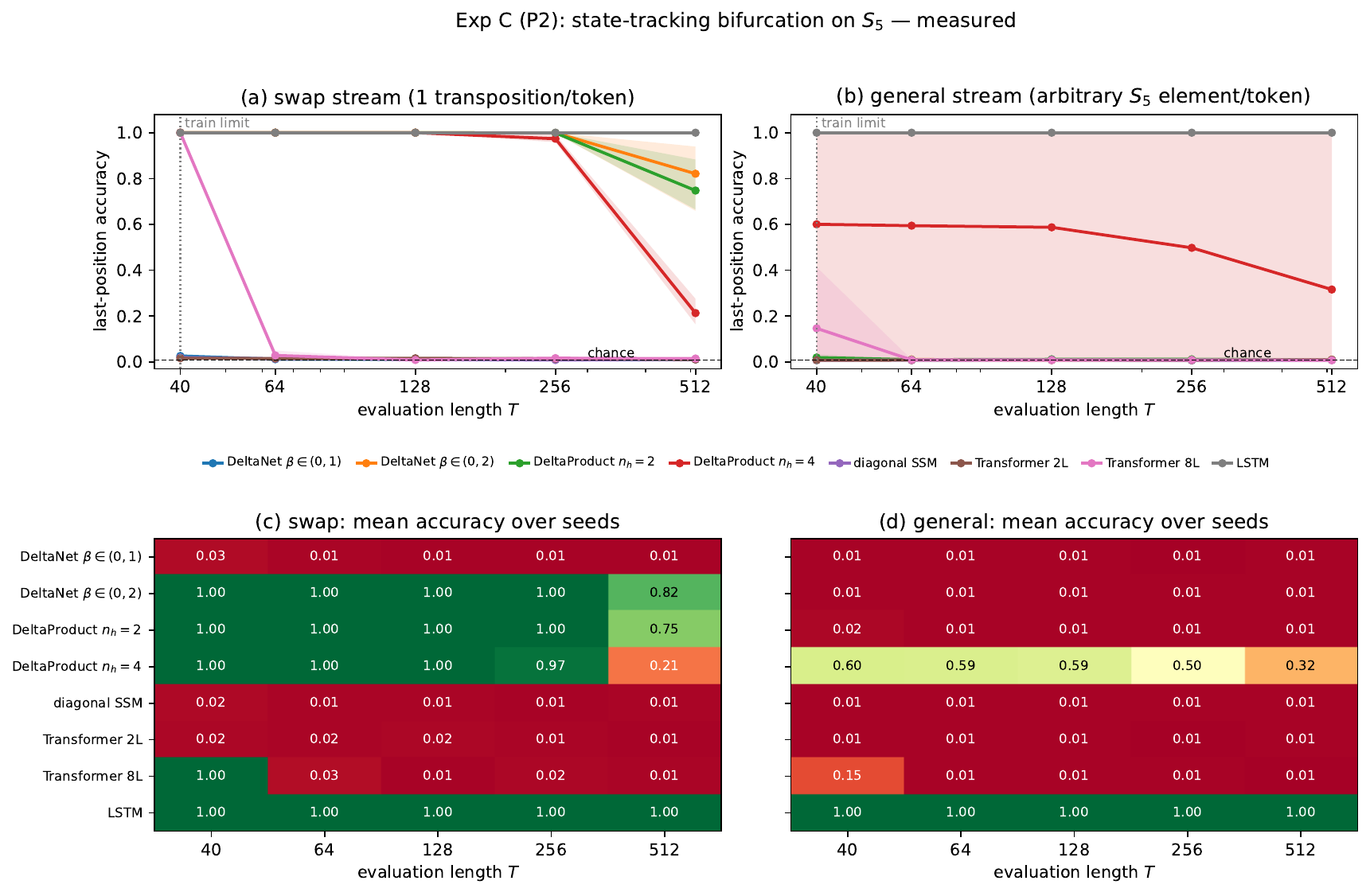}
  \caption{The state-tracking bifurcation, \emph{measured} (Experiment~C,
  Appendix~\ref{sec:expC}: running $S_5$ products, trained at $T\le 40$; 3 seeds per cell,
  8 for $n_h{=}4$ general; bands span seeds). \textbf{(a,b)} Accuracy vs.\ length on the
  swap and general streams. \textbf{(c,d)} The measured grid: the bifurcation appears at
  the pre-registered boundary for this configuration. $\beta\in(0,1)$ ($*$, the sharpest
  claim) cannot even fit the training distribution; one reflection ($\beta\in(0,2)$)
  suffices for swap at $6.4\times$ training length; the general stream is solved only at
  $n_h{=}4$ (3/8 seeds at $T{=}256$, a capacity rather than trainability result; one seed
  reaches $1.0$ at $T{=}512$); diagonal SSM and both Transformer controls fail
  extrapolation (the 8-layer, $4\times$-parameter Transformer hits $1.0$ in-length and
  collapses to $0.03$ at $T{=}64$); the LSTM passes everything.}
  \label{fig:nhgrid}
\end{figure}

\subsection{Limitations}
\label{sec:limitations}

\paragraph{(i) Regime boundary.} CCH is confined to the fixed-budget, no-CoT System-1
regime; chain-of-thought and tools are orthogonal escape axes trading time for
space~\citep{li2024chain}. CCH describes the limit of fast thinking, not all regimes.

\paragraph{(ii) Conditionality.} The circuit wall is conditional on
$\mathrm{TC}^0\neq\mathrm{NC}^1$, believed by nearly all but proven by none; we back it with
the unconditional Shannon wall, so even if the circuit wall fails, pure architectures remain
barred under a fixed budget.

\paragraph{(iii) Naturalness of the witness: the largest open risk.} The largest
uncertainty is not whether the separation exists but how often natural workloads
instantiate the joint load: Newton's apple is a worst-case construction, and $S_5$-level
composition is not frequent in everyday language. CCH's constraining force has a tail-risk
shape (the cumulative reference/world-state update of long agentic dialogue is
low-frequency but high-harm); we claim such tasks exist and are gated by architecture, not
that they have large measure. The same openness applies to Assumption~\ref{ass:sep}, on
which a first surface-form measurement (Appendix~\ref{sec:collnat}) now puts numbers:
entity-like keys are largely separable ($87\%$ at document scale) while numeric IDs
($51\%$) and code identifiers ($56\%$) are not, so the assumption is plausible for
entity-style loads and materially strained for morphologically regular keys; the
embedding-level measurement remains open.

\paragraph{(iv) From expressivity to learnability.} The construction is an existence
argument, not a learnability argument. Section~\ref{sec:experiments} bridges the gap at
small scale (gradient descent finds the hybrid solution; the bifurcation lands at the
registered boundary), but only empirically: gradient descent may find a functionally
equivalent manifold rather than the construction itself, and the bridge at frontier scale
is open.

\paragraph{(v) Scale extrapolation.} CCH's skeleton is scale-independent by design
(difficulty is set by load ratio, not absolute scale), but small-model long-context
behavior may differ from frontier-scale~\citep{hsieh2024ruler}, so scale-universality is an
extrapolation until large-scale reproduction. Two scale-dependent escape routes: a
frontier Transformer might approximate a state channel through extreme depth or sparse
routing (leaving $\Rgm$ in effect), and the separable codes Assumption~\ref{ass:sep}
requires might emerge only under particular data mixes; whether gradient descent at scale
induces either is open.

\subsection{Implications for measurement and reporting}
\label{sec:action}

If the account is right, several measurement and reporting practices should change. We give
concrete recommendations for four audiences.

\paragraph{For benchmark designers.} Treat access structure as a first-class reported
variable: (i) report each model's access class (pure-compressive / local-index-only /
access-complete / pure-index) alongside parameter count and context length; (ii) include a
fixed-budget, no-chain-of-thought track, the regime where the walls bind; (iii) include at
least one \emph{conjunction} task (verbatim retrieval composed with state tracking), since
that is exactly where pure families separate while single-axis scores look interchangeable.

\paragraph{For model developers.} Report the global-attention share $\rho$ and the
servable-memory coefficient $\mu$ (Eq.~\eqref{eq:kv}) in model cards. Our census had to
reverse-engineer both; they are two numbers, they determine serving economics, and if CCH
is wrong they are what a refutation needs.

\paragraph{For empiricists.} Run controlled same-data architecture comparisons at
increasing scale (in the style of~\citealp{waleffe2024empirical}) with the conjunction in
the evaluation set, pre-registered where possible; this paper's protocol is offered as a
template. The three open measurements we most want run, preferably adversarially: the
corpus-incidence measurement of witness-type loads (AV3); the economics-matched CoT/tools
comparison (AV2); and an equal-total-query-time-memory hybrid vs.\ full-attention
comparison with a control that actually trains (our 2-layer control did not, so
Experiment~B lacks a working pure-index upper bound).

\paragraph{For theorists.} The naturalness gap is the load-bearing open problem: when does
natural text supply Assumption~\ref{ass:sep}'s write-time separability? Concretely, estimate
the fraction of retrieval-relevant bindings whose keys fall within an $\varepsilon$-ball of
another's in the write-time embedding; CCH predicts hybrid advantage tracks the separable
fraction. We run the surface-form version ourselves (Appendix~\ref{sec:collnat}); the
embedding-level one is open.

\paragraph{The reporting checklist.} Concretely, the fields we ask to see reported:

\begin{table}[h]
\centering
\small
\begin{tabular}{@{}ll@{}}
\toprule
\textbf{Reported quantity} & \textbf{Meaning} \\
\midrule
$B_{\mathrm{state}}$ & compressive state capacity (scalars $\times$ precision) \\
$\mu$ & servable memory, bytes per token at query time (Eq.~\eqref{eq:kv}) \\
$\rho$ & global-index share $L_{\mathrm{global}}/L_{\mathrm{total}}$ \\
$\beta_q$ & query-time read bandwidth \\
access class & pure-compressive / local-index / access-complete / pure-index \\
fixed-budget score & long-context capability, no CoT, no tools \\
conjunction score & retrieval $\times$ state-tracking witness performance \\
\bottomrule
\end{tabular}
\end{table}

\section{Conclusion}
\label{sec:conclusion}

PRH proposed where representations go: toward the same $Z$. CCH proposes where capability
goes, and names its price: the index channel is the cost, compositional emergence the
barrier against freeloading it by scale, and the access-complete class the attractor, a
design pattern rather than a single architecture. The first convergence is given freely by
scale because it consumes no query-time access capacity (paid, of course, in data and
training compute); the second must be purchased by access structure, constrained by
Shannon's ceiling~\citep{shannon1948mathematical} and circuit
depth~\citep{merrill2023parallelism} and driven by inference economics. Newton's apple is
the joint requirement made vivid: maintaining the direction of a long-range question while
locking onto one distant binding at query time. Our conjecture is that machine intelligence
is converging along this axis: beyond representational convergence, capability convergence
is next. Stated as practice rather than prophecy: access structure should become a
first-class variable in how this community evaluates and designs long-context models
(Section~\ref{sec:action}), and
Predictions~\ref{pred:floor}--\ref{pred:fingerprint} are the terms on which CCH can be
dismissed.

\paragraph{Statement.} The theory's predictions have met their first
pre-registered tests at small scale. P1 is supported in two senses kept separate: the
scissors gap is measured outright, while the floor-shape collapse holds under one fitted
effective capacity, consistent with but not an independent verification of
Theorem~\ref{thm:shannon} at nominal capacity. P2 is supported as an expressivity boundary
in the registered configuration, with optimization brittleness disclosed. The conjunction
witness meets its criterion as a training-reliability separation; the joint-necessity claim
of Proposition~\ref{prop:nonpreserve} is thus supported existentially only through its two
single-axis projections, its end-to-end existence-level form open at supra-capacity loads.
P3 failed and is reported as failed; P3$'$ is partially established. We claim exactly that
much: mechanism-level evidence at $0.2$--$1.1$M trained parameters and $\le 3$B public
checkpoints, under frozen criteria with all misses disclosed, not confirmation at frontier
scale (Limitation~(v)). Industrial data are public or vendor-reported; census points later
than the date of writing are marked as extrapolations.

\appendix
\section{Explicit Construction for the Hybrid Crossing Newton's Apple}
\label{app:construction}

\begin{figure}[t]
  \centering
  \includegraphics[width=0.82\linewidth]{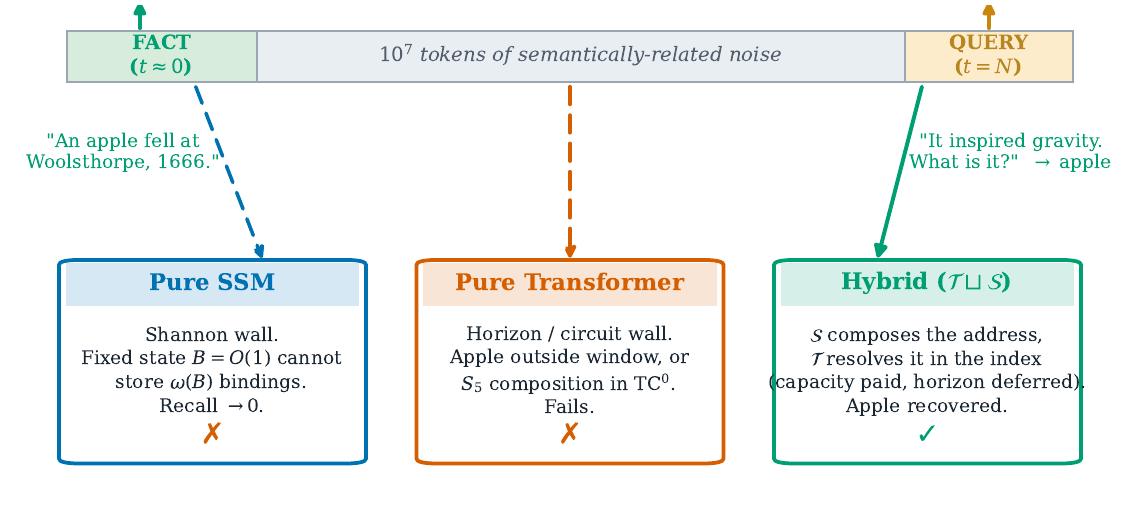}
  \caption{The Newton's-apple thought experiment in an infinite stream. A single fact planted
  at $t\approx 0$ must survive $10^7$ tokens of semantically related noise to be retrieved at
  the query $t=N$. A pure SSM is barred by the Shannon wall (Theorem~\ref{thm:shannon}); a
  budgeted Transformer by the horizon wall; an idealized Transformer by the circuit wall
  (Proposition~\ref{prop:circuit}). Only the access-complete hybrid crosses, and it does so
  by paying: the state channel composes the address, and a $\Theta(\rho L)$ index resolves it
  under Assumption~\ref{ass:sep}; the horizon is thereby deferred by $1/\rho$, not eliminated
  (Section~\ref{sec:cross}).}
  \label{fig:newton}
\end{figure}

This appendix gives the checkable-grade construction underlying Section~\ref{sec:cross}.

\paragraph{A.1 State encoding.} $r_t \in S_5$ is a $5\times 5$ permutation matrix $P_t$,
embedded in $d_{\mathrm{state}} = 25 + d_{\mathrm{code}}$ dimensions ($\mathrm{vec}(P_t)$
plus a transient anchor-codeword buffer used only while writing the current binding, so the
persistent state is $P_t$ alone); $P_0 = I$.

\paragraph{A.2 Per-token update.} ``Transpose $(i,j)$'' maps to a Householder reflection
$H_{ij} = I - 2uu^\top$, $u = (e_i-e_j)/\sqrt{2}$ (orthogonal, eigenvalue $-1$, hence
$\beta\in(0,2)$). A composite token ($\le 4$ transpositions) fills $n_h=4$ slots:
$P_t = P_{t-1}\prod_{k\le 4} H_k$, one DeltaProduct
step~\citep{siems2025deltaproduct}, explaining the staircase of Figure~\ref{fig:nhgrid}.

\paragraph{A.3 Discrete self-correction (the key to finite precision).} Each step is
followed by a snap nonlinearity (row-argmax re-binarization); the permutation set is its
attracting fixed-point set, single-step rounding error $<1/2$ spacing is absorbed, and any
constant precision suffices, bypassing $L\cdot 2^{-p}$ accumulation. One caveat: exact
row-argmax is not a standard DeltaProduct operation, so the construction's family is
``DeltaProduct blocks plus a pointwise saturating nonlinearity.'' For an approximate snap,
single-step accuracy alone does not give all-$L$ accuracy; what suffices is a contraction
toward the finite permutation set: if for some $\lambda<1$ the map sends each permutation's
$\varepsilon$-neighborhood into its $(\lambda\varepsilon+\delta)$-neighborhood with
$\delta<(1-\lambda)/4$, then $\|e_{t+1}\|\le\lambda\|e_t\|+\delta$ stays below half the
spacing uniformly in $L$; on the finite compact reachable set a temperature-sharpened
softmax realizes such a contraction (existence, not an explicit construction).
(Empirically moot at the scales tested: the trained models of Appendix~\ref{sec:expE} track
$S_5$ and extrapolate without any explicit snap.)

\paragraph{A.4 Anchor write and retrieval.} A binding is written as a Hopfield pattern
$\xi = [\mathrm{code}(\mathrm{key}) \oplus \mathrm{tag}(r) \oplus \mathrm{value}]$ with a
minimum-distance code ($d_{\mathrm{code}} = O(\log N + b)$); distractors keep spacing
$\delta_\kappa$ from codewords by Assumption~\ref{ass:sep}. The query
$q_L = W_q[\mathrm{vec}(P_L)\oplus\mathrm{probe}]$, with $\mathrm{probe}$ the queried key's
codeword and $\mathrm{tag}(r)$ a linear readout of $\mathrm{vec}(P_L)$, matches the unique
pattern agreeing on both key code and composed-referent tag; the value block is
code-orthogonal, contributing no cross-talk, and the margin $\Delta$ is inherited from
$\delta_\kappa$. Hopfield single-step error is $\le M e^{-\beta_H \Delta}$~\citep{ramsauer2021hopfield}
with prefactor $M = \Theta(\rho L)$ counting all stored patterns (not $N$); choosing
$\beta_H = O((\log L + \log\delta^{-1})/\Delta)$ gives per-query error $\le \delta/2$, so
the required score precision grows as $O(\log L)$, exactly the budget $\Rgm$ allows.

\paragraph{A.5 Resource accounting.} Memory $= O(d_{\mathrm{state}}^2) + \rho L d$. Compute
separates by channel (Definition~\ref{def:profile}): state update $O(d^2)$ per token
(independent of $L$), index append $O(d)$ per token, but the index read scans
$\Theta(\rho L)$ patterns at $\Theta(\rho L d)$ per query, so total serving cost has no
$O(1)$-per-token bound; $O(1)$ is the recurrent update, and $\rho$ is the reduced
coefficient. The hybrid realizes Newton's apple at $V \ge 1-\delta$ with memory
$O(d_{\mathrm{state}} + \rho L d)$, in the reduced-coefficient region of
Proposition~\ref{prop:minimal} (a smaller constant, not growth class; no optimality
claimed).

\paragraph{A.6 What is and is not $O(1)$.} $d_{\mathrm{code}} = O(\log N + b)$ does not
contradict ``$O(1)$ state'': the state is $O(1)$ \emph{in $L$} (a fixed number of
Householder products plus a snap, not growing with the stream), and its width scales with
task parameters ($O(\log N + b)$) but with no part of the extensive $\Theta(Nb)$ memory,
which the index bears. So ``$O(1)$ state'' means $O(1)$ per-token update with
$L$-independent width, and it is that $L$-independence that makes the Shannon wall bite.
The intended limit fixes $(N,b)$ and lets $L\to\infty$; coupling $N=N(L)$ grows the control
width as $O(\log N(L)+b)$, subextensive and harmless since $\Sch$ is defined by total state
$o(Nb)$. Experiments vary $N$ at fixed $L$, probing the load axis directly.

\section{Galois Closure Axioms, $\Solv$ Semantics, and Deferred Proofs}
\label{app:galois}

\subsection{The capability closure, via a Galois connection}
\paragraph{Notation.} Architecture families live in a universe $\mathfrak{U}$
($\mathfrak{a},\mathfrak{b}$; $\Tch,\Sch,\Hch$ reserved for the Transformer, SSM, and
hybrid families); task families live in $\mathcal{T\!sk}$ ($\tau$;
$\mathcal{Q}\subseteq\mathcal{T\!sk}$); no symbol denotes both.

Fix regime $\Rgm$: the resource profile of Definition~\ref{def:profile} held fixed as a
class. One clarification the shorthand can obscure: the budget pins the \emph{serving}
profile (state-update compute, memory coefficient, bandwidth), not the query-time read
compute of an index channel, which necessarily scales as $\Theta(\rho L)$ with the span it
indexes; demanding $O(1)$ read compute of every channel would exclude every access-complete
architecture by fiat and collapse CCH to the Shannon wall alone. What distinguishes
architectures inside $\Rgm$ is which memory-growth class and coefficient they pay for that
read. Define the \emph{solvability relation}:
$(\mathfrak{a},\tau) \in \Solv_{\Rgm}$ iff there exists a model sequence in $\mathfrak{a}$
reaching accuracy $\ge \tfrac{2}{3}$ on $\tau$ under $\Rgm$ with length generalization. This
relation induces a Galois connection between the powerset lattices
$(2^{\mathfrak{U}},\subseteq)$ and $(2^{\mathcal{T\!sk}},\subseteq)$ via the two polar maps
\begin{equation}
F(A) = \{\tau : \forall\, \mathfrak{a}\in A,\ (\mathfrak{a},\tau)\in\Solv_{\Rgm}\}, \qquad
G(\mathcal{Q}) = \{\mathfrak{a} : \forall\, \tau\in \mathcal{Q},\ (\mathfrak{a},\tau)\in\Solv_{\Rgm}\}.
\label{eq:polar}
\end{equation}

\begin{lemma}[Galois connection and closure]
\label{lem:galois}
The pair $(F,G)$ is a Galois connection: $\mathcal{Q} \subseteq F(A) \iff A \subseteq G(\mathcal{Q})$.
Consequently $\Capb := F \circ G$ is a \emph{closure operator} on $2^{\mathcal{T\!sk}}$,
satisfying \textup{(extensivity)} $\mathcal{Q} \subseteq \Capb(\mathcal{Q})$; \textup{(monotonicity)}
$\mathcal{Q}_1 \subseteq \mathcal{Q}_2 \Rightarrow \Capb(\mathcal{Q}_1) \subseteq \Capb(\mathcal{Q}_2)$;
\textup{(idempotence)} $\Capb(\Capb(\mathcal{Q})) = \Capb(\mathcal{Q})$.
\end{lemma}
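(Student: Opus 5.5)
The argument is pure order theory: I will not use any property of $\Solv_{\Rgm}$ beyond its being a binary relation between $\mathfrak{U}$ and $\mathcal{T\!sk}$. I first prove the Galois equivalence and then derive the three closure axioms from it by the standard route.

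\emph{Step 1: the Galois equivalence.} Unfold both sides of $\mathcal{Q}\subseteq F(A)\iff A\subseteq G(\mathcal{Q})$ using \eqref{eq:polar}. The left side says: for every $\tau\in\mathcal{Q}$ and every $\mathfrak{a}\in A$, $(\mathfrak{a},\tau)\in\Solv_{\Rgm}$. The right side says: for every $\mathfrak{a}\in A$ and every $\tau\in\mathcal{Q}$, the same membership holds. These are the same universally quantified statement over $A\times\mathcal{Q}$, with the quantifiers swapped, so the equivalence holds. This is the antitone Galois connection (a polarity) of Birkhoff's construction: both $F$ and $G$ reverse inclusion. Enlarging $A$ adds constraints to $F(A)$ and so shrinks it, and the same reasoning applies to $G$.

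\emph{Step 2: the unit inequalities.} Take $A=G(\mathcal{Q})$ in Step 1. The trivial inclusion $G(\mathcal{Q})\subseteq G(\mathcal{Q})$ gives $\mathcal{Q}\subseteq F(G(\mathcal{Q}))=\Capb(\mathcal{Q})$, which is \emph{extensivity}. Symmetrically, taking $\mathcal{Q}=F(A)$ gives $A\subseteq G(F(A))$.

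\emph{Step 3: monotonicity.} Since $F$ and $G$ are each antitone, their composite is monotone. If $\mathcal{Q}_1\subseteq\mathcal{Q}_2$, then $G(\mathcal{Q}_2)\subseteq G(\mathcal{Q}_1)$, and applying $F$ gives $F(G(\mathcal{Q}_1))\subseteq F(G(\mathcal{Q}_2))$.

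\emph{Step 4: idempotence.} I first show $GFG=G$.
\begin{itemize}
\item For $\supseteq$: apply the second unit inequality of Step 2 with $A=G(\mathcal{Q})$.
\item For $\subseteq$: extensivity gives $\mathcal{Q}\subseteq FG(\mathcal{Q})$, and applying the antitone map $G$ yields $G(FG(\mathcal{Q}))\subseteq G(\mathcal{Q})$.
\end{itemize}
Applying $F$ to $GFG=G$ gives $FGFG=FG$, that is, $\Capb\circ\Capb=\Capb$.

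The only step that needs care is Step 4, and only in keeping track of which direction each antitone map reverses. There is also a presentational point. The lemma is stated with the orientation $\mathcal{Q}\subseteq F(A)\iff A\subseteq G(\mathcal{Q})$, and both polar maps reverse order. So I will state explicitly that this is the antitone (polarity) form of a Galois connection, not the covariant adjunction form. With that stated, the closure axioms follow exactly as above.
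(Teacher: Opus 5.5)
Your proof is correct and takes the paper's route. Step~1 is exactly the paper's argument: both sides of the biconditional unfold to the single statement $\forall\mathfrak{a}\in A,\ \forall\tau\in\mathcal{Q},\ (\mathfrak{a},\tau)\in\Solv_{\Rgm}$, and your Steps~2--4 derive explicitly the standard closure properties of $F\circ G$ that the paper only cites as holding for any Galois connection (your note that this is the antitone, polarity form is accurate).
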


\begin{proof}
Immediate from~\eqref{eq:polar}, since both sides of the biconditional assert
$\forall\mathfrak{a}\in A,\forall\tau\in\mathcal{Q},(\mathfrak{a},\tau)\in\Solv_{\Rgm}$; the
closure axioms are the standard properties of $F\circ G$ for any Galois
connection~\citep{ganter1999formal}. Full details and $\Solv$ semantics are given below.
\end{proof}

We write $\Cinf(\mathfrak{a}) := F(\{\mathfrak{a}\})$ for a single family's capability
closure: the tasks it eventually solves under unbounded scaling within $\Rgm$.
Lemma~\ref{lem:galois} is what licenses calling $\Cinf$ a closure (an extensive idempotent
map of a Galois connection), not a metaphor.

\paragraph{Deferred proof of Theorem~\ref{thm:shannon}.}
\begin{proof}[Proof sketch]
The bound is a data-processing inequality, not a communication-complexity result. Because the
bindings are independent, $I(V_I;I)=0$, so $I(V_I;\Sigma,I)=I(V_I;\Sigma\mid I)$; and
super-additivity of mutual information over independent sources (equivalently, sub-additivity
of conditional entropy) gives
$\sum_i I(V_i;\Sigma) \le I(V_{1:N};\Sigma) \le \Hch(\Sigma)\le B$. Averaging over the uniform
index, $I(V_I;\Sigma\mid I)=\tfrac1N\sum_i I(V_i;\Sigma)\le B/N$, which is~\eqref{eq:perelem}.
Since $\Hch(V_I\mid I)=b$, we obtain $\Hch(V_I\mid\Sigma,I)=b-I(V_I;\Sigma,I)\ge b-B/N=b(1-1/x)$,
which is~\eqref{eq:floor}. The log-loss floor is Gibbs' inequality; the error floor is Fano.
\end{proof}

\begin{remark}[Worst-case strengthening via \textsc{Index}]
\label{rem:index}
Equation~\eqref{eq:perelem} is average-case. A worst-case decision bound for a deterministic
compressive model embeds the recurrence $\Sigma_t=f(\Sigma_{t-1},x_t)$ into a one-way
protocol: the hidden state after the binding prefix is the single Alice$\to$Bob message of
\textsc{Index}, randomized one-way complexity $\Omega(N)$~\citep{kremer1999randomized}, and
Yao's principle~\citep{yao1977probabilistic} transfers the bound to the best model.
\end{remark}

Semantics: the $\Solv$ threshold is accuracy $\ge 2/3$ with $4\times$ training-length
extrapolation without decay; $\infty$ denotes the joint limit of parameters, data, and
length along the regime-preserving direction. The experimental operationalization is
deliberately stricter ($\ge 0.9$ at $6.4\times$); every fail verdict sits at chance, far
below $2/3$, so no verdict changes under the weaker threshold.

\begin{remark}[Two lattices, one map: a caution against over-reading the algebra]
\label{rem:lattices}
The inclusion~\eqref{eq:nonpreserve} is not the failure of a union-preserving homomorphism:
the antitone polar map obeys only $F(A\cup A')=F(A)\cap F(A')$, and the object mixes two
lattices ($\sqcup$ is the architecture-lattice join, a strictly larger interleaved family;
$\cup$ is ordinary task-lattice union). The precise content of Corollary~\ref{cor:nonjoin}
is that the composite family's reachable tasks strictly exceed the union of the
components', not that any clean algebraic identity is violated.
\end{remark}

\section{KV-Memory Accounting for Figure~\ref{fig:kvbar}}
\label{app:kv}

\begin{figure}[t]
  \centering
  \includegraphics[width=0.82\linewidth]{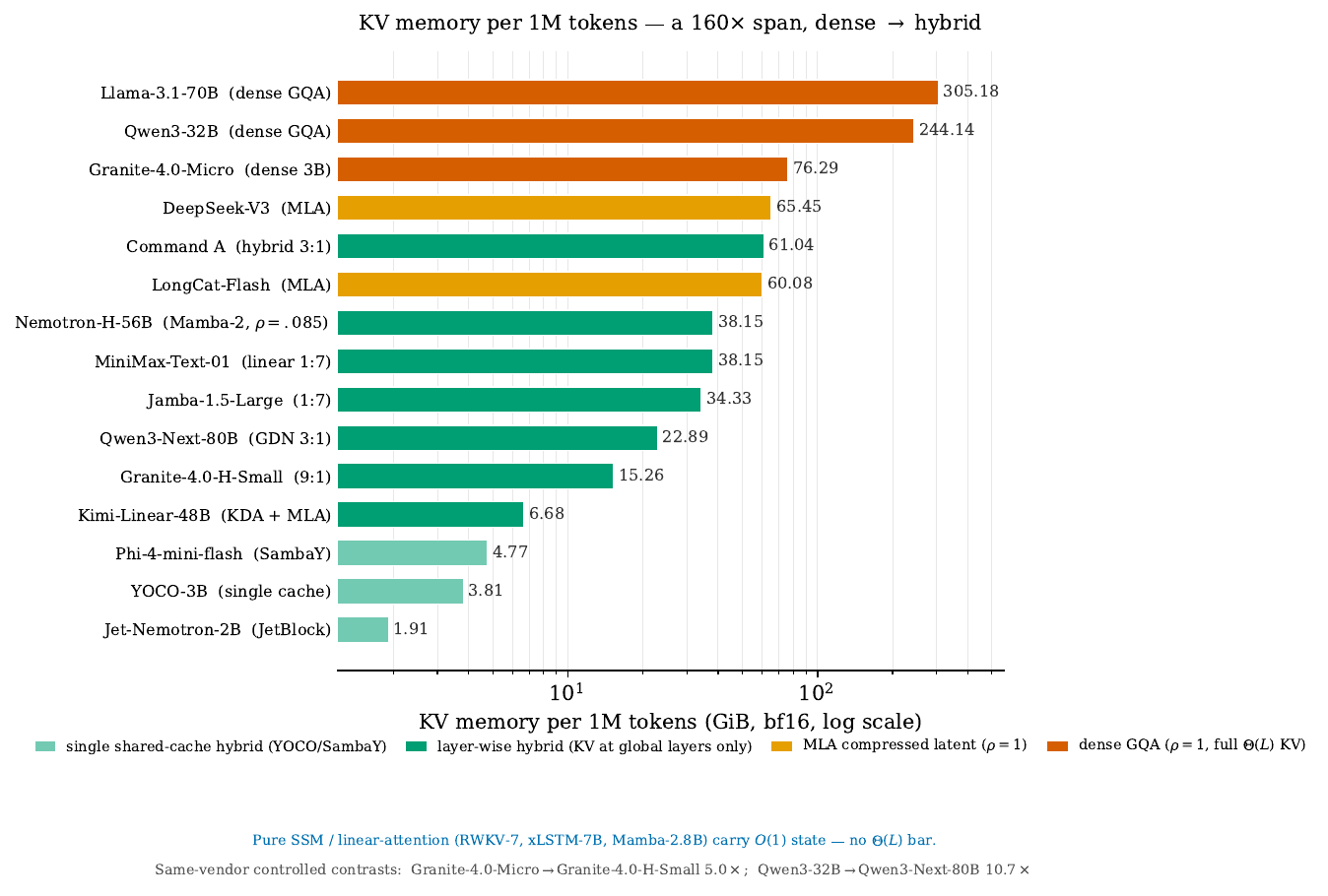}
  \caption{KV-memory per $1$M tokens (bf16, log scale) across $15$ census architectures,
  spanning ${\approx}160\times$ from Llama-3.1-70B ($305$\,GiB) to Jet-Nemotron-2B
  ($1.9$\,GiB): dense GQA (red) caches at every layer, MLA (amber) a compressed latent at
  $\rho{=}1$, layer-wise hybrids (green) only at global-attention layers; pure SSM/linear
  models have no $\Theta(L)$ bar. The memory-side signature of R1. Source-verified via
  Eq.~\eqref{eq:kv} (Table~\ref{tab:kv}); five headline figures re-derived to within
  $2.4\%$ of the originating labs' numbers.}
  \label{fig:kvbar}
\end{figure}

The per-token servable key--value state of an attention layer has size
\begin{equation}
\mathrm{KV\ bytes\ per\ token} \;=\; 2 \cdot n_{\mathrm{layers}}^{\mathrm{attn}} \cdot
n_{\mathrm{kv}} \cdot d_{\mathrm{head}} \cdot s,
\label{eq:kv}
\end{equation}
where $n_{\mathrm{layers}}^{\mathrm{attn}}$ counts \emph{global-attention} layers (linear/SSM
layers contribute $O(1)$, not $\Theta(L)$), $n_{\mathrm{kv}}$ the key/value heads,
$d_{\mathrm{head}}$ the head dimension, $s$ bytes per scalar ($s=2$ bf16). Table~\ref{tab:kv}
lists the config parameters (computed from model cards, not measured); a hybrid's
$\rho = n_{\mathrm{layers}}^{\mathrm{attn}}/n_{\mathrm{layers}}$ is the share of
Figure~\ref{fig:rho}.

\begin{table}[h]
\centering
\small
\caption{Configuration parameters entering~\eqref{eq:kv} for Figure~\ref{fig:kvbar} (bf16,
$s=2$). Dense models cache at every layer; hybrids only at their global-attention layers.}
\label{tab:kv}
\begin{tabular}{@{}lrrrr@{}}
\toprule
Model & \shortstack[r]{global-attn\\layers} & $n_{\mathrm{kv}}$ & $d_{\mathrm{head}}$ & KV / 1M tok \\
\midrule
\multicolumn{5}{@{}l}{\emph{dense GQA ($\rho{=}1$, full $\Theta(L)$ KV)}}\\
Llama-3.1-70B               & 80 & 8  & 128 & ${\approx}\,305$ GiB \\
Qwen3-32B                   & 64 & 8  & 128 & ${\approx}\,244$ GiB \\
Granite-4.0-Micro (3B)      & 40 & 8  & 64  & ${\approx}\,76$ GiB \\
\midrule
\multicolumn{5}{@{}l}{\emph{MLA compressed latent ($\rho{=}1$)}}\\
DeepSeek-V3                 & \multicolumn{3}{c}{compressed latent ($512{+}64$)} & ${\approx}\,65$ GiB \\
LongCat-Flash               & \multicolumn{3}{c}{compressed latent} & ${\approx}\,60$ GiB \\
\midrule
\multicolumn{5}{@{}l}{\emph{layer-wise / SWA hybrid (KV at global layers only)}}\\
Command~A ($3{:}1$ SWA)     & 16 & 8  & 128 & ${\approx}\,61$ GiB \\
MiniMax-01 (linear $1{:}7$) & 10 & 8  & 128 & ${\approx}\,38$ GiB \\
Nemotron-H-56B (Mamba-2)    & 10 & 8  & 128 & ${\approx}\,38$ GiB \\
Jamba-1.5-Large ($1{:}7$)   & 9  & 8  & 128 & ${\approx}\,34$ GiB \\
Qwen3-Next-80B (GDN $3{:}1$) & 12 & 2  & 256 & ${\approx}\,23$ GiB \\
Granite-4.0-H-Small ($9{:}1$) & 4 & 8 & 128 & ${\approx}\,15$ GiB \\
Kimi-Linear-48B (KDA$+$MLA) & \multicolumn{3}{c}{compressed latent} & ${\approx}\,7$ GiB \\
Phi-4-mini-flash (SambaY)   & 1  & 20 & 64  & ${\approx}\,5$ GiB \\
YOCO-3B (single cache)      & 1  & 8  & 128 & ${\approx}\,4$ GiB \\
Jet-Nemotron-2B (JetBlock)  & 2  & 2  & 128 & ${\approx}\,1.9$ GiB \\
\bottomrule
\end{tabular}
\end{table}

\noindent The ${\approx}160\times$ span (Llama-3.1-70B $305$\,GiB to Jet-Nemotron-2B $1.9$\,GiB) is
dominated by the collapse of $n_{\mathrm{layers}}^{\mathrm{attn}}$ (and MLA latent
compression), the memory-side content of R1; five headline figures were re-derived from
primary configs to within $2.4\%$ of the labs' values.

\section{Architecture Census}
\label{app:census}

\begin{figure}[t]
  \centering
  \includegraphics[width=\linewidth]{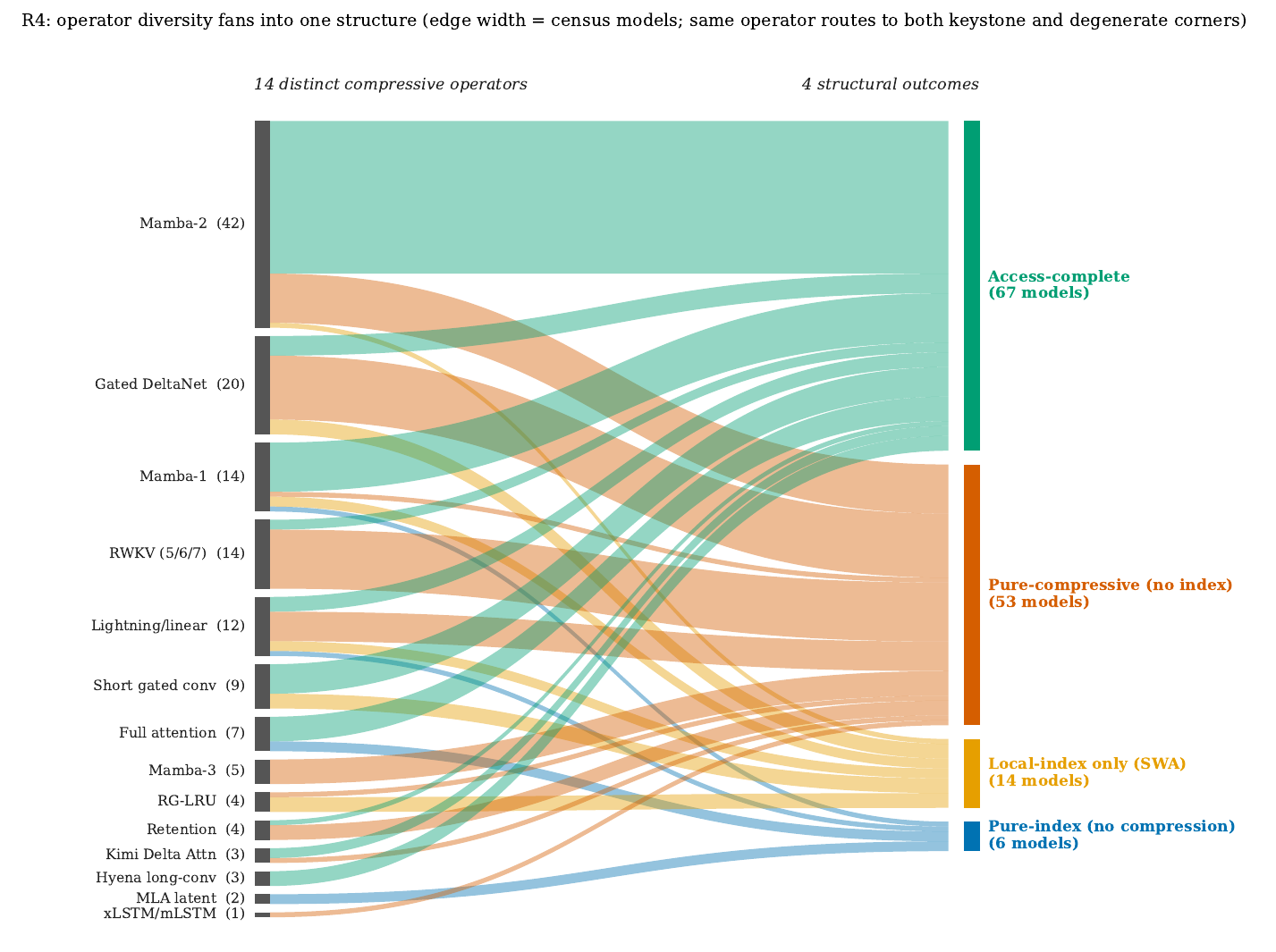}
  \caption{R4 made concrete: fourteen distinct compressive operators (left) route to four
  structural outcomes (right); the same operator family appears in both the access-complete
  keystone (green) and the degenerate corners, so membership is set by pairing with a
  scalable global index, not by operator. Edge width counts census models ($N{=}140$).}
  \label{fig:sankey}
\end{figure}

The Section~\ref{sec:regularities} statistics come from a census of $N=140$ released
architectures (2023-07 to 2026-05; $147$ raw records, $7$ duplicates merged), from
primary-source configs, papers, and model cards ($90\%$ from a primary \texttt{config.json}
or explicit paper statement, the rest flagged lower-confidence). Parsing rule: $\rho$ is
global-attention over total layers, read from the layer map (block patterns expanded to
depth; config-vs-paper conflicts resolved for the config and flagged). Sixty-one models
carry $\rho \in (0,1)$: mean $0.207$, median $0.167$, sd $0.135$, range $[0.031, 0.500]$,
with $57\%$ in $[1/12, 1/4]$, a $3.4\times$ enrichment over uniform. The band boundaries are
round design fractions chosen after inspecting the data, so the enrichment multiples are
descriptive, not significance tests (the going-forward test is
Prediction~\ref{pred:band}). Per-cohort dispersion trends down
($\sigma = 0.148/0.119/0.067$ over 2024/25/26; cohort $n = 23/31/6$, and five of the six
2026 hybrids sit at exactly $\rho = 1/4$) but not significantly ($p\approx0.06$), so
we treat the cross-sectional concentration as the regularity and the narrowing as
suggestive (Figure~\ref{fig:rho}).

\paragraph{$\rho$-semantics a careful reading must separate.} (i) Parallel-hybrid and
hybrid-head models (Falcon-H1, Hymba) run attention in every layer and are not directly
comparable to layer-wise $\rho$. (ii) Sliding-window-only ``hybrids'' (Samba, Based) hold
no scalable global index and are recorded at $\rho{=}0$. (iii) MLA flagships (DeepSeek-V3,
LongCat-Flash) sit at $\rho{=}1$: they cheapen the index by compression, not layer share,
and hold no $O(1)$-state channel, hence not access-complete. (iv) Single-shared-cache
designs (YOCO, SambaY) retain genuine global access, placed below the band by the
stored-cache convention. The above-band tail is distillation sweeps (MambaInLlama at
$\rho \in \{1/8, 1/4, 1/2\}$) and edge models (LFM2); the below-band tail, aggressive 2025
production hybrids (Nemotron-H, Hunyuan-TurboS). Under the explicit frontier rule of
\texttt{experiments/census\_frontier.py} (dropping the 10 distillation conversions and the 6
LFM2 edge models, $n{=}45$), the concentration tightens relative to the full set: median
$\rho = 0.125$, $\sigma = 0.109$ (vs.\ $0.135$), $62\%$ in band (vs.\ $57\%$). The
per-model table, every source URL, and the adversarial re-derivations are in the
supplementary census report.

\paragraph{Robustness of the concentration} (Figure~\ref{fig:robustness}).
\emph{(i)~Operationalization-agnosticism:} the two share conventions,
$\rho_{\mathrm{layer}} = g/L_{\mathrm{total}}$ and $\rho_{\mathrm{mix}} = g/L_{\mathrm{mix}}$
(mixing layers only), rank-correlate at Spearman $\rho_s = 0.90$ across the $60$ hybrids
with full layer maps, and both track the absolute KV-per-token axis
($\rho_s = 0.36$--$0.46$), so placement is not a layer-counting artifact.
\emph{(ii)~Statistical solidity:} a $10$k bootstrap puts the uniform-null enrichment at
$3.4\times$, $95\%$ CI $[2.7, 4.1]$. \emph{(iii)~Within-family:} the in-band fraction holds
across families (Mamba-2 $60\%$, Mamba-1 $80\%$, Gated DeltaNet $67\%$; the lone outlier is
the LFM2 edge line, $25\%$). \emph{(iv)~Null sensitivity:} against a design-menu null
(reduced fractions, denominator $\le 12$) the enrichment is $2.35\times$ ($3.3\times$ at
$\le 32$); against a scale-invariant null on $[0.031, 0.5]$ it falls to $1.45\times$ (CI
$[1.1, 1.7]$). The concentration survives the design-menu nulls but weakens materially
under the scale-invariant one, so R1 leans on cross-sectional stability, within-family
replication, and frontier tightening, not the enrichment multiple alone.
\emph{(v)~Pseudo-replication:} a cluster bootstrap over the 27 mechanism-family clusters
gives in-band fraction $0.574$, CI $[0.41, 0.74]$, enrichment $3.4\times$ (CI $[2.4,
4.5]$), unweighted cluster mean $0.53$; organization-level clusters (27 organizations)
give the same intervals to two decimals. Under the most conservative combination (cluster
resampling $\times$ scale-invariant null) the CI is $[1.03, 1.87]$, barely excluding
$1.0$; we state this rather than hide it.

\begin{figure}[t]
  \centering
  \includegraphics[width=0.9\linewidth]{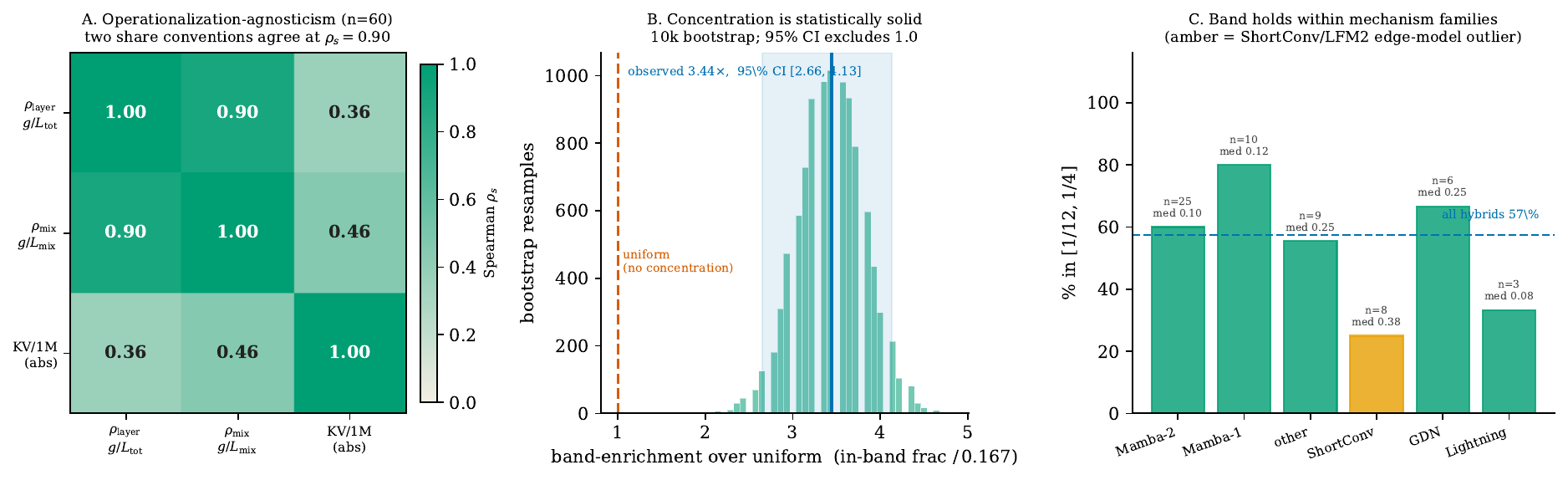}
  \caption{Robustness of the $\rho$ concentration, in the spirit of PRH's metric-agnosticism check
  (its Figure~12). \textbf{A:} Spearman rank-correlation among operationalizations of index
  reliance; the two share conventions ($\rho_{\mathrm{layer}}$, $\rho_{\mathrm{mix}}$) agree at
  $0.90$, and both track the absolute-KV economic axis. \textbf{B:} a $10$k bootstrap of the
  band-enrichment over a uniform spread; the $95\%$ CI $[2.7, 4.1]$ excludes $1.0$. \textbf{C:} the
  in-band fraction holds within mechanism families (amber $=$ the ShortConv/LFM2 edge-model
  outlier). Computed from the census; no schematic content.}
  \label{fig:robustness}
\end{figure}

\paragraph{The access spectrum, visualized.} A deterministic PCA of seven architectural
properties ($\rho$, $\log$ KV/token, $\log$ parameters, depth, SSM-layer share, $\log$
index-width, $\log$ context) places the census on an access-and-scale axis (PC1, $46\%$
variance) correlating with $\rho$ alone at only $0.66$, so it is multi-feature
(Figure~\ref{fig:embed}). Because the features encode access structure, this is a
visualization, not independent evidence: the pure-compressive ($\rho{=}0$) and pure-index
($\rho{=}1$) families flank the access-complete hybrids as an ordered spectrum (hybrids
overlap both corners at the edges, as a spectrum rather than a partition predicts), the
data realization of Figure~\ref{fig:shannon}.

\begin{figure}[t]
  \centering
  \includegraphics[width=0.9\linewidth]{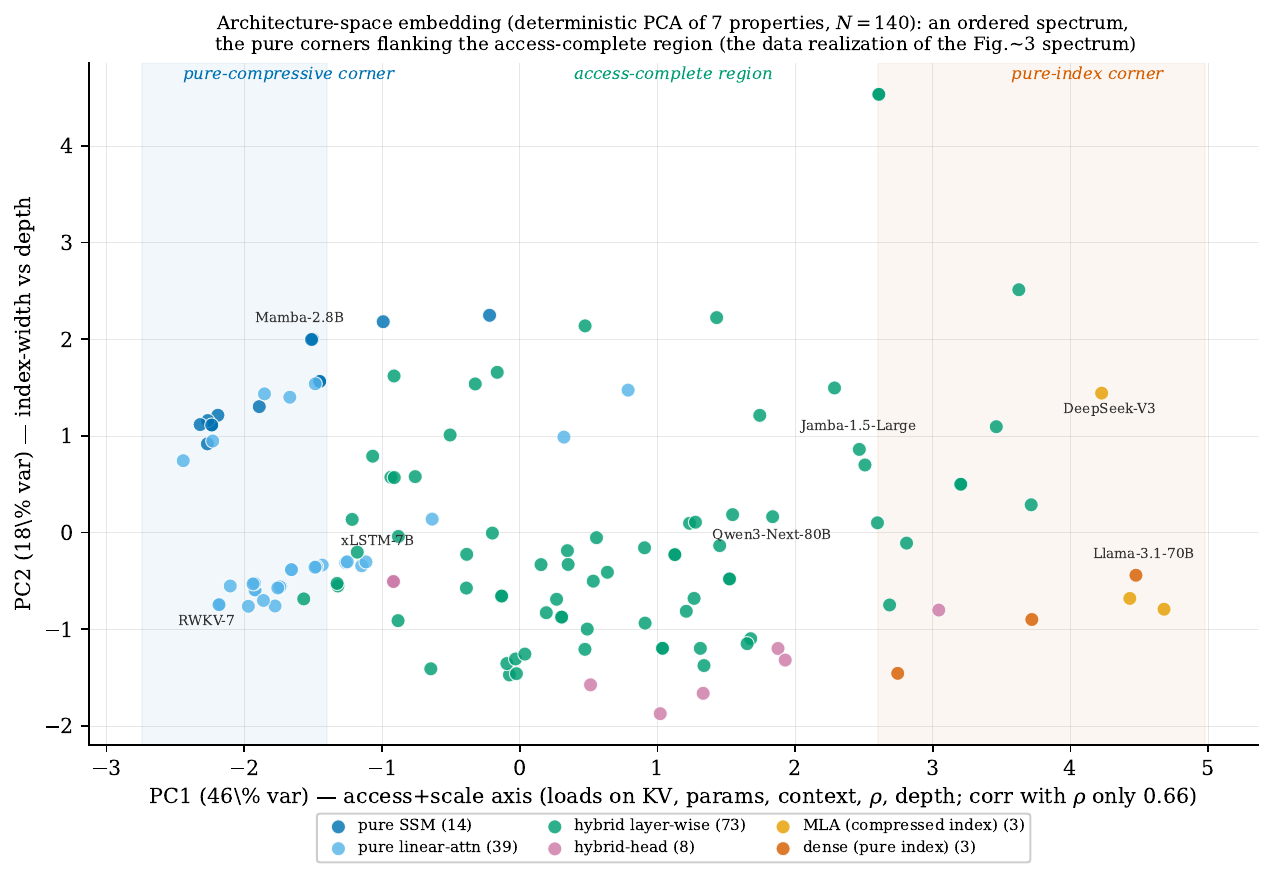}
  \caption{Architecture-space embedding of the census (deterministic PCA of seven standardized
  architectural properties; PRH's Figure~2-right analogue). PC1 ($46\%$ variance) is an
  access-and-scale axis loading jointly on KV/token, parameters, context, $\rho$, and depth;
  its correlation with $\rho$ alone is only $0.66$. The pure-compressive ($\rho{=}0$) and
  pure-index ($\rho{=}1$) corners flank the access-complete hybrids; PC2 ($18\%$) contrasts
  index-width with depth. Measured counterpart of Figure~\ref{fig:shannon}.}
  \label{fig:embed}
\end{figure}

\providecommand{\ding}[1]{\ifnum#1=51\ensuremath{\checkmark}\else\ensuremath{\times}\fi}

\section{Long-Context Capability Data}
\label{app:capdata}

This appendix gives the per-model data behind Figure~\ref{fig:capability}. We stress the
metric-heterogeneity caveat: RULER scores, passkey/needle lengths, and ``none published'' are not
strictly commensurable and must not be read as a single ranking; the robust content is the
\emph{qualitative} separation by access class. RULER is reported on a $0$--$100$ scale (MiniMax-01's
native $0$--$1$ score is rescaled), with the evaluation length annotated where the source fixes one;
``avg'' denotes a RULER average across lengths. The blank cells in the no-index classes are a
\emph{reporting-practice} observation, not negative measurements: those families largely do not
publish a long-context retrieval number, which may reflect target use cases or benchmark timing
rather than capability.

\begin{table}[h]
\centering
\footnotesize
\caption{Long-context retrieval by access class (primary-source-verified). \ding{51}~$=$ holds a
scalable global index; \ding{55}~$=$ does not. MMLU is the short-context parity check.}
\label{tab:capdata}
\begin{tabular}{@{}lrll@{}}
\toprule
Model & $P$ (B) & Long-context retrieval & MMLU \\
\midrule
\multicolumn{4}{@{}l}{\emph{Pure SSM (no scalable global index)} \;[\ding{55}]}\\
\quad Mamba-2.8B & 2.8 & \textit{none published} & -- \\
\quad Falcon-Mamba-7B & 7.27 & \textit{none published} & 62.1 \\
\midrule
\multicolumn{4}{@{}l}{\emph{Pure linear-attention (no scalable global index)} \;[\ding{55}]}\\
\quad xLSTM-7B & 7 & RULER 20@131K & 58.8 \\
\quad RWKV-7 (2.9B) & 2.9 & passkey 35K & 55.0 \\
\midrule
\multicolumn{4}{@{}l}{\emph{Sliding-window-only (local index only)} \;[\ding{55}]}\\
\quad RecurrentGemma-9B & 9 & \textit{none published} & 60.5 \\
\quad Samba-3.8B & 3.8 & \textit{none published} & 71.2 \\
\midrule
\multicolumn{4}{@{}l}{\emph{Access-complete hybrid (compressive state + scalable global index)} \;[\ding{51}]}\\
\quad MiniMax-Text-01 & 456 & RULER 94.7@128K & 88.5 \\
\quad Jamba-1.5-Large & 94 & RULER 93.9@256K & 80.0 \\
\quad Qwen3-Next-80B & 80 & RULER 91.8 (avg) & 80.6 \\
\quad Jamba-1.5-Mini & 52 & RULER 86.1@256K & 69.7 \\
\quad Nemotron-Nano-12B & 12 & RULER 84.7@128K & 78.2 \\
\quad Command A & 111 & RULER 84.6@256K & 85.5 \\
\quad Kimi-Linear-48B & 48 & RULER 84.3@128K & 72.7 \\
\quad Bamba-9B-v2 & 9.78 & passkey 33K & 67.9 \\
\quad Zamba2-7B & 7.4 & passkey 16K & 67.2 \\
\quad Nemotron-H-56B & 56 & \textit{none published} & 84.2 \\
\quad Granite-4.0-H-Small & 32 & \textit{none published} & 78.4 \\
\quad Hunyuan-TurboS & 56 & \textit{none published} & 87.9 \\
\quad Falcon-H1-7B & 7 & \textit{none published} & 76.8 \\
\quad Hymba-1.5B & 1.5 & NIAH (qual.) & 51.2 \\
\midrule
\multicolumn{4}{@{}l}{\emph{MLA (compressed global index, $\rho{=}1$)} \;[\ding{51}]}\\
\quad DeepSeek-V3 & 671 & NIAH (qual.) & 88.5 \\
\midrule
\multicolumn{4}{@{}l}{\emph{Dense (full $\Theta(L)$ index, $\rho{=}1$)} \;[\ding{51}]}\\
\quad Qwen3-32B & 32.8 & RULER 93.7 (avg) & 83.6 \\
\quad Phi-3-mini-128K-instruct & 3.8 & RULER 84.6 (avg) & 69.7 \\
\quad Llama-3.1-8B & 8 & RULER 77@128K & 69.4 \\
\bottomrule
\end{tabular}
\end{table}

\begin{table}[h]
\centering
\footnotesize
\caption{The controlled Mamba2InLlama distillation staircase~\citep{wang2024mambainllama}: a fixed
Llama-3-8B-Instruct base distilled into hybrids retaining different fractions of full-attention
layers. Capability rises monotonically with the retained attention fraction $\rho$ on every metric;
AlpacaEval-LC at $\rho{=}1/2$ ($26.78$) exceeds the dense teacher ($22.90$). This is a
same-base correlation, sharper than the cross-model Table~\ref{tab:capdata} but not a clean
intervention on access alone: retained attention, inherited teacher ability, and trainable
parameters co-vary under the conversion recipe.}
\label{tab:staircase}
\begin{tabular}{@{}lrrr@{}}
\toprule
retained attn.\ $\rho$ & MT-Bench & AlpacaEval-LC & MMLU (0-shot) \\
\midrule
$0$ & 5.64 & 14.49 & 45.19 \\
$1/8$ & 6.48 & 20.25 & 50.78 \\
$1/4$ & 6.74 & 22.75 & 53.71 \\
$1/2$ & 7.32 & 26.78 & 55.7 \\
\midrule
\emph{dense teacher} ($\rho{=}1$) & 8.00 & 22.90 & ${\sim}68$ \\
\bottomrule
\end{tabular}
\end{table}

\section{Demarcation from Prior Work}
\label{sec:related}

Every brick of CCH is prior work; what is new is the framework organizing them into
``capability convergence.''

\paragraph{Relation to PRH.} CCH is PRH's sequel and boundary. It uses representational
convergence~\citep{huh2024platonic,jha2025vec2vec} in exactly two places (the
representational isomorphism of the operators~\citep{dao2024ssd}; the metric for
Prediction~\ref{pred:comm}), and none of its lower bounds depend on PRH holding.
\citet{convergence2026understanding} found the crack (representations align while
computation diverges), independently confirming CCH's premise; \citet{koepke2026cave}
narrows PRH's cross-modal evidence at scale, reinforcing that premise while leaving CCH's
bounds untouched. The relation is also one of evidential standards: PRH's lineage is
observational (released checkpoints, no pre-registration, no falsification scorecard);
CCH's portfolio is a superset, the same observational layers plus formal lower bounds plus
pre-registered interventions with a timestamped amendment log and a failed prediction
reported as such. We note this to locate the genre's evidential bar, not to diminish our
template.

\paragraph{Relation to the lower-bound literature.} The pure-family walls appear scattered
across prior work: SSM retrieval/copying bounds~\citep{jelassi2024repeat}, recall
measurement~\citep{arora2023zoology}, state-tracking limits via circuit
complexity~\citep{merrill2024illusion}, formal-language
expressivity~\citep{sarrof2024expressive}, the $\mathrm{TC}^0$ bound on
attention~\citep{merrill2023parallelism}, chain-of-thought for serial
problems~\citep{li2024chain}, recency limits~\citep{jelassi2025recency}. Each holds one
brick; CCH's contribution is not a new wall but weaving them into one picture plus an
economic regularity, upgrading ``pure architectures each have limits'' into the directional
hypothesis. The nearest empirical neighbour~\citep{pantazopoulos2026retrievit} characterizes
the stratification on synthetic tasks without naming a convergence hypothesis, an ally
supplying data where CCH supplies theory.

\paragraph{Relation to hybrid engineering.} A large body has demonstrated hybrid
efficiency--accuracy
advantages~\citep{lieber2024jamba,blakeman2025nemotronh,de2024griffin,kimi2025linear,arora2024based,glorioso2024zamba,glorioso2024zamba2},
with systematic ablations of ratio and placement now at controlled
scale~\citep{bae2025hybrid}; CCH differs in asking \emph{why} capability converges to the
hybrid and what the attractor class is, not how a given hybrid performs.

\paragraph{Relation to adjacent hybrid analyses.} The closest work articulates the
two-channel division of labour CCH formalizes: \citet{dong2024hymba} state it directly
(attention for high-resolution recall, SSM heads for efficient summarization);
\citet{ben2024expansion} draw the same line as ``fading'' versus ``eidetic'' memory;
\citet{priming2026hybrid} convert pre-trained Transformers into hybrids cheaply, supporting
R1's economic account. One priority statement: \citet{reasoning2026primitives} had already
identified recall and state tracking as complementary primitives and proposed their
composition (an address produced by serial state transitions, then used for retrieval) as
the regime where hybrids should matter. The composite primitive is theirs; Experiment~E
adds the capacity/complexity framing, pre-registered per-arm wall-based predictions, and
the reliability and ablation measurements. Their headline finding, no uniform hybrid edge
once reasoning tokens are enabled, we cite as the strongest counter-evidence on the CoT
axis (AV2), noting that it is reasoning-augmented and mostly outside $\Rgm$, while their
division-of-labour split bears on the two-channel mechanism. It would falsify CCH if a
pure family matched the hybrid on the witness itself, under $\Rgm$, with length
generalization: the test Prediction~\ref{pred:floor} makes precise.

In one line: prior work holds the representational side, the individual walls, the
two-channel intuition, and the empirics of hybrids; CCH adds the unifying framework, the
name, the access-structure account, and the witness. Table~\ref{tab:demarcation} summarizes.

\begin{table}[t]
\centering
\small
\caption{Placement of CCH against adjacent lines of work. CCH does not claim the rows on the
left as contributions; it claims the synthesis in the right column.}
\label{tab:demarcation}
\begin{tabular}{@{}>{\raggedright\arraybackslash}p{0.30\linewidth}>{\raggedright\arraybackslash}p{0.33\linewidth}>{\raggedright\arraybackslash}p{0.27\linewidth}@{}}
\toprule
\textbf{Line of work} & \textbf{Representative results} & \textbf{What CCH adds} \\
\midrule
Representational convergence &
PRH; universal embedding geometry; representation/computation gap~\citep{huh2024platonic,jha2025vec2vec,convergence2026understanding} &
Reads the gap as an \emph{access}-structure boundary, not a representation one \\
Pure-family lower bounds &
copying/recall, state-tracking, $\mathrm{TC}^0$, formal-language limits~\citep{jelassi2024repeat,arora2023zoology,merrill2024illusion,merrill2023parallelism,sarrof2024expressive} &
Welds the separate ``walls'' into one channel-capacity picture + a witness \\
Two-channel hybrid analyses &
recall vs summarization heads; fading vs eidetic memory~\citep{dong2024hymba,ben2024expansion,pantazopoulos2026retrievit} &
Turns the intuition into a closure-non-preservation statement \\
Hybrid engineering &
Jamba, Nemotron-H, Griffin, Kimi Linear, Zamba~\citep{lieber2024jamba,blakeman2025nemotronh,de2024griffin,kimi2025linear,glorioso2024zamba} &
Explains \emph{why} the share band is a steady state (R1, P4) \\
Hybrid primitive comparison &
no uniform edge \emph{with} CoT; sequential-vs-flat split~\citep{reasoning2026primitives} &
Mostly outside $\Rgm$; its state/index split \emph{supports} the two-channel account \\
\bottomrule
\end{tabular}
\end{table}


\section{Experimental Details and Full Verdicts}
\label{app:experiments}

This section reports the controlled measurements of
Predictions~\ref{pred:floor}--\ref{pred:comm} plus the model-level dissociation
experiment. Everything operates inside $\Rgm$ (fixed depth, no chain-of-thought, no
tools); within each comparison only the sequence-mixing operator differs, with identical
data, optimizer, and budget. Scale up front: trained models are $\sim$0.2--1.1M parameters,
public checkpoints $\le 3$B; these are mechanism-level tests (existence-level for the
single-axis walls, training-reliability-level for the conjunction), not frontier
demonstrations (Limitation~(v)). The broader stratification has independent
support~\citep{pantazopoulos2026retrievit}.

\paragraph{Pre-registration protocol.}
All support/falsification criteria were frozen in a written protocol
(\texttt{experiments/PREREGISTRATION.md}, 2026-07-10) before the corresponding data
existed: pass thresholds (accuracy $\ge 0.9$; cell pass on $\ge 2/3$ seeds), the single
permitted fitted constant ($p_{\mathrm{eff}}$), and explicit falsification conditions.
Thirteen timestamped amendments (1--11, 9b, 9c) added, before their data: control arms
(depth-advantaged Transformer, local-window hybrid); a $5\times$ budget extension arranged
so every predicted-fail side got at least the predicted-pass budget; uniform dense LM
supervision; two redesigns of confounded conditions (reported alongside the originals); the
follow-up phase (expanded seeds/budgets/instances, first-phase results primary); the
Assumption~\ref{ass:sep} collision control; the conjunction witness (9) with its pilot
record (9b) and attention-arm doubled-budget rule (9c); the P3$'$ re-operationalization
(10); and the $n{=}1000$ flagship resolution (11). All completed cells reported, no
post-hoc exclusions. Availability: the pre-registration document, the registered and
post-registration census data, and the analysis scripts cited in the text
(\texttt{census\_frontier.py}, \texttt{census\_postreg.py},
\texttt{expA\_partial\_correlation.py}, \texttt{measure\_collisions.py}) are included as
ancillary files with the arXiv source of this paper, so the chronology and the census
statistics are auditable from the submission itself; the complete experimental bundle
(per-run records and reproduction scripts for Experiments A--E) is public at
\url{https://github.com/wenhui-ml/Capability-Convergence-Hypothesis} and permanently
archived on Zenodo, concept DOI
\href{https://doi.org/10.5281/zenodo.21714418}{10.5281/zenodo.21714418} (the snapshot
accompanying this revision: 10.5281/zenodo.21714419).

\subsection{P2, measured: the state-tracking bifurcation (Experiment C)}
\label{sec:expC}

\textbf{Setup.} $S_5$ word problem: the label at position $t$ is the running product
(120-way). Streams: \emph{swap} (transposition per token) and \emph{general} (arbitrary
$S_5$ element per token). Train $T\!\le\!40$; evaluate $T\in\{40,64,128,256,512\}$;
``length-generalizes'' $=$ pass at $T{=}256$. Architectures (2 layers,
$d_{\mathrm{model}}{=}128$; 3 seeds per cell, 8 on the critical $n_h{=}4$ general cell):
DeltaNet $\beta\in(0,1)$ (no reflection, the production parameterization, starred),
DeltaNet $\beta\in(0,2)$, DeltaProduct
$n_h\in\{2,4\}$~\citep{grazzi2025deltaproduct}, diagonal SSM, Transformer (2-layer and a
depth-advantaged 8-layer with $4\times$ parameters), and LSTM. Controls are width-matched:
the LSTM's ${\sim}4\times$ parameters bias toward a predicted-pass control and cannot
manufacture the fail verdicts (the diagonal SSM and 8-layer Transformer fail with
\emph{more} parameters than the passing DeltaProduct).

\textbf{Results (Figure~\ref{fig:nhgrid}).} Five of the six pre-registered
predictions held outright; the sixth (the $n_h$ ladder) held under its frozen
criterion but proved seed-brittle when the cell was expanded, and is reported
below as a capacity rather than a trainability result.
(i)~\emph{The starred prediction:} DeltaNet $\beta\in(0,1)$ fails both streams
at $T{=}256$; in fact it stays at chance ($\approx 0.008$) even at the
training length after a $5\times$ budget extension (100k steps): without
reflections the model cannot even fit composed $S_5$, a stronger outcome
than predicted.
(ii)~Unlocking one reflection ($\beta\in(0,2)$, $n_h{=}1$) is alone sufficient
for the swap stream: accuracy $1.0/1.0/0.998$ across seeds at $T{=}256$.
(iii)~On the general stream the ladder appears at $n_h{=}4$, as a capacity result, not a
reliability result. At $T{=}256$, $n_h\le 2$ sits at chance in every seed ($\le 0.022$
across 6 seeds); $n_h{=}4$ solves the task in 3 of 8 seeds ($\approx\!1.0$; one seed
extrapolates perfectly to $T{=}512$). The frozen 3-seed cell passed its criterion, but the
8-seed expansion put the pass fraction at $3/8$, so we do not claim reliable trainability.
The non-passing seeds expose optimization brittleness, not missing capacity: two partial
learners, three never leaving chance at the \emph{training} length after 100k steps, one
fitting $T{=}40$ perfectly ($1.000$) yet collapsing at $T{=}256$ ($0.474$), the Transformer
control's signature. The transition is a step, not a staircase ($n_h{=}2$ gains nothing):
only $n_h{=}4$ can track composed $S_5$ at $6.4\times$ training length, no smaller $n_h$
ever does, and reliable learning at this budget is open. The boundary is registered for
this configuration (two layers, one DeltaProduct front); deeper stacks may realize $S_5$
with smaller per-layer $n_h$~\citep{siems2025deltaproduct}.
(iv)~Both Transformers fail extrapolation: the 2-layer control is at chance
everywhere (consistent with fixed-depth $\mathrm{TC}^0$ limits,
Proposition~\ref{prop:circuit}); the 8-layer control reaches $1.0$ in-length on
swap and collapses to $0.03$ at $T{=}64$, high in-distribution performance
with immediate extrapolation failure, the exact signature the circuit wall
predicts for a fixed-depth parallel architecture.
(v)~The diagonal SSM is at chance in every cell.
(vi)~The LSTM (nonlinear full-rank recurrence) passes everything: accuracy
$1.000$ at every length on both streams, including $T{=}512$. This supports the axis
decomposition rather than any ``SSMs are weak'' reading: expressive recurrence buys
composition, and what the LSTM still lacks is the index capacity of Experiment~B's axis
(it belongs to $\Sch$ and would be barred at load).
An optimization note: at the original 20k budget the LSTM (general stream) and $n_h{=}4$
had not converged; the budget extension was pre-registered before those runs, and every
predicted-fail cell received at least the predicted-pass budget, so no failure above is a
budget artifact.

\subsection{P1, measured: the information floor and the scissors gap
(Experiment B)}
\label{sec:expB}

\textbf{Setup.} Executable $\mathrm{NA}(N,b,B;\kappa)$: $N$ key--value bindings ($b{=}8$
bits) planted in a length-512 stream with semantic-overlap distractors $\kappa$, single
query at the end. Arms: pure-state DeltaNet ($m\in\{64,256,1024\}$ scalars);
sliding-window ($w{=}32$); a hybrid holding the \emph{same} $m{=}64$ state plus one
global-attention layer; the same state plus a window-only layer; and full attention. 3
seeds (6 at the doubled budget); identical budgets within each comparison. Amendment-8
collision arms retrain the hybrid with $c\in\{1,3,7\}$ colliding rebinds per key.

\textbf{Results (Figure~\ref{fig:fano}).}
(i)~\emph{Floor rise:} every pure-state curve rises monotonically in $N$ toward the chance
floor ($0.994/0.977/0.854$ at $N{=}128$ for $m{=}64/256/1024$; chance $0.996$).
(ii)~\emph{Load-ratio collapse:} with the frozen $p_{\mathrm{eff}}{=}0.39$ bits/scalar (fit
on $m{=}256$), the 50\%-crossings land at $x = Nb/(m\,p_{\mathrm{eff}}) \in [0.82, 1.03]$,
$m{=}256$ and $m{=}1024$ collapsing within $0.024$ at shared $x$; the $\pm 0.1$ band is met
by the two larger states, violated (up to $0.15$) by the edge-of-trainability $m{=}64$, so
we report the criterion as partially met.
(iii)~\emph{The scissors gap, measured:} at $N{=}128$ the pure $m{=}64$ model has error
$0.994$ while the hybrid carrying the \emph{same} state plus one global-attention layer
has mean error $0.060$ (seeds $0.018/0.125/0.038$) at the registered 12k budget, one seed
crossing the strict frozen bound ($\le 0.05$) at the last grid point; the amendment-logged
doubling (24k, six seeds) resolves the miss as undertraining, mean error $\le 0.0003$ at
every $N$ (worst seed $0.001$), a measured scissors of $0.994$ vs.\ $0.000$ on the same
state size. The pure-state side sits at chance with no improvement trend across budgets,
and every predicted-fail cell received at least the predicted-pass budget; the ``more
optimization cannot help'' reading is licensed by the effective-capacity fit
($x_{\mathrm{eff}}\!\approx\!16$), not the nominal floor ($x\!\approx\!1$ at bf16, where
the bound is weak).
(iv)~\emph{The index must be global:} the same state paired with a
window-limited layer is not rescued (error $\ge 0.91$ at all $N$); being
access-complete, not merely having attention, is what matters
(cf.\ Figure~\ref{fig:sankey}).
(v)~\emph{Horizon wall:} the sliding-window model is at chance ($0.99+$) for out-of-window
bindings at every $N$; within-window retrieval is good at small $N$ ($0.07$ at $N{=}2$),
but the ``near-perfect independent of $N$'' clause is unrealizable in this design ($>\!8$
bindings no longer fit the window), so we count it partially confirmed.
(vi)~\emph{Fano sanity:} no architecture beats the nominal-capacity floor anywhere.
(vii)~\emph{Compressibility:} with block-duplicated values ($c{=}16$) the 50\%-crossing
shifts to $N\!\approx\!27$, matching the effective-entropy prediction ($\approx\!25$) over
the nominal one ($\approx\!12$); but $c{=}4$ tracks the nominal prediction and the block
design admits a retrieval-free shortcut at $N\le c$, so this axis is mixed evidence.
(viii)~\emph{A null and an anomaly:} the semantic-overlap ordering ($\kappa$) shows no
detectable effect at matched weights ($0.610$ vs.\ $0.609$), and 2-layer full attention is
not the upper bound the schematic assumed ($0.91$ at $N{=}128$, an optimization failure at
this scale). Trainability itself depended on state size: 3 of 6 pure $m{=}64$ runs never
learned even $N{=}2$ (no such failure elsewhere); all means include these seeds, per
protocol. Separating evidential registers: the pure-state failures are consistent with the
capacity framework at \emph{effective} capacity, but the nontrivial nominal floor is not
triggered at these settings (nominal $x\approx1$ at $m{=}64$), so the quantitative backing
is at effective, not nominal, capacity; the full-attention anomaly and the $m{=}64$
convergence failures are optimization-level observations with no capacity claim attached.
(ix)~\emph{Negative control on Assumption~\ref{ass:sep} (amendment 8):} we drive the code
distance to \emph{zero}: every planted key is bound $c{+}1$ times to distinct values,
shuffled, the labeled binding a uniformly random one, so the architecture-independent
Bayes floor is error $c/(c{+}1)$ and log-loss $\log_2(c{+}1)$ bits, independent of $N$. The
trained hybrid lands on the error floor almost exactly at every load: worst-cell deviation
from $0.500/0.750/0.875$ is $0.023/0.025/0.005$ (frozen band $\pm 0.05$; the two mildly
sub-floor cells lie inside fixed-picker fluctuation, checked per the pre-registered bug
clause). Log-loss sits \emph{above} its floor everywhere ($+0.05$ to $+0.89$ bits, growing
with $c$ and $N$; the $\pm 0.4$ band holds at $c{=}1$, is exceeded at $c\in\{3,7\}$): a
calibration gap, not a retrieval one, in the safe direction. Neither pre-registered red
flag fired (no drift toward chance $0.996$; no genuine sub-floor cell). Reading: the
scissors advantage of (iii) is purchased exactly where Assumption~\ref{ass:sep} says, in
write-time separability, and vanishes bit-for-bit when it is removed; the crossing claim's
conditional structure is measurable, and measured.

\subsection{Representational convergence without capability convergence
(Experiment A)}
\label{sec:expA}

\textbf{Setup.} Three families on the same corpus (The Pile) and tokenizer, 70M--3B: Pythia
(attention), Mamba (pure SSM), RWKV-4 (linear RNN). Capability: $N$ templated facts at
controlled depth in Pile filler ($L\in\{896,1900\}$), exact-match retrieval, $200$ instances
per cell. Representation: PRH's mutual $k$-NN alignment ($k{=}10$; CKA as a check) on 1024
shared snippets, max over layer pairs (a caveat: the max favors deeper models, so read
absolute alignments with care; chance $0.0098$, and orderings replicate across all three
pairings).

\textbf{Results (Figure~\ref{fig:dissoc}).}
(i)~\emph{PRH replicates:} cross-family alignment grows with scale (Spearman
$0.57/0.87/0.51$ for the three pairs; $n=30/24/20$, all $p<0.05$), reaching $0.7+$ against
chance $0.01$.
(ii)~\emph{Capability nonetheless stratifies by access structure:} at matched
$\sim$2.8B scale ($N{=}16$ early-depth facts, $L{=}1900$), retrieval accuracy
is Pythia $0.345$, Mamba $0.05$, RWKV $0.00$. The pre-registered margin ($\ge 30$pp over
both recurrent families) is met against RWKV ($34.5$pp) but narrowly missed against Mamba
($29.5$pp); at $n{=}200$ both $95\%$ intervals ($29.5 \pm 7.2$, $34.5 \pm 6.6$pp) straddle
the frozen line. Amendment 11 re-evaluated the six deciding cells at $n{=}1000$, committed
in advance to the point estimate: the flagship cell resolves above the line ($32.0 \pm
3.3$pp vs.\ Mamba, $36.5 \pm 3.0$pp vs.\ RWKV; the near-miss was sampling noise), every
other cell clearly below ($26.0/18.7$pp at $N{=}32/64$ in the 2.8B class;
$14.0$--$15.9$pp throughout the 1.4B class). A-ii stands as partial with the boundary
precisely mapped; the ordering Pythia $>$ Mamba $>$ RWKV is unambiguous in all ten
matched-scale cells.
(iii)~\emph{The failure signature is architectural:} Mamba's accuracy falls with distance
from the query (recency; $0.065\!\to\!0.326$ early-to-late for Mamba-1.4b, same slope at
every size) while Pythia-2.8b is flat ($0.441\!\to\!0.452$) and RWKV sits at a floor.
(iv)~\emph{The dissociation:} across 74 cross-family pairs, representational
alignment does not predict capability parity: the correlation between alignment and the
absolute gap is significantly \emph{positive} (Spearman $+0.48$, $p<10^{-4}$; the frozen
falsification was a significantly negative one), and survives controlling for the pair's
log geometric-mean parameter count (partial Spearman $+0.30$, $p=0.009$;
\texttt{experiments/expA\_partial\_correlation.py}), so it is not a scale
artifact. Representational convergence does not purchase capability convergence; the
boundary tracks access structure, the measured form of the paper's central claim.

\begin{figure}[tp]
  \centering
  \includegraphics[width=\linewidth]{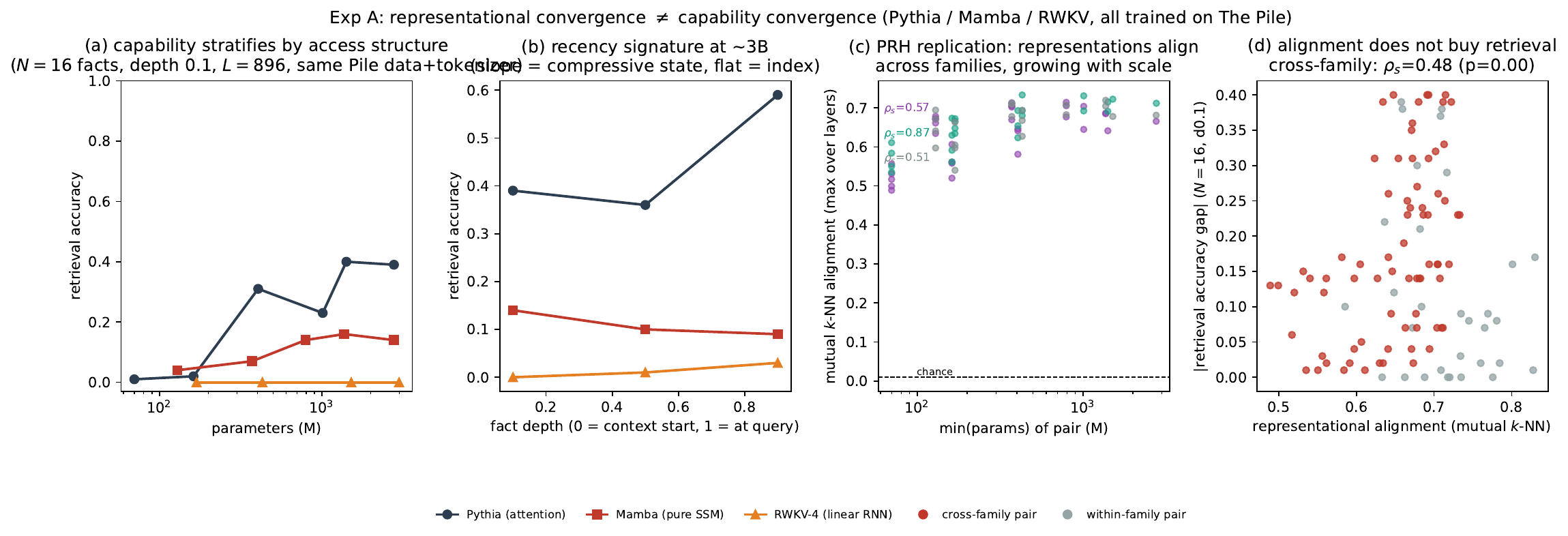}
  \caption{Representational convergence $\ne$ capability convergence
  (Experiment~A: Pythia / Mamba / RWKV-4, all pretrained on The Pile with a
  shared tokenizer). \textbf{(a)} Retrieval capability stratifies by access
  structure at matched scale and data. \textbf{(b)} The failure signature at
  $\sim$3B: recency slope for the compressive-state family, near-flat for the
  indexed family, floor for the linear RNN. \textbf{(c)} PRH replication:
  cross-family mutual $k$-NN alignment grows with scale (all three family
  pairings $p<0.05$). \textbf{(d)} The dissociation: alignment does not
  predict the capability gap (cross-family Spearman $\rho=+0.48$,
  positive rather than negative); the high-alignment/large-gap quadrant is occupied.
  \emph{Measured data.}}
  \label{fig:dissoc}
\end{figure}

\subsection{P3 miniature: a failed prediction, reported as such
(Experiment D)}
\label{sec:expD}

We tested Prediction~\ref{pred:comm} in miniature on Experiment~B's trained hybrids:
cross-channel alignment for solving finals vs.\ failing checkpoints vs.\ random
initialization. \textbf{The prediction failed, direction reversed, and the reversal
survived every control.} Mutual $k$-NN group means order strictly, reversed
from the prediction: random initialization $0.576$ (the
ceiling, not the floor) $>$ undertrained $0.527$ $>$ failing finals
$0.376$ $>$ solving finals $0.284$. Completing the protocol strengthened
the reversal: under linear CKA the ordering is more extreme (random $0.993$; solving
$0.13$--$0.33$). The reversal also holds across training budget and arms (mutual $k$-NN):
doubled-budget hybrids align lower still ($0.231$ at 24k vs.\ $0.284$ at 12k), and the
registered $\beta\in(0,1)$ ``failing arm'' does not fail this task ($\beta\in(0,1)$
breaks tracking, not retrieval), its solving checkpoints showing the same low $k$-NN
alignment ($0.261$). Two lessons: random initialization is not a low-alignment floor (two random
channels reading the same tokens echo the same input geometry), and in solving hybrids
training \emph{differentiates} the channels, consistent with complementary loads but
opposite to the registered signature. As committed, this is a failed prediction of CCH;
Figure~\ref{fig:comm} retains its schematic label with the failure noted.

\begin{figure}[t]
  \centering
  \includegraphics[width=0.72\linewidth]{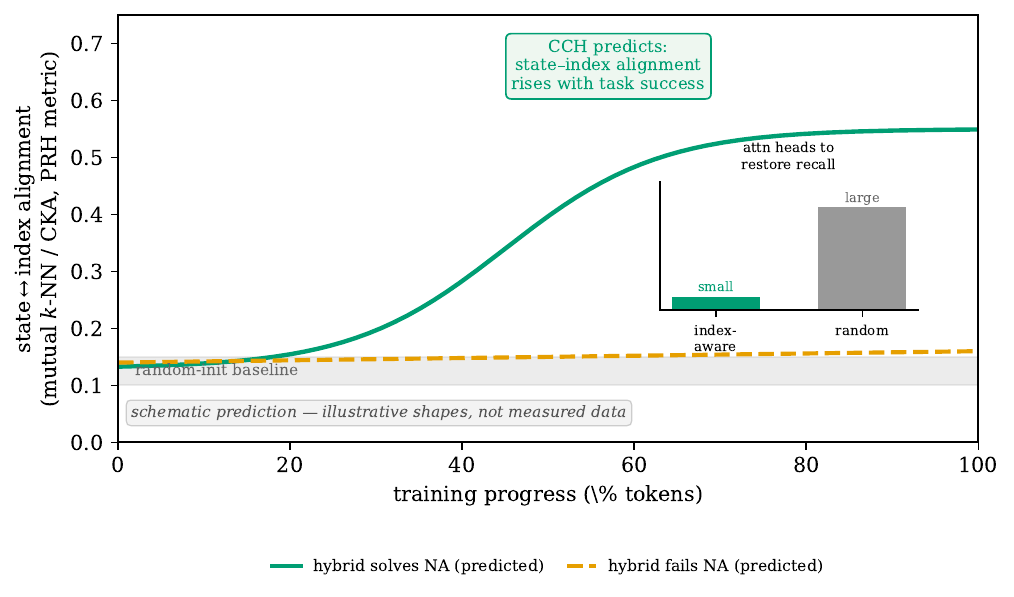}
  \caption{Channel commensurability (Prediction~\ref{pred:comm}): solving hybrids were
  predicted to align across channels above random initialization. \emph{Schematic.} The
  pre-registered test \emph{failed, direction reversed} (Appendix~\ref{sec:expD}): solving
  hybrids align \emph{below} random initialization, which is itself high because two random
  channels echo the same input geometry; the schematic is retained as a record of the
  original claim.}
  \label{fig:comm}
\end{figure}

\subsection{The conjunction witness (Experiment E) and channel complementarity (D-ii)}
\label{sec:expE}

\textbf{Setup (amendments 9/9b).} Composite task: $N$ bindings [permutation-address $\to$
8-bit value] written first; a swap stream of length $T$ drives the referent through $S_5$;
the query asks for the value at the \emph{composed} final referent. Supervision (uniform
across arms): LM over the stream, dense ``running answer'' labels at instruction positions,
and a weighted answer loss. Train $N \le 64$, $T \le 40$; evaluate
$T \in \{40,64,128,256\}\times N \in \{4,16,64\}$; pass $= 0.9$, cell pass ${\ge}2/3$ of 3
seeds; 120k steps. Arms (frozen from the pilot, 9b): recurrent-front hybrid (two
$\beta\in(0,2)$ DeltaNet layers + one global-attention layer), pure recurrence (three
DeltaNet layers, more total state), 8-layer Transformer, $\beta\in(0,1)$-front hybrid, and
window-32 hybrid.

\textbf{Results} (Figure~\ref{fig:witness}). The witness criterion is met at seven cells,
weighted unevenly. The
axis-supported cell is the predicted long-extrapolation cell ($T{=}256$, $N{=}16$: hybrid
$0.966/0.979/0.964$; no other arm meets the $2/3$-seed criterion). One pure-recurrence seed
solves it ($0.995$), so the cell separates training reliability, not expressivity, as
theory requires ($N{=}16$, $128$ bits, is far inside pure-state capacity); existence-level
separation would need supra-capacity loads, where the available measurement is graded
per-seed dominance at $N{=}64$ (hybrid worst $0.706$ $>$ pure best $0.654$, both under the
pass line). The six remaining cells ($T\le 128$) pass formally but owe substantially to
controls failing to train. Per-arm means at $T{=}256$, $N{=}\{4,16,64\}$: hybrid
$0.996/0.970/0.801$; pure recurrence $0.721/0.548/0.269$ (per-seed at $N{=}64$:
$0.654/0.128/0.023$; perfect $T$-extrapolation in converged seeds with monotone
$N$-degradation whose onset, at nominal $x \approx 0.2$, sits far below the naive floor,
an effective-capacity observation under dual duty, flagged as open); 8-layer Transformer
$0.253/0.065/0.017$ (never fits even in-length, nor at the doubled 240k budget of
amendment 9c, where in-length accuracy stayed at the $1/N$ baseline, $0.27$ at $T{=}40$,
$N{=}4$: not a budget artifact, E-iii holding only in its fail half); $\beta\in(0,1)$
hybrid $0.424/0.122/0.039$ (both axes lost at once, retrieval inheriting the tracking
failure through the composed address; the sub-prediction that it would keep in-length
retrieval was wrong, instructively, since the axes are coupled by design); window-32
hybrid $0.668/0.275/0.044$ (passes only where its internal state carries the load). Graded
margins (hybrid minus best other arm) at $T{=}256$: $+0.275 / +0.422 / +0.532$ for
$N{=}4/16/64$. Notes: 3 of 15 runs are convergence failures (included per protocol); the
pilot record (9b) documents that single-recurrent-layer hybrids lose tracking to an
in-length attention shortcut and that weight tying harms.

\textbf{D-ii: channel complementarity (P3$'$, amendment 10).} On the three solving
checkpoints, channels identified by mixer type: (a)~\emph{lesions}: bypassing either
channel collapses every cell to ${\le}0.03$ in every seed (drops $\ge 0.96$), so the
learned solution is irreducibly two-channel; (b)~\emph{probes}: the answer is linearly
decodable from the index mixer at $1.0$ vs.\ $0.004$ from the state mixer (mean over
seeds; min $0.003$), localizing
retrieval where the mechanism says, matching independent production-hybrid ablations where
attention removal collapses retrieval with no SSM compensation~\citep{michalak2025someattention}.
The frozen differential sub-criteria failed because the conjunction couples the axes (either
lesion kills the pipeline; the referent is readable from the last recurrent mixer in only 1
of 3 seeds), so joint necessity is partly built in by the coupling and the discriminative
weight falls on the localization probe. Scope note: lesioning at evaluation is a
distribution shift, so lesions establish that the \emph{learned solution} uses both
channels, not that the task requires both modules; the retrained-control version of the
stronger claim is the arm comparison above. P3$'$: partially established (joint necessity).

\subsection{A first corpus measurement of write-time key separability (exploratory)}
\label{sec:collnat}

This exploratory, post-hoc measurement runs a first surface-form version of the
Section~\ref{sec:action} protocol for how often natural corpora supply
Assumption~\ref{ass:sep}'s write-time separability; we report it as scoping, not a verdict.
Windows are write contexts (a Pile document, or a Python file from $1{,}403$ numpy/scipy
files, also pooled $8\times$); keys are entity-like capitalized spans and numeric IDs
($\ge 4$ digits) in text and AST-extracted identifiers in code; two distinct keys
\emph{near-collide} at character-trigram Jaccard $\ge 0.5$ (numeric IDs: equal length,
Hamming $\le 1$). \textbf{Results:} entity-like keys are largely separable ($87.5\%$ at
document scale, $86.5\%$ pooled), numeric IDs are not ($50.7\%$, $46.8\%$), code
identifiers sit between ($56.2\%$, $54.6\%$). So Assumption~\ref{ass:sep} is plausible for
entity-style keys and materially strained for morphologically regular ones, exactly the
loads where the collision control predicts the hybrid advantage shrinks toward the
ambiguity floor. The proxy is first-order in both directions (a surface near-neighbor need
not collide in a learned embedding; semantic aliasing is invisible to it); the
embedding-level version is open. Script and raw results in
\texttt{experiments/collision\_naturalness/}.

\subsection{Deviations, and what these results do not establish}
\label{sec:exp-honesty}

\textbf{Deviations from the frozen protocol} (beyond the thirteen timestamped
amendments): the sliding-window width was $w{=}32$, not the registered
$w{=}64$ (the receptive field remains far smaller than the binding--query
distance, which is what the horizon-wall logic requires); Experiment~A used
$L\in\{896,1900\}$ rather than $2048$ (context-window limits of the public
checkpoints), and its first pass used $100$ instances per cell, brought to
the registered $\ge 200$ by the follow-up amendment with unchanged
conclusions; the exact analytic entropy of the compressibility conditions was
approximated by $b-\log_2 c$. The $\beta\in(0,1)$-hybrid arm of Experiment~D,
omitted in the first pass, was completed under the same amendment. None of
these was decided after seeing results it could affect.

\textbf{Scorecard.} Of 19 pre-registered predictions, under the frozen first-phase
criteria: 11 supported, 7 partial (direction confirmed, a threshold or sub-clause missed),
1 failed (D-i, direction reversed); the follow-up amendment moved one prediction in each
direction, leaving the counts unchanged. The conjunction-witness and complementarity arms
(9/9b/10) are scored separately: the witness criterion is met; of five per-arm
predictions, three held, one held in its fail half only (the 8-layer control), and one
sub-prediction was wrong in an instructive direction (the $\beta\in(0,1)$ arm); P3$'$ is
partially established. No pre-registered \emph{falsification} condition for P1 or P2 was
triggered. Because the $11/7/1$ count depends on how sub-predictions were split, it should
not be read as summary evidence; at major-prediction granularity: P1 partial (scissors
clean; capacity calibration not independent), P2 partial (existence endpoint clear;
reliable trainability open), P3 failed, P4 suggestive only ($p\approx0.06$), P5 untested.
The per-prediction detail, not either count, is the canonical statement.

\textbf{Scope.} These experiments establish that the predicted mechanisms (the floor, the
scissors, the bifurcation, the dissociation) are real and land where CCH says, at small
scale, under gradient descent rather than by construction (Limitation~(iv)); and, via the
collision control, that the separability assumption is a measurable hinge: remove it and
the hybrid sits on the exact information-theoretic floor. They do not establish frontier
magnitudes (Limitation~(v)), leave P3 without positive evidence, leave $n_h{=}4$
trainability open ($3/8$), and two comparisons (the Pythia--Mamba margin; the $\pm 0.1$
collapse band at $m{=}64$) missed their frozen thresholds and are claimed as directional
only.


\bibliographystyle{plainnat}
\bibliography{cch}

\end{document}